\setlist[itemize]{leftmargin=*}
\setlist[enumerate]{leftmargin=*}
\newif\ifarxiv
\definecolor{darkgreen}{rgb}{0,0.5,0}
\definecolor{darkcyan}{rgb}{0,0.5,0.5}
\definecolor{darkred}{rgb}{0.5,0,0}
\newcites{AR}{Additional References}
\DeclareMathOperator*{\E}{\mathbb{E}}
\newcommand{\smallparagraph}[1]{\smallskip\noindent\textbf{#1}}
\title{The Wasserstein Believer\\
\large Learning Belief Updates for Partially Observable Environments through Reliable Latent Space Models}
\author{%
  Raphael Avalos$^{1}$\thanks{Both authors contributed equally to this research, alphabetic order.} \quad Florent Delgrange$^{1,2*}$ \\ \textbf{Ann Now\'e}$^{1}$ \quad \textbf{Guillermo A. P\'erez}$^{2,3}$ \quad  \textbf{Diederik M. Roijers$^{1,4}$}\\
  $^1$ AI Lab, Vrije Universiteit Brussel (BE) \quad $^2$ University of Antwerp (BE) \\ 
  $^3$ Flanders Make (BE) \quad $^4$ Urban Innovation and R\&D, City of Amsterdam (NL) \\
  \texttt{\{raphael.avalos, florent.delgrange\}@vub.be}
}
\newcommand{\pushright}[1]{\ifmeasuring@#1\else\omit\hfill$\displaystyle#1$\fi\ignorespaces}
\newcommand{\pushleft}[1]{\ifmeasuring@#1\else\omit$\displaystyle#1$\hfill\fi\ignorespaces}
\newtheorem{theorem}{Theorem}[section]
\newtheorem{definition}[theorem]{Definition}
\newtheorem{corollary}[theorem]{Corollary}
\newtheorem{lemma}[theorem]{Lemma}
\newtheorem{property}[theorem]{Property}
\newtheorem{assumption}{Assumption}
\theoremstyle{remark}
\newtheorem{remark}{Remark}
\newtheorem{example}{Example}
\newtheorem{notation}{Notation}
\newenvironment{proofsketch}{%
  \proof}{\endproof}
\newcommand{\tuple}[1]{\ensuremath{\left\langle #1 \right\rangle}}
\newcommand{\indexof}[2]{\ensuremath{#1_{\scriptscriptstyle #2}}}
\newcommand{\N}{\mathbb{N}}
\newcommand{\fun}[1]{\ensuremath{\mathopen{}\mathclose\bgroup\left(#1\aftergroup\egroup\right)}}
\newcommand{\condition}[1]{\ensuremath{\1_{#1}}}
\newcommand{\diracimpulsesymbol}{\ensuremath{\delta}}
\newcommand{\diracimpulse}[2]{\ensuremath{\diracimpulsesymbol_{#2}\fun{#1}}}
\newcommand{\mdp}{\ensuremath{\mathcal{M}}}
\newcommand{\states}{\ensuremath{\mathcal{S}}}
\newcommand{\actions}{\ensuremath{\mathcal{A}}}
\newcommand{\probtransitions}{\ensuremath{\mathbf{P}}} %
\newcommand{\rewards}{\ensuremath{\mathcal{R}}}
\newcommand{\sinit}{\ensuremath{s_{\mathit{I}}}}
\newcommand{\mdptuple}{\langle \states, \actions, \probtransitions, \rewards, \allowbreak \sinit, \discount \rangle}
\newcommand{\state}{\ensuremath{s}}
\newcommand{\action}{\ensuremath{a}}
\newcommand{\reward}{\ensuremath{r}}
\newcommand{\act}[1]{\ensuremath{\mathit{Act}\ifthenelse{\equal{#1}{}}{}{(#1)}}}
\newcommand{\seq}[2]{\ensuremath{#1_{\scriptscriptstyle 0:#2}}}
\newcommand{\trajectorytuple}[3]{\ensuremath{\tuple{#1_{\scriptscriptstyle 0:#3}, #2_{\scriptscriptstyle 0: #3-1}}}}
\newcommand{\policy}{\ensuremath{\pi}}
\newcommand{\policystates}{\ensuremath{Q}}
\newcommand{\policystate}{\ensuremath{q}}
\newcommand{\qinit}{\ensuremath{q_I}}
\newcommand{\mealyaction}[1]{\ensuremath{#1_{\alpha}}}
\newcommand{\mealyupdate}[1]{\ensuremath{#1_{\mu}}}
\newcommand{\mealymachine}[1]{\ensuremath{\tuple{\policystates, \mealyaction{#1}, \mealyupdate{#1}, \policystate_{I}}}}
\newcommand{\valuessymbol}[2]{\ensuremath{{V}_{#1}^{#2}}}
\newcommand{\values}[3]{\ensuremath{\valuessymbol{#1}{#2}\fun{#3}}}
\newcommand{\stationary}[1]{\ensuremath{\xi_{#1}}}
\newcommand{\resetstate}{\ensuremath{\state_{\mathsf{reset}}}}
\newcommand{\sreset}{\ensuremath{\state_{\mathsf{reset}}}}
\newcommand{\pomdp}{\ensuremath{\mathcal{P}}}
\newcommand{\observations}{\ensuremath{\Omega}}
\newcommand{\observationfn}{\ensuremath{\mathcal{O}}}
\newcommand{\observation}{\ensuremath{o}}
\newcommand{\pomdptuple}{\ensuremath{\tuple{\mdp, \observations, \observationfn}}}
\newcommand{\historytuple}[3]{\ensuremath{\tuple{#1_{\scriptscriptstyle 0:#3-1}, #2_{\scriptscriptstyle 1: #3}}}}
\newcommand{\histories}{\ensuremath{\fun{\actions \cdot \observations}^{*}}}
\newcommand{\historydistribution}{\ensuremath{\mathcal{H}}}
\newcommand{\history}{\ensuremath{h}}
\newcommand{\hreset}{\ensuremath{h}_{\textsf{reset}}}
\newcommand{\belief}{\ensuremath{b}}
\newcommand{\beliefs}{\ensuremath{\mathcal{B}}}
\newcommand{\beliefupdate}{\ensuremath{\tau}}
\newcommand{\beliefmdp}{\ensuremath{\mdp_{\scriptscriptstyle \beliefs}}}
\newcommand{\binit}{\ensuremath{\belief_I}}
\newcommand{\beliefprobtransitions}{\ensuremath{\probtransitions_{\scriptscriptstyle \beliefs}}}
\newcommand{\beliefrewards}{\ensuremath{\rewards_{\scriptscriptstyle \beliefs}}}
\newcommand{\beliefmdptuple}{\ensuremath{\langle\beliefs, \actions, \beliefprobtransitions, \beliefrewards, \binit,\allowbreak \discount \rangle}}
\newcommand{\augmentedpomdp}{\ensuremath{\pomdp^{\scriptscriptstyle\uparrow}}}
\newcommand{\augmentedobservationfn}{\ensuremath{\overline{\observationfn}_{\mu}}}
\newcommand{\augmentedobservationfnoriginal}{\ensuremath{\observationfn^{\scriptscriptstyle\uparrow}}}
\newcommand{\pomdpbisim}{\ensuremath{\rightrightarrows^{\scriptscriptstyle \beliefupdate}_{\scriptscriptstyle \beliefencoder}}}
\newcommand{\decodersymbol}{\ensuremath{\psi}}
\newcommand{\decoder}{\ensuremath{\decodersymbol_{\decoderparameter}}}
 \newcommand{\encoderparameter}{\ensuremath{}}
\newcommand{\decoderparameter}{\ensuremath{}}
\newcommand{\discount}{\ensuremath{\gamma}}
\newcommand{\Prob}{\ensuremath{\displaystyle \mathbb{P}}}
\newcommand{\measurableset}{\ensuremath{\mathcal{X}}}
\newcommand{\varmeasurableset}{\ensuremath{\mathcal{Y}}}
\newcommand{\borel}[1]{\ensuremath{\Sigma\fun{#1}}}
\newcommand{\sampledot}{\ensuremath{{\cdotp}}}
\newcommand{\expectedsymbol}[1]{\ensuremath{\mathop{\mathbb{E}}\ifthenelse{\equal{#1}{}}{}{_{#1}}}}
\newcommand{\expected}[2]{\ensuremath{\expectedsymbol{#1} \left[ #2 \right]}}
\newcommand{\divergencesymbol}{\ensuremath{D}}
\newcommand{\dklsymbol}{\ensuremath{\divergencesymbol_{{\mathrm{KL}}}}}
\newcommand{\dkl}[2]{\ensuremath{\dklsymbol\fun{#1 \parallel #2}}}
\newcommand{\dtvsymbol}{\ensuremath{d_{{TV}}}}
\newcommand{\dtv}[2]{\ensuremath{\dtvsymbol\fun{#1, #2}}}
\newcommand{\normal}[3]{\ensuremath{\displaystyle \ifthenelse{\equal{#3}{}}{\mathcal{N}(#1, #2)}{\mathcal{N}(#3\,;\, #1, #2)}}}
\newcommand{\distributionssymbol}{\ensuremath{\Delta}}
\newcommand{\distributions}[1]{\ensuremath{\distributionssymbol\fun{#1}}}
\newcommand{\support}[1]{\ensuremath{\text{supp}\fun{#1}}}
\newcommand{\temperature}{\ensuremath{\lambda}}
\newcommand{\wassersteinsymbol}[1]{\ensuremath{\mathcal{W}}_{#1}}
\newcommand{\wassersteindist}[3]{\ensuremath{\wassersteinsymbol{#1}\left( #2, #3 \right)}}
\newcommand{\distance}{\ensuremath{d}}
\newcommand{\latentdistance}{\ensuremath{\bar{\distance}}}
\newcommand{\coupling}{\ensuremath{\lambda}}
\newcommand{\Lipschf}[1]{\ensuremath{\mathcal{F}_{#1}}}
\newcommand{\overbar}[1]{\mkern 1.5mu\overline{\mkern-1.5mu#1\mkern-1.5mu}\mkern 1.5mu}
\newcommand{\overbarit}[1]{\,\overline{\!{#1}}}
\newcommand{\embed}{\ensuremath{\phi}}
\newcommand{\latentmdp}{\ensuremath{\overbarit{\mdp}}}
\newcommand{\latentprobtransitions}{\ensuremath{\overbar{\probtransitions}}}
\newcommand{\latentstates}{\ensuremath{\overbarit{\mathcal{\states}}}}
\newcommand{\latentrewards}{\ensuremath{\overbarit{\rewards}}}
\newcommand{\latentmdptuple}{\ensuremath{\langle{\latentstates, \actions, \latentprobtransitions, \latentrewards, \zinit, \discount \rangle}}}
\newcommand{\latentobservationfn}{\ensuremath{\overbar{\observationfn}}}
\newcommand{\latentobservation}{\latentobservationfn}
\newcommand{\latentstate}{\ensuremath{\overbarit{\state}}}
\newcommand{\zinit}{\ensuremath{\latentstate_I}}
\newcommand{\latentaction}{\ensuremath{\overbarit{\action}}}
\newcommand{\latentvaluessymbol}[2]{\overbar{\ensuremath{{V}}}_{#1}^{#2}}
\newcommand{\latentvalues}[3]{\ensuremath{\latentvaluessymbol{#1}{#2}\fun{#3}}}
\newcommand{\latentpolicy}{\ensuremath{\overbar{\policy}}}
\newcommand{\latentbeliefupdate}{\ensuremath{\overbar{\tau}}}
\newcommand{\latentbelief}{\ensuremath{\overbar{\belief}}}
\newcommand{\latentbeliefs}{\ensuremath{\overbar{\beliefs}}}
\newcommand{\latentpomdp}{\ensuremath{\overbar{\pomdp}}}
\newcommand{\beliefencoder}{\ensuremath{\varphi_{\encoderparameter}}}
\newcommand{\localtransitionloss}[1]{L_{\probtransitions}}
\newcommand{\localrewardloss}[1]{L_{\rewards}}
\newcommand{\observationloss}[1]{\ensuremath{L_{\observationfn}}}
\newcommand{\beliefloss}[1]{\ensuremath{L_{\latentbeliefupdate}}}
\newcommand{\onpolicyrewardloss}[1]{\ensuremath{L_{\latentrewards}^{\varphi}}}
\newcommand{\onpolicytransitionloss}[1]{\ensuremath{L_{\latentprobtransitions}^{\varphi}}}
\newcommand{\KV}{\ensuremath{K_{\latentvaluessymbol{}{}}}}
\newcommand{\KR}[1]{\ensuremath{\ifthenelse{\equal{#1}{}}{K_{\latentrewards}}{K_{\latentrewards}^{#1}}}}
\newcommand{\KP}[1]{\ensuremath{\ifthenelse{\equal{#1}{}}{K_{\latentprobtransitions}}{K_{\latentprobtransitions}^{#1}}}}
\newcommand{\Rmax}{\ensuremath{\latentrewards^{\star}}}
\newcommand{\norm}[1]{\ensuremath{\left\| #1 \right\|}}
\newcommand{\gradient}{\ensuremath{\nabla}}
\newcommand{\originaltolatentstationary}[1]{{\latentprobtransitions_{\embed_{\encoderparameter}\stationary{\ifthenelse{\equal{#1}{}}{\policy}{#1}}}}}
\newcommand{\replaybuffer}{\ensuremath{\mathcal{D}}}
\def\1{\bm{1}}
\DeclareMathAlphabet{\mathsfit}{\encodingdefault}{\sfdefault}{m}{sl}
\SetMathAlphabet{\mathsfit}{bold}{\encodingdefault}{\sfdefault}{bx}{n}
\newcommand{\R}{\mathbb{R}}
\newcommand{\abs}[1]{\ensuremath{\left| #1 \right|}}
\begin{document}
\maketitle

\begin{abstract}
    Partially Observable Markov Decision Processes (POMDPs) are used to model environments where the full state cannot be perceived by an agent. As such the agent needs to reason taking into account the past observations and actions. However, simply remembering the full history is generally intractable due to the exponential growth in the history space. Maintaining a probability distribution that models the belief over what the true state is can be used as a sufficient statistic of the history, but its computation requires access to the model of the environment and is often intractable.
    While SOTA algorithms use Recurrent Neural Networks to compress the observation-action history aiming to learn a sufficient statistic, they lack guarantees of success and can lead to sub-optimal policies. To overcome this, we propose the Wasserstein Belief Updater, an RL algorithm that learns a latent model of the POMDP and an approximation of the belief update. Our approach comes with theoretical guarantees on the quality of our approximation ensuring that our outputted beliefs allow for learning the optimal value function.
   \end{abstract}

\section{Introduction}

\emph{Partially Observable Markov Decision Processes} (POMDPs) %
define a powerful framework for modeling decision-making in uncertain environments where the state is not fully observable. These problems are common in many real-world applications, such as robotics \citep{LauriPOMDPRobo}, and recommendation systems \citep{wu2021partially}.
In contrast to \emph{Markov Decision Processes} (MDPs), in a POMDP the agent perceives an imperfect observation of the state that does not suffice as conditioning signal for an optimal policy. As such, optimal policies must take the entire interaction history into account. 
As the space of possible histories scales exponentially in the length of the episode, using histories to condition policies is generally intractable. 
An alternative 
is the notion of \emph{belief}, which is defined as a probability distribution over states based on the agent history.
Beliefs are a sufficient statistic of the history \citep{kaelbling1998planning} but %
the computation of their closed-form expression require the access to a model of the environment and is in general intractable, as it requires to integrate over the full state space, which is thus only applicable to small problems.

To overcome those challenges, SOTA algorithms focus on compressing the history into a fixed-size vector  with the help of \emph{Recurrent Neural Networks} (RNNs) \citep{Hausknecht2015DeepMDPs}.
However, compressing the history using RNNs can lead to information loss, resulting in suboptimal policies. To improve the likelihood of obtaining a sufficient statistic, RNNs can be combined with regularization techniques, including generative models \citep{chen2022flow, Hafner2019DreamImagination, hafner2021mastering}, particle filtering \citep{Igl2018DeepPOMDPs, ma2020particle}, and predicting distant observations \citep{Gregor2018TemporalAuto-Encoder, Gregor2019ShapingRL}.
It is worth noting that \emph{none of these techniques guarantee that the representation of histories induced by RNNs is suitable for optimizing the return.}
Additionally, a limitation of many algorithms is their assumption that beliefs are simple distributions (e.g., Gaussian distributions) which limits their applicability  \citep{Gregor2018TemporalAuto-Encoder, lee2020stochastic, hafner2021mastering}.

In this paper, we propose \emph{Wasserstein Belief Updater} (WBU), a model-based reinforcement learning (RL) algorithm for POMDPs that allows learning the belief space over the unobservable states.
Specifically, WBU learns an approximation of the belief update rule through a 
latent space model whose behaviors (expressed as expected returns) are close to those of the original environment.
Furthermore, we show that WBU is guaranteed to induce a suitable representation of the history to optimize the return.
WBU consists of three components that are learned in a round-robin fashion: the model, the belief learner, and the policy (Fig.~\ref{fig:big-picture}). 
Harnessing only histories to learn a model whose dynamics can be provably linked to the original unobservable environment poses a considerable challenge.
Therefore, in the same spirit as the \emph{Centralized Training with Decentralized Execution} paradigm in \emph{multi-agent} RL (MARL) \citep{Oliehoek2008OptimalPOMDPs, Avalos2022LocalLearning}, where leveraging additional information such as the true state of the environment is a common practice,
\emph{we assume that the POMDP states can be accessed during training}.
While this might seem restrictive at first sight, this assumption is typically met in simulation-based training and can be applied in real-world settings such as robotics, where extra sensors can be used during training in a laboratory setting.

Our core contribution is the \emph{development of a sound framework equipped with theoretical guarantees in the context of RL within partial observability}.
While SOTA algorithms primarily concentrate on enhancing the overall return --- potentially resulting in substantial performance gains --- we contend that performance is not the exclusive goal and that possessing guarantees is equally important, as the balance between these two aspects varies based on the specific application.
By tackling POMDPs with a formal approach, we offer theoretical guarantees that other methods cannot provide: we ensure that \emph{our latent model is able to replicate the dynamics of the original, partially observable environment}, which further \emph{yields a belief representation suitable for learning the value function}.

We learn the latent model of the POMDP via a \emph{Wasserstein auto-encoded MDP} (WAE-MDP, \citealt{aaai-2023-delgrange-nowe-perez}), which embeds bisimulation metrics --- intuitively leading to our guarantees.
In parallel, we maintain a belief distribution over the latent state space via a \emph{belief update network}: we minimize its \emph{Wasserstein distance} %
to the exact belief update rule, through a tractable variational proxy.
To allow for complex belief distributions, we use \emph{normalizing flows} \citep{Kobyzev_2021}. 
In contrast to SOTA algorithms, the beliefs are only optimized towards accurately replicating its update rule. While we call recursively the belief network to maintain the belief distribution, we do not back-propagate through time and thus implement it as a simple feed forward network.
The policy is learned on the latent belief space using a vector integrating the parameters of the belief distribution.
Our experimental results are promising and show the ability of our algorithm to learn to encode the history into a representation 
useful to learn a policy, 
\emph{without using RNNs}.

\smallparagraph{Other related work.}~%
DVRL \citep{Igl2018DeepPOMDPs} extends A2C \citep{Mnih2016AsynchronousLearning} combined with RNNs ({R-A2C}) with auxiliary losses aiming to learn beliefs via a variational autoencoder and particle filtering, but it lacks guarantees.
DVRL further assumes independent normal distributions for beliefs, limiting its applicability.
FORBES \citep{chen2022flow} use normalizing flows but learn  policies conditioned on latent states, which is suboptimal as the state distribution is approximated with a single sample. 
Some works focus on specific POMDP types, like compact image representations (e.g., visual motor tasks, e.g., \citealt{lee2020stochastic}) or states masked with Gaussian noise \citep{wang2021deep}.
While accessing states is common in MARL, it is less common in single-agent but has been explored in kernel-POMDPs \citep{Nishiyama2012HilbertPOMDPs}, using states for RKHS-based models. 
Leveraging additional information available during training  (not necessarily the states) has also been explored by \citet{ORBi-09ced41c-216e-4981-acec-fee81f7d0de5}, but RNNs remain crucial while no abstraction nor representation quality guarantee are provided.
Finally, other works \citep{DBLP:conf/icml/GeladaKBNB19,aaai-2023-delgrange-nowe-perez} study similar value difference bounds to ours and connect them to bisimulation theory \citep{DBLP:conf/popl/LarsenS89,DBLP:journals/ai/GivanDG03}, but in fully observable environments. 

\begin{figure}
    \centering
    \includegraphics[width=.85\linewidth]{img/big_picture_vertical_c.pdf}
    \caption{
    \emph{WBU  framework}.
    The {\color{darkgreen}WAE-MDP} is presented in Sect.~\ref{sec:learning-dynamics}, and {\color{blue}WBU} in Sect.~\ref{sec:latent-belief-learner}.
    Learning the different components is done in a round-robin fashion.
    The {\color{darkgreen}WAE-MDP} learns from data collected by the RL agent and stored in a Replay Buffer. {\color{blue}WBU} uses the transition function {\color{darkcyan}$\latentprobtransitions_{\decoderparameter}$} and observation decoder {\color{darkred}$\latentobservationfn_{\decoderparameter}$} of the {\color{darkgreen}WAE-MDP} to learn to approximate the belief update rule. The agent learns a policy conditioned on the resulting \emph{sub-belief} $\beta_t$ (i.e., the parameters of the latent belief $\latentbelief_t$).%
    }
    \label{fig:big-picture}
\end{figure}

\section{Background}
\subsection{Probability Distributions and Discrepancy Measures}
We write $\borel{\measurableset}$ for the set of all Borel subsets of a complete, separable space $\measurableset$,
$\distributions{\measurableset}$ for the set of measures on $\measurableset$, and
$\diracimpulsesymbol_a \in \distributions{\measurableset}$ for the \emph{Dirac measure} with impulse $a \in \measurableset$.
Let $P, Q \in \distributions{\measurableset}$, the divergence between $P$ and $Q$ can be measured according to the following discrepancies:
\begin{itemize}
    \item the solution of the \emph{optimal transport} problem (OT), defined as $\wassersteinsymbol{c}\,(P, Q) = \inf_{\coupling%
    } \expectedsymbol{x, y \sim \coupling}c(x, y),$ which
    is the \emph{minimum cost of changing $P$ into $Q$} \citep{Villani2009},
    where $c \colon \measurableset \times \measurableset \to \mathopen[ 0, \infty \mathclose)$
    is a cost function and %
    the infimum is taken over the set of all \emph{couplings} of $P$ and $Q$.
    When $c$ is equal to a distance metric $\distance$ over $\measurableset$,  $\wassersteinsymbol{\distance}$ is the \emph{Wasserstein distance} between the two distributions.
    \item the \emph{Kullback-Leibler} (KL) divergence, defined as
    $\dklsymbol\fun{P, Q} = \expectedsymbol{x \sim P}\left[\log\fun{\nicefrac{P\fun{x}}{ Q\fun{x}}}\right].$
    \item the \emph{total variation distance} (TV), defined as
    $\dtv{P}{Q} = \sup_{A \in \borel{\measurableset}} |P\fun{A} - Q\fun{A}|$.
    If $\measurableset$
    is equipped with 
    the discrete metric $\condition{\neq}$, TV coincides with the Wasserstein metric.
\end{itemize}

\subsection{Decision Making under Uncertainty}

\smallparagraph{Markov Decision Processes}~
(MDPs) are tuples $\mdp = \mdptuple$ where $\states$ is a set of \emph{states}; $\actions$, a set of \emph{actions}; $\probtransitions \colon \states \times \actions \to \distributions{\states}$, a \emph{probability transition function} that maps the current state and action to a \emph{distribution} over the next states; $\rewards \colon \states \times \actions \to \R$, a \emph{reward function}; $\sinit \in \states$, the \emph{initial state}; and 
$\discount \in \mathopen[0,1\mathclose)$ a discount factor.
We refer to MDPs with continuous state or action spaces as \emph{continuous MDPs}. 
In that case, we assume $\states$ and $\actions$ are complete separable metric spaces equipped with a Borel $\sigma$-algebra.
An agent interacting in $\mdp$ produces \emph{trajectories}, i.e.,
sequences of states and actions $\trajectorytuple{\state}{\action}{T}$ %
where $\state_0 = \sinit$ and $\state_{t + 1} \sim \probtransitions\fun{\sampledot \mid \state_t, \action_t}$ for $t < T$.

\smallparagraph{Policies and probability measure.}~A \emph{stationary policy} $\policy \colon \states \to \distributions{\actions}$ 
prescribes which action to choose at each step of the interaction.
A policy $\policy$ and $\mdp$ %
induce a unique probability measure $\Prob^\mdp_\policy$ on the Borel $\sigma$-algebra over (measurable) infinite trajectories~\citep{DBLP:books/wi/Puterman94}.
The typical goal of an RL agent is to learn a policy that maximizes the \emph{expected return}, given by $\expectedsymbol{\policy}^{\mdp}\left[{\sum_{t=0}^{\infty}\discount^{t} \cdot \rewards\fun{\state_t, \action_t}}\right]$, by interacting with $\mdp$.
We omit the superscript when the context is clear.

\smallparagraph{Partially Observable MDPs}~(POMDPs) %
are tuples $\pomdp = \pomdptuple$ where $\mdp$ is an MDP with state space $\states$ and action space $\actions$; $\observations$ is a set of \emph{observations}; and $\observationfn \colon \states \times \actions \to \distributions{\observations}$ is an \emph{observation function} that defines the distribution of observations that may occur when the MDP $\mdp$ transitions to a state upon the execution of a particular action.
An agent in $\pomdp$ actually interacts in $\mdp$, but \emph{without directly observing the states} of $\mdp$:
instead, the agent perceives observations, which yields \emph{histories}, i.e., sequences of actions and observations $ \historytuple{\action}{\observation}{T}$
that can be associated to an (unobservable) trajectory $\trajectorytuple{\state}{\action}{T}$  in $\mdp$, where $o_{t + 1} \sim \observationfn\fun{\sampledot \mid \state_{t + 1}, \action_t}$ for all $t < T$.

\smallparagraph{Beliefs.}~%
Unlike in MDPs, stationary policies that are based solely on the current observation of $\pomdp$ \emph{do not induce any probability space} on trajectories of $\mdp$.
Intuitively, due to the partial observability of the current state $\state_t \in \states$ at each interaction step $t \geq 0$,  
the agent must take into account full histories in order to infer the distribution of rewards accumulated up to the current time step $t$, and make an informed decision on its next action $\action_t \in \actions$.
Alternatively, the agent can maintain a \emph{belief} $\belief_t \in \distributions{\states} = \beliefs$ over the current state of $\mdp$ \citep{ASTROM1965174}.
Given the next observation $\observation_{t +1}$, the %
next belief $\belief_{t + 1}$ is computed according to the \emph{belief update function} $\beliefupdate \colon \beliefs \times \actions \times \observations \to \beliefs$, where $\beliefupdate\fun{\belief_{t}, \action_{t}, \observation_{t + 1}} = \belief_{t + 1}$ iff 
 the belief over any next state $\state_{t + 1} \in \states$ has for density
\begin{align}
\label{eq:belief_update}
    \belief_{t+1}\fun{\state_{t+1}} = 
    \frac{\expectedsymbol{\state_{t} \sim \belief_t}{ \probtransitions\fun{\state_{t+1} \mid \state_t, \action_t}} \cdot \observationfn\fun{\observation_{t+1} \mid \state_{t+1}, \action_t} }{\E_{\state_{t} \sim \belief_t} \E_{\state' \sim \probtransitions(\cdot \mid \state_t, \action_t)} \observationfn(\observation_{t+1} \mid \state', \action_t) }.
\end{align}

Each belief $\belief_{t+1}$ constructed this way is a \emph{sufficient statistic} for the history $\tuple{\action_{\scriptscriptstyle{0}: t}, \observation_{\scriptscriptstyle{1}: t + 1}}$ to optimize the return \citep{kaelbling1998planning}. %
We write
$\beliefupdate^*\fun{\indexof{\action}{0:t}, \indexof{\observation}{1:t+1}} = \beliefupdate\fun{\sampledot\,, \action_t, \observation_{t + 1}} \circ \cdots
\circ \beliefupdate\fun{\diracimpulsesymbol_{\sinit}, \action_0, \observation_1} = \belief_{t + 1}$
for the recursive application of $\beliefupdate$ along the history.
The belief update rule derived from $\beliefupdate$ allows to formulate $\pomdp$ as a continuous %
\emph{belief MDP} $\beliefmdp = \beliefmdptuple$, where 
$
\beliefprobtransitions\fun{\belief' \mid \belief, \action} = \expectedsymbol{\state \sim \belief} \expectedsymbol{\state' \sim \probtransitions\fun{\sampledot \mid \state, \action}} \expectedsymbol{\observation' \sim \observationfn\fun{\sampledot \mid \state', \action}} \diracimpulse{\belief'}{\beliefupdate\fun{\belief, \action, \observation'}}
$
$\beliefrewards\fun{\belief, \action} = \expectedsymbol{\state \sim \belief} \rewards\fun{\state, \action}$; and $\binit = \diracimpulsesymbol_{\sinit}$.
As for all MDPs, $\beliefmdp$ %
and any stationary policy for $\beliefmdp$ (thus conditioned on beliefs) induce a well-defined probability space over trajectories of $\beliefmdp$, which allows optimizing the expected return in $\pomdp$. %

\subsection{Latent Space Modeling}\label{sec:latent-space-modeling}

\smallparagraph{Latent MDPs.}~%
Given the original (continuous or very large, possibly unknown) environment $\mdp$, a \emph{latent space model} is another (tractable, explicit) MDP $\latentmdp = \latentmdptuple$ with state space linked to the original one via a \emph{state embedding function}:  $\embed\colon \states \to \latentstates$.

\smallparagraph{Wasserstein Auto-encoded MDPs}~%
(WAE-MDPs, \citealt{delgrange2023wasserstein}) are latent space models that are trained based on the OT %
from trajectories resulting from the execution of the RL agent policy in the real environment $\mdp$, to that reconstructed from the latent model $\latentmdp_{\decoderparameter}$.
The optimization process relies on a temperature
 $\temperature \in \mathopen(0, 1 \mathclose)$ that controls the continuity of the latent space learned, the zero-temperature limit corresponding to a discrete latent state space (see Appendix~\ref{appendix:temperature} for a discussion).
This procedure guarantees $\latentmdp_{\decoderparameter}$ to be probably approximately \emph{bisimilarly close} \citep{DBLP:conf/popl/LarsenS89,DBLP:journals/ai/GivanDG03,aaai-2023-delgrange-nowe-perez} to $\mdp$ as $\temperature \to 0$: 
in a nutshell, \emph{bisimulation metrics} imply the closeness of the two models in terms of probability measures and expected return \citep{DBLP:journals/tcs/DesharnaisGJP04,DBLP:journals/siamcomp/FernsPP11}.
Specifically, 
a WAE-MDP learns the following components:
\begin{align}
    &\text{a \emph{state embedding function}} && \embed_{\encoderparameter} \colon \states \to \latentstates && 
    \text{a \emph{latent transition function}} && \latentprobtransitions_{\decoderparameter} \colon \latentstates \times \actions \to \distributionssymbol(\latentstates)  \notag \\
    &\text{a \emph{latent reward function}} && \latentrewards_{\decoderparameter} \colon \latentstates \times \actions \to \R &&
    \text{a \emph{state decoder}} &&\decoder \colon \latentstates \to \states. \label{eq:wae-mdp-components}
\end{align}

\section{Learning the Dynamics}\label{sec:learning-dynamics}
The agent is assumed to operate within a POMDP.
In an RL setting, the former have no explicit access to the environment dynamics: instead, it reinforces its behaviors through interactions and experiences without directly accessing the  transition, reward, and observation functions of the environment.
To provide the aforementioned guarantees, we henceforth adhere the following assumption.
\begin{assumption}[Access to the state during training]\label{assumption:access-state}
    In addition to the observation, the agent is able to observe to the true state of the environment, {but only during the training phase}.
\end{assumption}
\begin{remark}
Seemingly restrictive, Assumption~\ref{assumption:access-state} can actually be met in a broad range of training scenarios, in particular those relying on simulators, where one could merely consider the RAM as the simulator state.
Otherwise, additional sensors with higher fidelity could be considered to obtain the state.
There are several scenarios where the state is accessible during training but not during execution, e.g., where the state is too large for real-time processing, as in low-power hardware, and in \textsc{SimToReal} frameworks, as when noise is injected to learn robust real-world policies \citep{TomStaessensPhD}.
Other applicable scenarios are model-based design or model-predictive control, where a model is accessible during training, and situations where accessing the state is costly \citep{DBLP:conf/atva/BulychevCDLRR12}, e.g., in embedded systems, or when shipping a product with sensors of low reliability to reduce the production cost.
\end{remark}

Concretely, when the RL agent interacts in a POMDP $\pomdp = \pomdptuple$ with underlying MDP $\mdp = \mdptuple$, \emph{we leverage this access to allow the agent to learn the dynamics of the environment}, i.e., those of $\mdp$, as well as those related to the observation function $\observationfn$.
To do so, we  learn an internal, explicit representation of the experiences gathered, through a latent space model.
\emph{We then use this model as a teacher for the agent to make it learn how to perform its belief updates}.
Hence, acquiring an accurate environment model is crucial to learn a reliable belief update function.
In Sect.~\ref{sec:guarantees}, we further demonstrate that the resulting model is guaranteed to closely replicate the original environment behaviors.
The trick we use to learn such a model is to reason on an equivalent POMDP, where the underlying MDP is refined to encode all the crucial dynamics.

\subsection{The Latent POMDP Encoding}
\label{sec:refinement-model}

We enable learning the dynamics of $\pomdp$ via a WAE-MDP by considering the %
POMDP $\augmentedpomdp = \tuple{\mdp_{\observations}, \observations, \augmentedobservationfnoriginal}$, where
    (i) the state space of the underlying MDP is refined to encode the observations: $\mdp_{\observations} = \langle \states_{\observations}, \allowbreak \actions, \probtransitions_{\observations}, \rewards_{\observations} \allowbreak, \tuple{\sinit, \observation_I}, \allowbreak \discount\rangle$
with $\states_{\observations} = \states \times \observations$,
$\probtransitions_{\observations}\fun{\state', \observation' \mid \state, \observation, \action} = \probtransitions\fun{\state' \mid \state, \action} \cdot \observationfn\fun{\observation' \mid \state', \action}$,
$\rewards_{\observations}\fun{\tuple{\state, \observation}, \action} = \rewards\fun{\state, \action}$, and $\observation_I$ is an observation from $\observations$ linked to the initial state $\sinit$;
    (ii) the observation function $\augmentedobservationfnoriginal \colon \states_\observations \to \observations$ is the \emph{deterministic} projection of the refined state on the observation space, with $\augmentedobservationfnoriginal\fun{\tuple{\state, \observation}} = \observation$.
The POMDPs $\pomdp$ and $\augmentedpomdp$ are equivalent \citep{DBLP:journals/ai/ChatterjeeCGK16}: $\augmentedpomdp$ captures the stochasticity of $\observationfn$ in its transition function through the refinement of the state space, further yielding a deterministic observation function, only dependent on refined states.

Henceforth, the goal is to learn a latent space model $\latentmdp_{\decoderparameter} = \langle \latentstates, \actions, \latentprobtransitions_{\decoderparameter}, \latentrewards_{\decoderparameter}, \zinit, \discount \rangle$ linked to $\mdp_{\observations}$ via the embedding $\embed_{\encoderparameter} \colon \states_{\observations} \to \latentstates$, and we achieve this via the WAE-MDP framework.
Not only does the latter allow learning the observation dynamics through $\latentprobtransitions_{\decoderparameter}$, but it also enables to learn the deterministic observation function 
$\augmentedobservationfnoriginal$
through the use of the state decoder $\decoder$, by decomposing the latter in two networks $\decoder^{\states} \colon \latentstates \to \states$ and $\augmentedobservationfn \colon \latentstates \to \observations$, which yield $\decoder\fun{\latentstate} = \langle\decoder^{\states}(\latentstate), \augmentedobservationfn\fun{\latentstate}\rangle$.
This way, the WAE-MDP learns
all the components of $\augmentedpomdp$, the latter being equivalent to $\pomdp$. 
With this model, we construct a \emph{latent POMDP} $\latentpomdp_{\decoderparameter} = \langle\latentmdp_{\decoderparameter}, \Omega, \latentobservationfn_{\decoderparameter}\rangle$, where the observation function outputs a normal distribution centered in $\augmentedobservationfn$: $\latentobservationfn_{\decoderparameter}\fun{\sampledot \mid \latentstate} = \normal{\augmentedobservationfn\fun{\latentstate}}{\sigma^2}{}$.
Note that the deterministic function is retrieved as the variance approaches zero. 
However, it is worth mentioning that the smoothness of $\latentobservation_{\decoderparameter}$ is favorable for learning belief distributions, in contrast to Dirac measures (see Eq.~\ref{eq:latent-belief-update} and Rermark~\ref{rmk:variance} below).
As with any POMDP, the belief update function $\latentbeliefupdate$ of $\latentpomdp_{\decoderparameter}$ allows to reason on the belief space to optimize the expected return.
The belief update procedure is illustrated in Appendix~\ref{appendix:belief-update}.
Formally,
assuming the latent belief at time step $t \geq 0$ is $\latentbelief_t \in \Delta({\latentstates}) = \latentbeliefs$,
$\action_t$ is executed, and then $\observation_{t + 1}$ observed, $\latentbelief_t$ is updated according to $\latentbeliefupdate({\latentbelief_t, \action_t, \observation_{t + 1}}) = \latentbelief_{t + 1}$ iff, for any (unobservable) next state $\latentstate_{t + 1} \in \latentstates$,
\begin{equation}
    \latentbelief_{t + 1}\fun{\latentstate_{t + 1}} = \frac{\expectedsymbol{\latentstate_t \sim \latentbelief_t} \latentprobtransitions_{\decoderparameter}\fun{\latentstate_{t + 1} \mid \latentstate_t, \action_t} \cdot \latentobservationfn_{\decoderparameter}\fun{\observation_{t + 1}\mid\latentstate_{t + 1}}}{\E_{\latentstate_{t} \sim \latentbelief_t} \E_{\latentstate' \sim \latentprobtransitions_{\decoderparameter}(\cdot \mid \latentstate_t, \latentaction_t)} \latentobservation_{\decoderparameter}\fun{\observation_{t + 1} \mid \latentstate'}}. \label{eq:latent-belief-update}
\end{equation}
\smallparagraph{Latent policies.}~%
Given \emph{any} history $\history \in \histories$, running a latent policy 
$\latentpolicy \colon \latentbeliefs \to \distributions{\actions}$
in $\pomdp$ is possible by converting $\history$
into a latent belief $\latentbeliefupdate^*\fun{\history} = \latentbelief$ and executing the action prescribed by $\latentpolicy({\sampledot \mid \latentbelief})$.
Training $\latentmdp_{\decoderparameter}$ grants access to the dynamics required to update the belief through its closed form (Eq.~\ref{eq:latent-belief-update}).
However, integrating over the full latent space remains computationally intractable. 
\begin{center}
\fbox{
\begin{minipage}{0.95\linewidth}
{As a solution, we propose to leverage the access to the dynamics of $\latentmdp_{\decoderparameter}$ to learn a \emph{latent belief encoder} $\beliefencoder \colon \latentbeliefs \times \actions \times \latentstates \to \latentbeliefs$ that approximates the belief update function} by minimizing
    $
    D\fun{\latentbeliefupdate^*({\history}), \beliefencoder^*({\history})}
    $
for \emph{some} discrepancy $D$ and $\history \in \histories$ drawn from \emph{some} distribution.
The belief encoder $\beliefencoder$ thus enables to learn a policy $\latentpolicy$ conditioned on latent beliefs to optimize the return in $\pomdp$: \emph{given the current history $\history$, the next action to play is given by $\action \sim \latentpolicy\fun{\sampledot \mid \beliefencoder^*\fun{\history}}$.}
\end{minipage}
}
\end{center}

Two main questions arise:
\emph{``Does the latent POMDP induced by our WAE-MDP encoding yields a model whose behaviors are close to $\pomdp$?"} and \emph{``Is the history representation induced by $\beliefencoder$ suitable to optimize the expected return in $\pomdp$?"}.
Clearly, the obtained guarantees depend on the history distribution and chosen discrepancy. The following section provides a detailed theoretical analysis of the required distribution and losses to achieve these learning guarantees.

\subsection{Losses and Theoretical Guarantees} \label{sec:guarantees}

To yield the guarantees, we specifically target the \emph{episodic RL process} setting for drawing histories.
\begin{assumption}[Episodic RL process]\label{assumption:episodic}
    The environment $\pomdp$ embeds a special \emph{reset state} so that (i) under any policy, the environment is almost surely eventually reset;
(ii) when reset, the environment transitions to the initial state;
and (iii) {the reset state is observable}.
\end{assumption}

\begin{lemma}\label{lem:stationary-histories}
There is a well defined probability distribution $\historydistribution_{\latentpolicy} \in \distributions{\histories}$ over histories likely to be perceived at the limit by the agent when it %
executes $\latentpolicy$ in $\pomdp$
\emph{(proof in Appendix~\ref{appendix:stationary-histories})}.
\end{lemma}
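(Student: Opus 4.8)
The plan is to exhibit $\historydistribution_{\latentpolicy}$ as the stationary distribution of the regenerative process $\fun{\history_t}_{t \geq 0}$, where $\history_t \in \histories$ denotes the action--observation history gathered by the agent since the beginning of the current episode. The first step is to check that, when the agent executes $\latentpolicy$ (i.e., plays $\action_t \sim \latentpolicy\fun{\sampledot \mid \beliefencoder^*\fun{\history_t}}$), this process is a time-homogeneous Markov chain on the Borel space $\histories = \bigsqcup_{n \geq 0} \fun{\actions \times \observations}^n$. The key observation is that conditioning on a realized history $\history_t$ already fixes the values of all past, history-measurable action choices, so the conditional law of the hidden state $\state_t$ given $\history_t$ is exactly the true belief $\beliefupdate^*\fun{\history_t}$ --- a deterministic, measurable function of $\history_t$ that depends neither on $\latentpolicy$ nor on $\beliefencoder$. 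Hence, given $\history_t$, the pair $\fun{\action_t, \observation_{t+1}}$ is drawn from a fixed Markov kernel obtained by composing $\latentpolicy\fun{\sampledot \mid \beliefencoder^*\fun{\history_t}}$, $\beliefupdate^*\fun{\history_t}$, $\probtransitions$ and $\observationfn$; appending this pair to $\history_t$ --- and collapsing the history back to the empty word whenever $\observation_{t+1} = \observation^{\star}$, which the agent detects by Assumption~\ref{assumption:episodic}(iii) --- defines the transition kernel of the chain.

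The second step exploits episodicity to produce a regeneration structure. By Assumption~\ref{assumption:episodic}(i) the environment is reset almost surely, every reset is signalled by the observable $\observation^{\star}$ (Assumption~\ref{assumption:episodic}(iii)) and sends the state to $\sinit$ (Assumption~\ref{assumption:episodic}(ii)); therefore the interaction decomposes almost surely into a concatenation of i.i.d.\ episodes $E_1, E_2, \dots$, each an almost-surely finite history terminating in $\observation^{\star}$ and distributed as the run of $\latentpolicy$ through $\beliefencoder$ started from $\sinit$. Consequently $\epsilon$ is a recurrent atom of the chain whose successive return times are the i.i.d.\ episode lengths $L_1, L_2, \dots$, so $\fun{\history_t}_{t \geq 0}$ is regenerative. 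Provided $\E\left[L_1\right] < \infty$, the renewal--reward theorem for regenerative processes then yields that the empirical occupation measures $\frac{1}{n} \sum_{t = 0}^{n - 1} \diracimpulsesymbol_{\history_t}$ converge almost surely to the probability measure
\[
\historydistribution_{\latentpolicy}\fun{A} = \frac{1}{\E\left[L_1\right]} \, \E\left[ \sum_{t = 0}^{L_1 - 1} \condition{\history_t \in A} \right], \qquad A \in \borel{\histories},
\]
which is the unique invariant distribution of the chain; this is precisely the distribution of histories perceived by the agent in the long run, i.e., the claimed $\historydistribution_{\latentpolicy}$.

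The main obstacle is the integrability requirement $\E\left[L_1\right] < \infty$: Assumption~\ref{assumption:episodic}(i) only ensures that episodes terminate almost surely, whereas a finite \emph{expected} episode length is exactly what makes the invariant measure above finite, hence normalizable into a genuine element of $\distributions{\histories}$. I would close this gap either by strengthening Assumption~\ref{assumption:episodic} so that the expected episode length is uniformly bounded over all policies, or by deriving such a bound from regularity/compactness of $\pomdp$ together with the strictly positive reset probability of Assumption~\ref{assumption:episodic}(ii) (e.g., via a Foster--Lyapunov argument). A secondary, routine point is to verify that $\history \mapsto \beliefupdate^*\fun{\history}$ and the induced transition kernel are measurable with respect to the natural Borel $\sigma$-algebra on $\histories$, so that every object above is well defined.
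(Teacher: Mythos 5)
Your construction is, at its core, the same as the paper's: you turn the history process into a time-homogeneous Markov chain on $\histories$ (the paper calls this the \emph{history unfolding} MDP $\indexof{\mdp}{\historydistribution}$), using exactly the same key observation that the conditional law of the hidden state given the realized history is the true belief $\beliefupdate^{*}\fun{\history}$, so that the transition kernel is obtained by composing $\latentpolicy\fun{\sampledot \mid \beliefencoder^{*}\fun{\history}}$, $\beliefupdate^{*}$, $\probtransitions$ and $\observationfn$, with the observable reset $\observation^{\star}$ collapsing the chain back to the initial atom. Where you diverge is in the last step: the paper additionally proves a stochastic bisimulation between this unfolding and the belief MDP under $\latentpolicy$ (which it needs later for the value-difference bounds, not for this lemma per se) and then concludes by citing the result that every episodic process is ergodic, hence admits a unique stationary distribution; you instead run an explicit regenerative-process/renewal--reward argument with the empty history as a recurrent atom. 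Your route is more self-contained and, importantly, it surfaces the one genuine subtlety: almost-sure termination of episodes (Assumption~\ref{assumption:episodic}(i)) does not by itself give $\E\left[L_1\right] < \infty$, which is what positive recurrence --- and hence normalizability of the invariant measure --- actually requires. The paper does not address this either; it is implicitly delegated to the definition of ``episodic'' in the cited ergodicity result. So your proposal is correct modulo the integrability condition you explicitly flag, and your suggested fixes (strengthening the assumption to bounded expected episode length, or a Foster--Lyapunov argument exploiting the strictly positive reset probability) are the right ways to close it; the measurability of $\history \mapsto \beliefupdate^{*}\fun{\history}$ and of the induced kernel is indeed routine.
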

\smallparagraph{Local losses.}~%
The objective function of the WAE-MDP incorporates \emph{local losses} \citep{DBLP:conf/icml/GeladaKBNB19} that minimize the expected distance between the original and latent reward and transition functions:
\begin{align}
    \localrewardloss{\historydistribution_{\latentpolicy}} = \! \! \! \! \expectedsymbol{\state, \observation, \action \sim \historydistribution_{\latentpolicy}}  \left| \rewards\fun{\state, \action} - \latentrewards_{\decoderparameter}\fun{\embed_{\encoderparameter}\fun{\state, \observation}, \action}\right|, &&
    \localtransitionloss{\historydistribution_{\latentpolicy}} = \! \! \! \!\expectedsymbol{{\state, \observation, \action} \sim \historydistribution_{\latentpolicy}} \wassersteindist{\latentdistance}{\embed_{\encoderparameter}\probtransitions_{\observations}\fun{\sampledot \mid {\state, \observation}, \action}}{\latentprobtransitions_{\decoderparameter}\fun{\sampledot \mid \embed_{\encoderparameter}\fun{\state, \observation}, \action}}; %
    \notag
\end{align}
and both are optimized \emph{locally}, i.e., under $\historydistribution_{\latentpolicy}$, where $\state, \observation, \action \sim \historydistribution_{\latentpolicy}$ is a shorthand for (i) $\history \sim \historydistribution_{\latentpolicy}$ so that $\observation$ is the last observation of $\history$, (ii) $\state \sim \beliefupdate^{*}\fun{\history}$, and (iii) $\action \sim \latentpolicy\fun{\sampledot \mid \beliefencoder^{*}\fun{\history}}$.
Furthermore, $\embed_{\encoderparameter}\probtransitions\fun{\sampledot \mid \state, \action}$ is the distribution of transitioning to $\state' \sim \probtransitions\fun{\sampledot \mid \state, \action}$, then embedding it to the latent space $\latentstate' = \embed_{\encoderparameter}\fun{\state'}$, and $\latentdistance$ is a metric on $\latentstates$.
In practice, the ability of observing states during learning enables the optimization of those local losses without the need of explicitly storing histories.
Instead, we simply store the transitions of $\mdp_{\observations}$ encountered while executing $\latentpolicy$.
We also introduce an \emph{observation loss} in addition to the reconstruction loss of the decoder, which allows learning $\latentobservationfn_{\decoderparameter}$:
\begin{align}
    \observationloss{} = \expectedsymbol{\state, \observation, \action \sim \historydistribution_{\latentpolicy}}
    \expectedsymbol{\state' \sim \probtransitions\fun{\sampledot \mid \state, \action}}
    \dtv{\observationfn\fun{\sampledot \mid \state', \action}}{\expectedsymbol{\observation' \sim \observationfn\fun{\sampledot \mid \state', \action}} \latentobservation_{\decoderparameter}\fun{\sampledot \mid \embed_{\encoderparameter}\fun{\state', \observation'}}}. \label{eq:observation-loss}
\end{align}

\smallparagraph{Belief Losses.}~%
We set $D$ as the Wassertein distance between the true latent belief update and our belief encoder. %
In addition, we argue that the following reward and transition regularizers are required to bound the gap between the fully observable model $\latentmdp_{\decoderparameter}$ and the partially observable one $\latentpomdp_{\decoderparameter}$:
\begin{gather}
    \begin{aligned}
    \beliefloss{} = \expectedsymbol{\history \sim \historydistribution_{\latentpolicy}} \wassersteindist{\latentdistance}{\latentbeliefupdate^{*}\fun{\history}}{\beliefencoder^{*}\fun{\history}}, \,
    &&
    \onpolicyrewardloss{\historydistribution_{\latentpolicy}}= \expectedsymbol{\history, \state, \observation, \action \sim \historydistribution_{\latentpolicy}}\, \expectedsymbol{\latentstate \sim \beliefencoder^*\fun{\history}} \left| \latentrewards_{\decoderparameter}\fun{\embed_{\encoderparameter}\fun{\state, \observation}, \action} - \latentrewards_{\decoderparameter}\fun{\latentstate, \action} \right|,
    \end{aligned} \notag \\
    \onpolicytransitionloss{\historydistribution_{\latentpolicy}} = \expectedsymbol{\history, \state, \observation, \action \sim \historydistribution_{\latentpolicy}} \, \expectedsymbol{\latentstate \sim \beliefencoder^*\fun{\history}}
    \wassersteindist{\latentdistance}{\latentprobtransitions_{\decoderparameter}\fun{\sampledot \mid \embed_{\encoderparameter}\fun{\state, \observation}, \action}}{\latentprobtransitions_{\decoderparameter}\fun{\sampledot \mid \latentstate, \action}} .\label{eq:on-policy-losses}
\end{gather}
$\onpolicyrewardloss{}$ and $\onpolicytransitionloss{}$ aim at regularizing $\beliefencoder$ and minimize the gap between the rewards (resp. transition probabilities) that are expected when drawing states from the current belief compared to those actually observed.
Again, the ability to observe states during training enables optimizing those losses while the states are not required to execute the policy.
The belief loss and the related two regularizers can be optimized \emph{on-policy}, i.e., coupled with the optimization of $\latentpolicy$ that is used to generate the episodes.

\smallparagraph{Value difference bounds.}~%
We provide guarantees
concerning the \emph{agent behaviors in $\pomdp$}, when the policies are \emph{conditioned on latent beliefs}.
To do so, we formalize the behaviors of the agent through \emph{value functions}.
For a specific policy $\policy$, the value of a history is the expected return that would result from continuing to follow the policy from the latest point reached in that history: 
$\values{\policy}{}{\history} = \expected{\policy}{\sum_{t = 0}^{\infty} \discount^t\, \reward_t \mid \binit = \latentbeliefupdate^{*}\fun{\history}}$.
Similarly, we write $\latentvaluessymbol{\latentpolicy}{}$ for the values of the latent policy $\latentpolicy$ in $\latentpomdp_{\decoderparameter}$.

Intuitively, assuming the agent employs a latent policy whose inputs are produced by $\beliefencoder$, we claim that when the losses are minimized to zero, then (i) the latent model almost surely mimics the original environment, and (ii) our belief representation almost surely captures the value function.
\begin{theorem}[Model quality]\label{thm:value-diff-bounds} %
Let $\Rmax = \norm{\latentrewards}_{\infty}$ and 
$\KV= \nicefrac{\Rmax}{1 - \discount}$,
then for any latent policy $\latentpolicy \colon \latentbeliefs \to \distributions{\actions}$, 
the values of $\,\pomdp$ and $\latentpomdp_{\decoderparameter}$ are guaranteed to be bounded by the local and belief losses in average when $\latentpolicy$ is executed in $\pomdp$ via $\action \sim \latentpolicy\fun{\sampledot \mid \beliefencoder^{*}\fun{\history}}$:
\begin{align}
    \expectedsymbol{\history \sim \historydistribution_{\latentpolicy}}\abs{\values{\latentpolicy}{}{\history} - \latentvalues{\latentpolicy}{}{\history}} \leq \frac{\localrewardloss{\historydistribution_{\latentpolicy}} + \onpolicyrewardloss{\historydistribution_{\latentpolicy}} + \Rmax \beliefloss{\historydistribution_{\latentpolicy}} + \discount \KV \cdot \fun{ \localtransitionloss{\historydistribution_{\latentpolicy}} +  \onpolicytransitionloss{\historydistribution_{\latentpolicy}} + \beliefloss{\historydistribution_{\latentpolicy}} + \observationloss{\historydistribution_{\latentpolicy}}}}{1 - \discount}. \label{eq:value-diff-bound-model}
\end{align}
\end{theorem}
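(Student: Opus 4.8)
The plan is to run the standard ``approximate simulation'' argument, but phrased over the two belief MDPs and closed by integrating against the stationary history distribution $\historydistribution_{\latentpolicy}$ of Lemma~\ref{lem:stationary-histories}. Write $g\fun{\history} = \abs{\values{\latentpolicy}{}{\history} - \latentvalues{\latentpolicy}{}{\history}}$; both values are bounded (the latent one by $\KV$ since $\abs{\latentrewards_{\decoderparameter}} \leq \Rmax$, the other by $\sup_{\state,\action}\abs{\rewards\fun{\state,\action}}/\fun{1-\discount}$), so $g$ is $\historydistribution_{\latentpolicy}$-integrable. The value of a history in each model is the fixed point of the corresponding belief-MDP Bellman operator, with the next action drawn from $\latentpolicy\fun{\sampledot \mid \beliefencoder^{*}\fun{\history}}$ and the next observation $\observation'$ drawn (i) in $\pomdp$ from the true predictive law $\nu = \expectedsymbol{\state\sim\beliefupdate^{*}\fun{\history}}\expectedsymbol{\state'\sim\probtransitions\fun{\sampledot\mid\state,\action}}\observationfn\fun{\sampledot\mid\state',\action}$ and (ii) in $\latentpomdp_{\decoderparameter}$ from $\overline{\nu} = \expectedsymbol{\latentstate\sim\latentbeliefupdate^{*}\fun{\history}}\expectedsymbol{\latentstate'\sim\latentprobtransitions_{\decoderparameter}\fun{\sampledot\mid\latentstate,\action}}\latentobservationfn_{\decoderparameter}\fun{\sampledot\mid\latentstate'}$. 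Subtracting the two Bellman equations, adding and subtracting $\expectedsymbol{\observation'\sim\nu}\latentvalues{\latentpolicy}{}{\history\action\observation'}$, and applying Jensen's inequality yields a one-step bound $g\fun{\history} \leq \Delta_{\rewards}\fun{\history} + \discount\,\Delta_{\probtransitions}\fun{\history} + \discount\,\expectedsymbol{\action\sim\latentpolicy\fun{\sampledot\mid\beliefencoder^{*}\fun{\history}}}\expectedsymbol{\observation'\sim\nu}g\fun{\history\action\observation'}$, with one-step reward and transition defects $\Delta_{\rewards}$ and $\discount\Delta_{\probtransitions}$. Since the map $\history\mapsto\history\action\observation'$ with $\action\sim\latentpolicy\fun{\sampledot\mid\beliefencoder^{*}\fun{\history}}$, $\observation'\sim\nu$ is exactly the transition kernel of the history-unfolding MDP underlying Lemma~\ref{lem:stationary-histories}, taking $\expectedsymbol{\history\sim\historydistribution_{\latentpolicy}}$ on both sides and using stationarity ($\expectedsymbol{\historydistribution_{\latentpolicy}}\expectedsymbol{\action,\observation'}g\fun{\history\action\observation'} = \expectedsymbol{\historydistribution_{\latentpolicy}}g$) gives $\expectedsymbol{\historydistribution_{\latentpolicy}}g \leq \fun{\expectedsymbol{\historydistribution_{\latentpolicy}}\Delta_{\rewards} + \discount\,\expectedsymbol{\historydistribution_{\latentpolicy}}\Delta_{\probtransitions}}/\fun{1-\discount}$. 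It then remains to recognise $\expectedsymbol{\historydistribution_{\latentpolicy}}\Delta_{\rewards}$ and $\expectedsymbol{\historydistribution_{\latentpolicy}}\Delta_{\probtransitions}$ as the stated combinations of losses.

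For the reward defect, $\Delta_{\rewards}\fun{\history}$ compares $\expectedsymbol{\state\sim\beliefupdate^{*}\fun{\history}}\rewards\fun{\state,\action}$ with $\expectedsymbol{\latentstate\sim\latentbeliefupdate^{*}\fun{\history}}\latentrewards_{\decoderparameter}\fun{\latentstate,\action}$; inserting the hybrid term $\expectedsymbol{\latentstate\sim\beliefencoder^{*}\fun{\history}}\latentrewards_{\decoderparameter}\fun{\latentstate,\action}$ and using the triangle inequality splits it into (a) $\abs{\rewards\fun{\state,\action} - \latentrewards_{\decoderparameter}\fun{\embed_{\encoderparameter}\fun{\state,\observation},\action}}$ averaged over $\state,\observation\sim\beliefupdate^{*}\fun{\history}$, whose $\historydistribution_{\latentpolicy}$-expectation is exactly $\localrewardloss{\historydistribution_{\latentpolicy}}$ (recalling that in $\augmentedpomdp$ the observation coordinate of the refined state carries the last observation of $\history$, so $\rewards_{\observations}$ adds no information); (b) $\abs{\latentrewards_{\decoderparameter}\fun{\embed_{\encoderparameter}\fun{\state,\observation},\action} - \latentrewards_{\decoderparameter}\fun{\latentstate,\action}}$ averaged over the same $\state,\observation$ and $\latentstate\sim\beliefencoder^{*}\fun{\history}$, i.e.\ $\onpolicyrewardloss{\historydistribution_{\latentpolicy}}$; and (c) $\abs{\expectedsymbol{\latentstate\sim\beliefencoder^{*}\fun{\history}}\latentrewards_{\decoderparameter}\fun{\latentstate,\action} - \expectedsymbol{\latentstate\sim\latentbeliefupdate^{*}\fun{\history}}\latentrewards_{\decoderparameter}\fun{\latentstate,\action}}$, which, because $\abs{\latentrewards_{\decoderparameter}\fun{\sampledot,\action}} \leq \Rmax$ and at $\temperature\to0$ the latent space is discrete so $\wassersteinsymbol{\latentdistance}$ coincides with $\dtvsymbol$, is at most $\Rmax\,\wassersteindist{\latentdistance}{\beliefencoder^{*}\fun{\history}}{\latentbeliefupdate^{*}\fun{\history}}$, contributing $\Rmax\,\beliefloss{\historydistribution_{\latentpolicy}}$.

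For the transition defect, $\Delta_{\probtransitions}\fun{\history}$ arises from $\expectedsymbol{\observation'\sim\nu}\latentvalues{\latentpolicy}{}{\history\action\observation'} - \expectedsymbol{\observation'\sim\overline{\nu}}\latentvalues{\latentpolicy}{}{\history\action\observation'}$, which since $\abs{\latentvalues{\latentpolicy}{}{\sampledot}} \leq \KV$ is bounded by $\KV\,\dtv{\nu}{\overline{\nu}}$. One bounds $\dtv{\nu}{\overline{\nu}}$ by a four-stage hybrid, replacing in turn: $\observationfn\fun{\sampledot\mid\state',\action}$ by $\expectedsymbol{\observation'\sim\observationfn\fun{\sampledot\mid\state',\action}}\latentobservationfn_{\decoderparameter}\fun{\sampledot\mid\embed_{\encoderparameter}\fun{\state',\observation'}}$ (defect $\observationloss{\historydistribution_{\latentpolicy}}$); $\embed_{\encoderparameter}\probtransitions_{\observations}\fun{\sampledot\mid\state,\observation,\action}$ by $\latentprobtransitions_{\decoderparameter}\fun{\sampledot\mid\embed_{\encoderparameter}\fun{\state,\observation},\action}$ (defect $\localtransitionloss{\historydistribution_{\latentpolicy}}$, again using $\wassersteinsymbol{\latentdistance} = \dtvsymbol$ at $\temperature\to0$); $\latentprobtransitions_{\decoderparameter}\fun{\sampledot\mid\embed_{\encoderparameter}\fun{\state,\observation},\action}$ by $\latentprobtransitions_{\decoderparameter}\fun{\sampledot\mid\latentstate,\action}$ for $\latentstate\sim\beliefencoder^{*}\fun{\history}$ (defect $\onpolicytransitionloss{\historydistribution_{\latentpolicy}}$); and finally $\beliefencoder^{*}\fun{\history}$ by $\latentbeliefupdate^{*}\fun{\history}$ (defect $\beliefloss{\historydistribution_{\latentpolicy}}$). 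Each such substitution is followed only by applications of Markov kernels ($\probtransitions$, $\observationfn$, or $\latentobservationfn_{\decoderparameter}$) and by finite mixtures (the belief averages), under which total variation is non-expansive by the data-processing inequality, so the defects accumulate additively; multiplying by $\discount\KV$ and collecting everything reproduces exactly the right-hand side of Eq.~\ref{eq:value-diff-bound-model}.

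The hard part is the transition/observation stage: one must make the four hybrid kernels precise as genuine couplings of the real and latent dynamics along a common history, verify that the belief over the refined state space of $\augmentedpomdp$ factorizes so that the $\historydistribution_{\latentpolicy}$-expectations of the step-wise defects are \emph{exactly} the losses of Eqs.~\ref{eq:local-losses-hist}--\ref{eq:on-policy-losses}, and check that the zero-temperature identity $\wassersteinsymbol{\latentdistance} = \dtvsymbol$ together with the data-processing inequality really lets the defects add without amplification (this is what keeps the coefficient at $\discount\KV$ rather than something growing along the unfolding). A secondary technical point is the exchange-of-expectation and stationarity cancellation on the countably infinite history-unfolding MDP, which leans on Lemma~\ref{lem:stationary-histories}, the boundedness of $g$, and the reset-state bookkeeping of the episodic process.
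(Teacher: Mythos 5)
Your proposal is correct and follows essentially the same route as the paper's proof: the Bellman fixed-point characterization of history values, a chain of hybrid insertions (embedding, latent transition, belief encoder, latent belief update, latent observation function) bounded via the zero-temperature Wasserstein/TV identity and the boundedness of $\latentrewards_{\decoderparameter}$ and $\latentvaluessymbol{\latentpolicy}{}$, and the stationarity of $\historydistribution_{\latentpolicy}$ to absorb the discounted next-value term and solve for the $\nicefrac{1}{1-\discount}$ factor. Your repackaging of the transition defect as a data-processing bound on $\dtv{\nu}{\overline{\nu}}$ is only a cosmetic variant of the paper's term-by-term application of its Wasserstein-in-expectation lemma, and the loss identifications match term for term.
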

\iftrue
\begin{theorem}[Representation quality]\label{thm:value-diff-bound-representation}
Assume the variance of $\latentobservationfn$ goes to zero, let $\latentpolicy^{\star}$ be an optimal policy of  $\latentpomdp_{\decoderparameter}$, 
then for any $\epsilon > 0$, there is a $K \geq 0$ so that for any histories $\history_1, \history_2$ measurable under $\pomdp$ and $\latentpomdp$ with $ \beliefencoder^{*}\fun{\history_1} = \latentbelief_1$ and $\beliefencoder^{*}\fun{\history_2} = \latentbelief_2$,
the representation induced by $\beliefencoder$ almost surely yields:
\begin{multline}
    \abs{\values{\latentpolicy^{\star}}{}{\history_1} - \values{\latentpolicy^{\star}}{}{\history_2}} \leq
    K \wassersteindist{\latentdistance}{\latentbelief_1}{\latentbelief_2} +\, \epsilon \, + \\
    \frac{\localrewardloss{\historydistribution_{\latentpolicy^{\star}}} + \onpolicyrewardloss{\historydistribution_{\latentpolicy^{\star}}} + \fun{K + \discount\KV + \Rmax} \beliefloss{\historydistribution_{\latentpolicy^{\star}}} + \discount \KV \cdot \fun{ \localtransitionloss{\historydistribution_{\latentpolicy^{\star}}} +  \onpolicytransitionloss{\historydistribution_{\latentpolicy^{\star}}} + \observationloss{\historydistribution_{\latentpolicy^{\star}}}}}{1 - \discount} \fun{\frac{1}{\historydistribution_{\latentpolicy^{\star}}\fun{\history_1}} + \frac{1}{\historydistribution_{\latentpolicy^{\star}}\fun{\history_2}}}.
    \label{eq:value-diff-bound-representation} 
\end{multline}
\end{theorem}
While Thm.~\ref{thm:value-diff-bounds} asserts that training a WAE-MDP as a latent space model of the environment results in similar behaviors (i.e., close expected returns) compared to the original environment when they are measured under the agent policy --- which justifies the usage of $\latentpomdp$ as model of the environment --- Thm.~\ref{thm:value-diff-bound-representation} states that our learned update procedure yields a belief representation which is well-suited to optimize the policy: execution traces leading to close latent beliefs (via our learned updater $\beliefencoder$) are guaranteed to yield close expected returns as well (proofs in
Appendix~\ref{appendix:value-diff-bounds}).

\section{Learning to Believe}\label{sec:latent-belief-learner}
In the following, we assume that we have access to the latent model learned by the WAE-MDP.

\smallparagraph{Architecture.}~%
Our latent belief encoder $\beliefencoder$ aims at generalizing to \emph{any} POMDP.
Therefore, \emph{we do not make any assumption about the underlying belief distribution}.
To accommodate complex belief distributions, we use a \emph{Masked Auto-Regressive Flows} (MAF) \citep{NIPS2017_6c1da886}, a type of normalizing flow built on the auto-regressive property.
Precisely, to fit with the WAE-MDP framework and leverage the guarantees presented in Sect.~\ref{sec:guarantees}, we use the MAF of \citet{delgrange2023wasserstein} that learns relaxed multivariate latent distributions.

The \emph{sub-belief} $\beta_t$ is the vector that embeds the parameters of the belief distribution, which is converted into a belief  via the MAF $\mathbb{M}_{\encoderparameter}\fun{\beta_{t}} = \latentbelief_{t}$.
We use a \emph{sub-belief encoder} $\beliefencoder^{\text{sub}}$ to recursively update $\beta_t$ via $\beliefencoder^{\scriptscriptstyle\text{sub}}\fun{\beta_t, \action_t, \observation_{t + 1}} = \beta_{t + 1}$, so that $\beliefencoder\fun{\latentbelief_t, \action_t, \observation_{t + 1}} = \mathbb{M} \circ \beliefencoder^{\scriptscriptstyle\text{sub}}\fun{\beta_t, \action_t, \observation_{t + 1}}$.
RNNs are trained via back-propagation through time (BPTT), which is challenging %
\citep{DBLP:conf/icml/PascanuMB13}.
In contrast, albeit sub-beliefs are updated recursively in the same spirit as RNN hidden states, \emph{we do not need to use BPTT and use a simple feed-forward network for $\beliefencoder^{\text{sub}}$}, as illustrated in Fig.~\ref{fig:a2c}.
In R-A2C, RNN hidden states serve as compact representations of histories for the policy.
Since values of {time-steps closer to the end of an episode are easier to learn}, the gradients of future time-steps tend to be more accurate; thus BPTT helps learning.
This is in stark contrast with learning the belief update rule: the beliefs of {early time-steps are easier to infer}, 
so BPTT is unnecessary and might even be armful.

\smallparagraph{Training.}~%
We aim to train $\beliefencoder^{\text{sub}}$ and $\mathbb{M}$ to approximate the update rule by minimizing the Wasserstein between the belief update rule $\latentbeliefupdate$ of the latent POMDP, and the belief encoder $\beliefencoder$ (Eq.~\ref{eq:on-policy-losses}), to leverage the theoretical learning guarantees of Thm.~\ref{thm:value-diff-bounds} and \ref{thm:value-diff-bound-representation}.
However, Wasserstein optimization is known to be challenging, often requiring the use of additional networks, Lipschitz constraints, and a min-max optimization procedure \citep{DBLP:conf/icml/ArjovskyCB17}, similar to how WAE-MDPs are trained.
Also, sampling from both distributions is necessary for optimizing Wasserstein and, while sampling from our belief approximation is straightforward, sampling from the update rule (Eq.~\ref{eq:latent-belief-update}) is non-trivial.

\begin{figure}
    \centering
    \begin{subfigure}{.50\textwidth}
        \includegraphics[width=\linewidth]{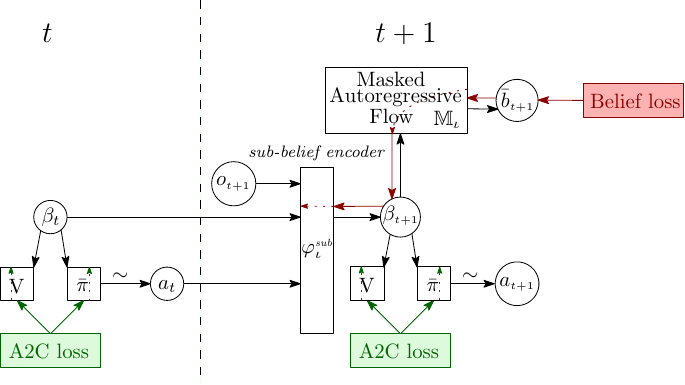}
        \label{fig:belief_a2c}
    \end{subfigure}\hfill
    \begin{subfigure}{.50\textwidth}
        \centering
        \includegraphics[width=\linewidth]{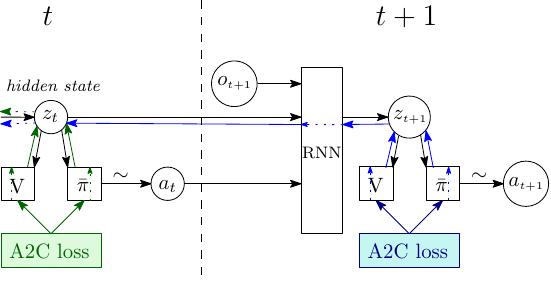}
        \label{fig:rnn_a2c}
    \end{subfigure}
    \caption{
    WBU (\emph{left}) %
    learns to encode the history into a sub-belief solely by optimizing the belief loss.
    The policy, being conditioned on the sub-belief, is learned via A2C and does not back-propagate through the sub-belief encoder.
    The R-A2C agent (\emph{right}) uses BPTT: the RNN leverages  gradients from future time-steps to improve its compression of the history for learning a policy and value function. In both plots, the colored arrows represent the gradient flows of the different losses.
    }\label{fig:a2c}
\end{figure}

As an alternative to the Wasserstein optimization, we minimize the KL divergence between the two distributions. $\dklsymbol$ is easier to optimize and only requires sampling from one of the two distributions (in our case, the belief encoder).
However, unlike the Wasserstein distance, guarantees can only be derived when the divergence approaches zero.
Nonetheless, in the WAE-MDP zero-temperature limit, $\dklsymbol$ bounds Wasserstein by the Pinsker's inequality \citep{borwein2005convex}.

\smallparagraph{On-policy KL divergence.}~Using $\dklsymbol$ as a proxy  for $\wassersteinsymbol{\latentdistance}$ allows to narrow the gap between $\beliefencoder$ and $\latentbeliefupdate$. %
\ifarxiv
We train $\beliefencoder$ with on-policy data. 
In contrast, using data from the replay buffer to train the belief updater, as in DRQN \citep{Hausknecht2015DeepMDPs}, would require sampling full trajectories since the belief representation may change after multiple updates. Additionally, training  the policy and belief updater on the same samples facilitates learning, even though gradients are not allowed to flow between the networks.
\else
We train $\beliefencoder$ \emph{on-policy}, with the same samples as used for $\latentpolicy$, which aids learning despite gradients are not allowed to flow between the networks.
\fi
At any time-step $t \geq 0$, given the current belief $\latentbelief_t$, the action $\action_t$ played by the agent, and the next perceived observation $\observation_{t + 1}$, the belief proxy loss is: %
\vspace{-.01em}
\begin{multline}
    {\dkl{\beliefencoder\fun{\latentbelief_t, \action_t, \observation_{t + 1}}}{
    \latentbeliefupdate\fun{ \latentbelief_t, \action_t, \observation_{t + 1}}}}
    = \log\fun{\expectedsymbol{\latentstate \sim \latentbelief_t} \expectedsymbol{\latentstate' \sim \latentprobtransitions_{\decoderparameter}\fun{\sampledot \mid \latentstate, \action_t}} \latentobservationfn_{\decoderparameter}\fun{\observation_{t + 1} \mid \latentstate'}} +\\
    \!\!
    \expectedsymbol{\latentstate_{t+1} \sim \beliefencoder\fun{ \latentbelief_t, \action_t, \observation_{t + 1}}}\! \left[ \log{\beliefencoder\fun{\latentstate_{t+1} \mid \latentbelief_t, \action_t, \observation_{t + 1}}} \! - 
    \!\log  \!\expectedsymbol{\latentstate \sim \latentbelief_t}\!\latentprobtransitions_{\decoderparameter} \fun{\latentstate_{t+1} \mid \latentstate, \action_t}  \vphantom{\log  \expectedsymbol{\latentstate \sim \belief_t}\latentprobtransitions} - \log \latentobservationfn_{\decoderparameter}\fun{\observation_{t + 1}\mid \latentstate_{t + 1}} \right]. \label{eq:dkl}
\end{multline}
Eq.~\ref{eq:dkl} consists of $4$ terms: a normalization factor, negative entropy of $\beliefencoder$, belief update conformity with the latent MDP's state transition function, and filtration of latent states unrelated to $\observation_{t+1}$.
\begin{remark}[Variance of the latent observations]\label{rmk:variance}%
The WAE-MDP learns from the augmented POMDP $\augmentedpomdp$ (Sect.~\ref{sec:refinement-model}) %
 which is equivalent to the original environment and possesses a deterministic observation function. 
Therefore, the WAE-MDP also learns to deterministically map latent states to their observation through $\augmentedobservationfn$.
Still, we introduce a variance parameter to enable learning $\latentbeliefupdate$:
when deterministic, the observation terms of Eq.~\ref{eq:latent-belief-update} and~\ref{eq:dkl} are Dirac, which prevents learning;
deterministically filtering out states from the next belief that do not share the next observation would require constructing the full belief distribution, which is usually intractable.
To alleviate that, $\augmentedobservationfn$, coupled with the variance learned from the observation loss $\observationloss{}$, serves as a smooth version of the Dirac function.
\end{remark}

\smallparagraph{Policy learning}~is enabled by inputting the sub-belief into the policy, while the optimization of the belief encoder parameters by the RL agent is not allowed. Our method is applicable to \emph{any} on-policy algorithm, and we employ A2C in our experiments. We provide the final algorithm in Appendix~\ref{appendix:algorithm}.

\section{Experiments}
\begin{figure}
    \centering
    \includegraphics[width=.925\linewidth]{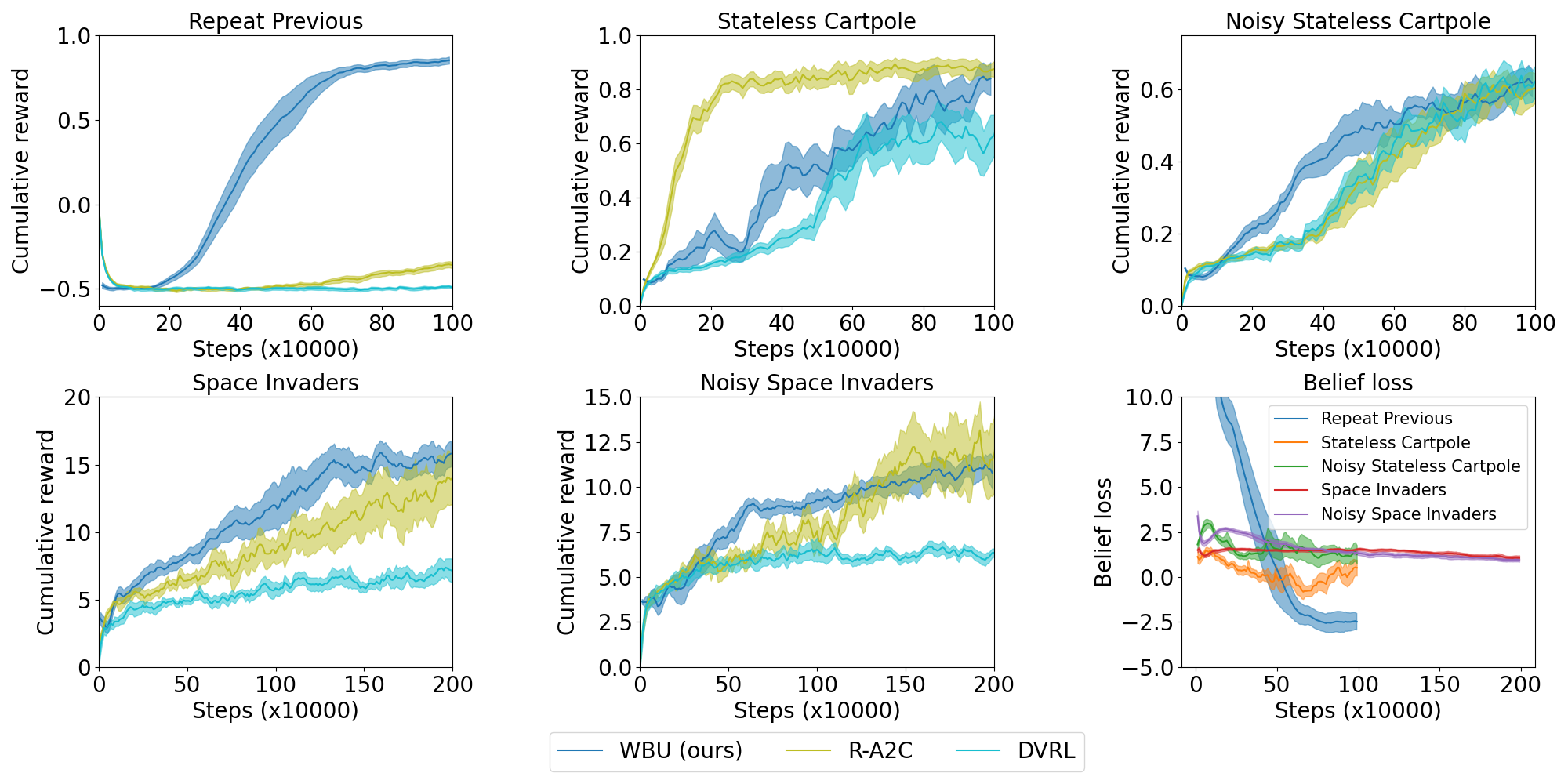}
    \caption{Evolution of the (i) undiscounted cumulative return for WBU, R-A2C and DVRL, and (ii) belief loss during learning for WBU (mean and standard error).
    We report $5$ instances of each algorithm.
    Appendix \ref{appendix:hyper} details the hyperparameter search performed. 
    }
    \label{fig:experiment}
\end{figure}

To evaluate our approach, we identify three types of POMDPs: those requiring \emph{long-term memory}, those where \emph{features of the state space are hidden} (and may be inferred from short-term memory), and those with \emph{noisy} observations. %
Notably, we stress that long-term memory is crucial in POMDPs, whereas short-term memory could be mitigated by stacking frames (e.g., \citealt{Mnih2015Human-levelLearning}).
We compare our agent to R-A2C and DVRL (Fig.~\ref{fig:experiment}), trained in environments from \textsc{POPGym} \citep{morad2023popgym} and our own partially observable version of \textsc{MinAtar} \citep{young19minatar}.

\smallparagraph{Memorization.}~%
The \textsc{RepeatPrevious} environment involves shuffling two decks of cards at the start of each episode and presenting the agent with a card at each time step.
The goal is to identify the suit of the card seen 8 time steps earlier.
\emph{Our algorithm stands out as the sole method demonstrating mid- to long-term memorization capabilities.}
Unlike other methods, notably DVRL which also attempts to learn a belief distribution, WBU provably acquires a suitable representation of the history by learning to maintain a sufficient statistic, thereby explaining its ability to retain past information.

\smallparagraph{Hidden features.}~%
We employ a cart pole scenario (\textsc{StatelessCartPole}) where velocity components of the system are hidden.
R-A2C excels rapidly here, capitalizing on short-term memory to infer velocities from the preceding observation, while DVRL is overtaken.
Still, WBU eventually reaches R-A2C final performance.
We also explore the \textsc{SpaceInvaders} environment, where the agent takes command of a cannon with the objective of shooting at groups of moving aliens. In the observation, we intentionally concealed the direction of alien movement and confounded friendly and enemy fires.
In this more challenging setting, WBU excels by earning the highest rewards.

\smallparagraph{Noise.}~%
We explore two types of noise. First, we introduce \emph{Gaussian noise} to the observations of \textsc{StatelessCartPole}. Second, for \textsc{SpaceInvaders}, \emph{binary noise} is injected via a radar-like mask obscuring the position of each alien with high probability. Hence, the agent must infer their positions based on previous observations.
By leveraging its ability to maintain a belief over the noiseless (latent) state space, WBU demonstrates its resilience to noise and swiftly provides superior solutions, whereas R-A2C eventually achieves comparable performance but with more variance.

\begin{wrapfigure}{r}{0.45\textwidth}
    \centering
    \includegraphics[width=0.445\textwidth]{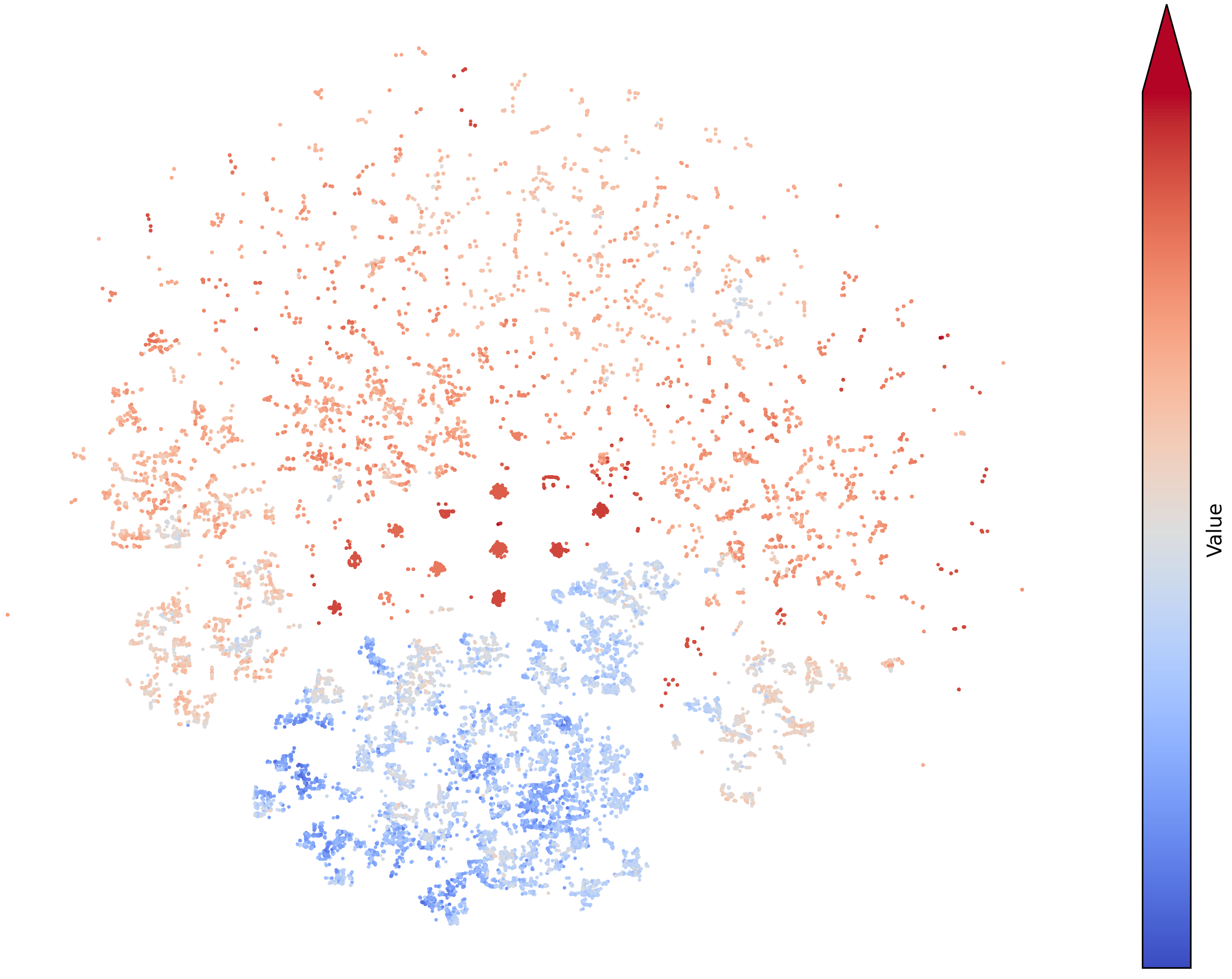}
    \caption{Two-dimensional t-SNE of our belief representation for \textsc{SpaceInvaders}.}
    \label{fig:tsne}
\end{wrapfigure}
\smallparagraph{Belief representation.}~%
Ideally, close policy inputs should lead to close values, which would ease its optimization.
Thm.~\ref{thm:value-diff-bound-representation} provides such a representation guarantee and ensures that the representation induced by $\beliefencoder$ captures the value function.
To support this, we performed a t-SNE \citep{JMLR:v9:vandermaaten08a} on our belief representation at the late stage of training, which projects latent beliefs on a 2D space (Fig.~\ref{fig:tsne}).
Interestingly, \emph{latent beliefs clustered together have indeed close values}, in line with Thm.~\ref{thm:value-diff-bound-representation}.
We reported the belief loss throughout the training phases (Fig.~\ref{fig:experiment}).
Importantly, unlike other baselines, \emph{our approach distinctly separates the optimization of $\beliefencoder$ and $\latentpolicy$}. Consequently, \emph{the policy optimization does not influence the representation} which is solely learned via $\beliefencoder$.
The decrease in this loss thus relates to improved representation quality for RL.

\section{Conclusion}
WBU provides a novel approach that approximates directly the belief update for POMDPs, in contrast to SOTA methods that uses the RL objective and regularization to attempt to turn the history into a sufficient statistic.
We believe that formulating a theoretically sound framework which allows reasoning on the solutions learned is equally important and parallel to works focusing primarily on agent performance.
By learning the belief and its update rule, we provide strong guarantees on the quality of the belief, its ability to condition the optimal value function, and ultimately, the effectiveness of our algorithm. Our theoretical analysis and experimental results demonstrate the potential of our approach.
Overall, our WBU algorithm provides a promising new direction for RL in POMDPs, with potential applications in a wide range of settings where decision-making is complicated by uncertainty and partial observability, or when guarantees on the agent behaviors are required. %

\smallparagraph{Future work.}~%
The theory we developed is not limited to the algorithm proposed in our paper. It opens diverse avenues for future work, e.g., on formally verifiable policies for POMDPs, by leveraging the guarantees presented in our framework.
We also leave the study and the adaptation of our framework under relaxed assumptions (e.g., in settings akin to the work of \citealt{lambrechts2022recurrent}) to future work.
In addition, Thm~\ref{thm:value-diff-bounds} enables policy optimization through planning in the learned model, as demonstrated in successful model-based RL methods (e.g., \citealt{hafner2021mastering}).
Scaling to high-dimensional observations (e.g., images) may potentially be computation-intensive due to observation filtering.
For this further challenge, we suggest either to modify the WAE-MDP framework by using a stochastic decoder (see \citealt{DBLP:conf/iclr/TolstikhinBGS18}, e.g., via PixelCNN, \citealt{DBLP:conf/nips/OordKEKVG16}), or learning a lower-dimensional latent observation space synced with the policy with a normalized or (relaxed) discrete prior (e.g., via a WAE-GAN, \citealt{DBLP:conf/iclr/TolstikhinBGS18}).
Finally, incorporating bisimulation metrics \citep{DBLP:journals/tcs/DesharnaisGJP04,DBLP:journals/siamcomp/FernsPP11} will strengthen guarantees for belief learning, even though bisimulation is challenging in POMDPs \citep{DBLP:conf/ijcai/CastroPP09}.

\ificlrfinal
\subsection*{Acknowledgements}
This research was supported by funding from the Flemish Government under the ``Onderzoeksprogramma Artifici\"{e}le Intelligentie (AI) Vlaanderen'' program and was supported by the DESCARTES iBOF project.
R. Avalos is supported by the Research Foundation – Flanders (FWO), under grant number 11F5721N. 
G.A. Perez is also supported by the Belgian FWO “SAILor” project (G030020N).
We thank Mathieu Reymond, Denis Steckelmacher, and Mustafa Mert Çelikok for their valuable feedback.
\fi

\subsection*{Reproducibility Statement}
We referenced in the main text the parts of the Appendix presenting the proofs of our Lemma (Appendix~\ref{appendix:stationary-histories}) and Theorems (Appendix~\ref{appendix:value-diff-bounds}).
We also provide the pseudo-code of our algorithm (Appendix~\ref{appendix:algorithm}) as well as extra details required to compute our losses (Appendix~\ref{appendix:belief-update}, \ref{appendix:temperature}, and~\ref{appendix:losses}). 
We included the code in Supplementary Material where we detailed installation instructions and a script to rerun all the experiments presented in the paper. Additionally, we provide the details of our hyperparameter search (Appendix~\ref{appendix:hyper}).

\bibliography{references}
\bibliographystyle{iclr2024_conference}

\appendix
\newpage
\begin{appendices}
\section*{Appendix}

\section{The Belief Update Rule}\label{appendix:belief-update}

Let $\latentpomdp = \tuple{\latentmdp, \observations, \latentobservationfn}$ be a latent POMDP with underlying MDP $\latentmdp = \latentmdptuple$. 
At step $t \geq 0$, assume that the current latent belief is $\latentbelief_t \in \latentbeliefs$. 
Then, when $\action_t \in \actions$ is executed and $\observation_{t + 1} \in \observations$ is observed, $\latentbelief_t$ is updated according to $\latentbeliefupdate\fun{\latentbelief_t, \action_t, \observation_{t + 1}}$ as follows, so that for any (believed) state $\latentstate_{t + 1} \in \latentstates$,

\begin{equation}
    \latentbelief_{t + 1}\fun{\latentstate_{t + 1}} = \frac{\expectedsymbol{\latentstate_t \sim \latentbelief_t} \latentprobtransitions_{\decoderparameter}\fun{\latentstate_{t + 1} \mid \latentstate_t, \action_t} \cdot \latentobservationfn_{\decoderparameter}\fun{\observation_{t + 1}\mid\latentstate_{t + 1}}}{\E_{\latentstate_{t} \sim \latentbelief_t} \E_{\latentstate' \sim \latentprobtransitions_{\decoderparameter}(\cdot \mid \latentstate_t, \latentaction_t)} \latentobservation_{\decoderparameter}\fun{\observation_{t + 1} \mid \latentstate'}}. \label{eq:latent-belief-update-appendix}
\end{equation}

\begin{figure}[h]
    \centering
    \includegraphics[width=\textwidth]{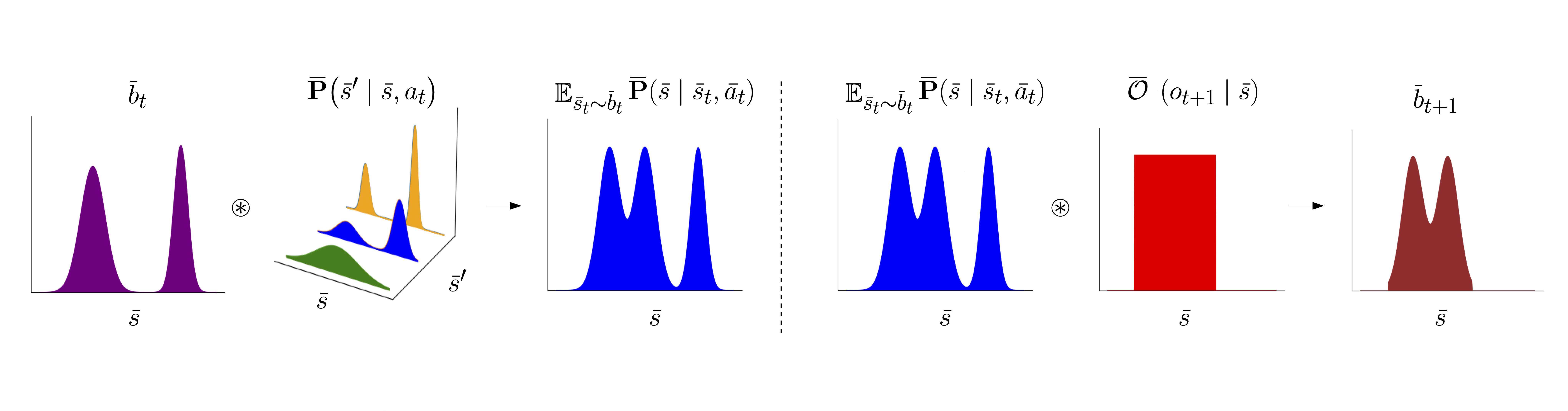}
    \caption{The belief update rule: (\emph{left}) transformation of the current belief $\latentbelief_t$ with the transition probability function $\latentprobtransitions$, evaluated on the current action $\action_t$, into the next state probability density; (\emph{right}) filtering out the next states that could not have produced the next observation $\observation_{t+1}$.}
    \label{fig:belief_update}
\end{figure}

The belief update rule $\latentbeliefupdate$ %
is divided in two steps (cf. Fig.~\ref{fig:belief_update}).
First, the current belief distribution $\latentbelief_t$ is used to marginalize the latent transition function  $\latentprobtransitions_{\decoderparameter}$ over the believed latent states, to further infer the distribution over the possible next states.
This first part corresponds to computing the probability of going to the next believed state $\latentstate_{t + 1}$ from the possible states $\latentstate_t$ witnessed by the current belief $\latentbelief_t$. 
Second, the next observation $\observation_{t+1}$ is used to filter the resulting density based on the observation function: believed latent states that do not share the observation $\observation_{t + 1}$ perceived are filtered out according to $\latentobservationfn\fun{\observation_{t + 1} \mid \latentstate_{t + 1}}$.
It is worth noting that the latent model is learned from $\augmentedpomdp$, whose observation function is deterministic.
Without modelling the latent observation function $\latentobservationfn_{\decoderparameter}$ as a normal distribution, the second part of the belief update would need to eliminate all next states with different observations --- which is not gradient descent friendly.
The third operation (not present in Fig.~\ref{fig:belief_update}) normalizes the output of the observation filtering to obtain a probability density.

\section{Dirac Measures}

In this work, we consider the \emph{Dirac delta function} $\delta$ as a \emph{measure}.
Specifically, this means that for any complete, separable space $\measurableset$ and point $a \in \measurableset$, the \emph{Dirac measure} with impulse $a$ is $\diracimpulsesymbol_a\in \distributions{\measurableset}$ and satisfies $\diracimpulse{A}{a} = 1$ if $a \in A$ and $\diracimpulse{A}{a} = 0$ otherwise, for any $A \in \borel{\measurableset}$.
Interesting properties of the Dirac measure include
$\diracimpulsesymbol_{a} = \lim_{\sigma \to 0} \normal{a}{\sigma^2}{}$, where $\normal{a}{\sigma^2}{}$ is the normal distribution with mean $a$ and variance $\sigma^2$, and
$\int_{\measurableset} \diracimpulse{x}{a} f\fun{x} \, dx = f\fun{a}$ for any compactly supported function $f$.

\section{Proof of Lemma~\ref{lem:stationary-histories}: Stationarity over Histories}\label{appendix:stationary-histories}
Let us formally restate the Lemma:
\begin{lemma}~\label{lem:extended-stationary-histories}
Let $\pomdp$ be an episodic POMDP with action space $\actions$ and observation space $\observations$.
There is a well defined probability distribution $\historydistribution_{\latentpolicy} \in \distributions{\histories}$ over histories drawn at the limit from the interaction of the RL agent with $\pomdp$, when it operates under a latent policy $\latentpolicy$ conditioned over the beliefs of a latent POMDP $\latentpomdp$ that shares the action and observation spaces of $\pomdp$ and is executed via the belief encoder, i.e., $\action \sim \latentpolicy\fun{\sampledot \mid \beliefencoder^{*}\fun{\history}}$ for any $\history \in \histories$.
\end{lemma}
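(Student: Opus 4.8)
The plan is to make precise the ``history unfolding'' construction sketched in the proof-sketch and then invoke the existing ergodicity result for episodic processes. Concretely, I would build an MDP $\mdp_{\historydistribution} = \tuple{\histories_\star, \actions, \probtransitions_{\historydistribution}, \rewards_{\historydistribution}, \history_I, \discount}$ whose state space is the set of all finite histories (together with a distinguished empty/initial history $\history_I$ corresponding to having perceived only the dummy symbol $\star$), and whose transition function appends the chosen action and the freshly sampled observation to the current history: for $\history = \historytuple{\action}{\observation}{t}$, playing $\action_t$ and receiving $\observation_{t+1}$ leads deterministically to the extended history $\historytuple{\action}{\observation}{t+1}$, with the observation distributed according to the POMDP dynamics marginalised over the belief $\beliefupdate^*(\history)$ --- i.e.\ $\probtransitions_{\historydistribution}(\history \cdot \action_t \cdot \observation_{t+1} \mid \history, \action_t) = \expectedsymbol{\state \sim \beliefupdate^*(\history)} \expectedsymbol{\state' \sim \probtransitions(\sampledot \mid \state, \action_t)} \observationfn(\observation_{t+1} \mid \state', \action_t)$. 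The reward $\rewards_{\historydistribution}$ can be taken to be the belief reward $\beliefrewards(\beliefupdate^*(\history), \action)$, though it plays no role in the stationarity argument.

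Next I would establish the equivalence between $\mdp_{\historydistribution}$ and $\pomdp$ under the latent policy $\latentpolicy$. The key observation is that $\beliefupdate^*$ is a deterministic function of the history, so that the map sending a trajectory of $\mdp_{\historydistribution}$ to the underlying trajectory of $\mdp$ (obtained by sampling states along the way) is measure-preserving: the probability measure $\Prob^{\mdp_{\historydistribution}}_{\latentpolicy}$ on trajectories of the unfolding, where at step $t$ the action is drawn from $\latentpolicy(\sampledot \mid \beliefencoder^*(\history_t))$, projects exactly onto $\Prob^{\pomdp}_{\latentpolicy}$ on histories of $\pomdp$. In particular, the marginal over states visited in $\mdp_{\historydistribution}$ (i.e.\ histories) coincides with the distribution of histories actually perceived when the agent runs $\latentpolicy$ in $\pomdp$. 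Here one must be slightly careful to note that $\latentpolicy$ conditioned through $\beliefencoder^*$ is a legitimate history-dependent policy on $\pomdp$, hence induces a well-defined probability space by the standard cylinder-set argument (as recalled in the Background).

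Finally I would transfer the episodic property to $\mdp_{\historydistribution}$: by Assumption~\ref{assumption:episodic} the reset state $\sreset$ of $\pomdp$ is reached almost surely and is observable via $\observation^\star$, and after a reset the next observation is emitted from $\sinit$; hence in $\mdp_{\historydistribution}$ the set of histories whose last observation is $\observation^\star$ acts as a (reachable, recurrent) ``reset region'', and one can quotient these into a single reset state --- or simply note that the episode structure of $\pomdp$ lifts verbatim. Thus $\mdp_{\historydistribution}$ is an episodic MDP, and by the cited fact that every episodic process is ergodic \cite{DBLP:conf/nips/Huang20}, it admits a unique stationary distribution $\steadystate{\latentpolicy}$ over its state space. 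Defining $\historydistribution_{\latentpolicy}$ to be this stationary distribution --- a distribution over $\histories$ --- and using the projection of the previous step to identify it with the limiting distribution of histories perceived in $\pomdp$, gives the claim.

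The main obstacle I anticipate is not conceptual but measure-theoretic bookkeeping: $\histories$ is a countable union of product spaces of growing dimension, so one must fix a $\sigma$-algebra on it (the natural one generated by the cylinder/prefix sets), check that $\probtransitions_{\historydistribution}$ is a genuine Markov kernel with respect to it, and verify that the ergodicity result of \cite{DBLP:conf/nips/Huang20}, which is typically stated for processes with a reset state, applies after the quotienting of the $\observation^\star$-histories. A secondary subtlety is ensuring the construction does not secretly depend on $\beliefencoder$ in a way that breaks well-definedness --- since $\beliefencoder^*$ only feeds the policy and not the transition kernel, the unfolding MDP and its stationary distribution are well-defined regardless of the quality of $\beliefencoder$, which is exactly what is needed for Lemma~\ref{lem:stationary-histories} to make sense prior to training.
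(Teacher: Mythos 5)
Your proposal is correct and follows essentially the same route as the paper: construct the history-unfolding MDP with the belief-marginalised transition kernel, argue it is behaviourally equivalent to $\pomdp$ under $\latentpolicy$, lift the episodic/reset structure, and invoke the ergodicity of episodic processes to obtain the unique stationary distribution over histories. The only cosmetic difference is that the paper formalises the equivalence step as a stochastic bisimulation between the unfolding and the belief MDP (with the latent belief tracked as Mealy-machine memory), whereas you argue it via a measure-preserving projection; both deliver the same conclusion, and your remark that the construction is independent of the quality of $\beliefencoder$ is a correct and worthwhile observation.
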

\begin{proofsketch}
Build a \emph{history unfolding} as the MDP whose state space consists of all histories and keeps track of the current history of $\pomdp$ at any time of the interaction. 
The resulting MDP remains episodic since it is equivalent to $\pomdp$: the former mimics the behaviors of the latter under $\latentpolicy$.
All episodic processes are \emph{ergodic} \citep{DBLP:conf/nips/Huang20}, which guarantees the existence of such a distribution.
\end{proofsketch}
We dedicate this Section to formally detailing and proving every claim of this proof sketch.
Before going further, we formally define the notion of \emph{episodic process}, and we further introduce the notions of \emph{memory-based policies}, \emph{Markov Chains}, and \emph{limiting distributions in Markov Chains}.

\subsection{Preliminaries}
Recall by Assumption~\ref{assumption:episodic} that the environment $\pomdp$ is an episodic process.
We \emph{formally} recall the notion of \emph{episodic process}:
\begin{definition}[Episodic RL process]\label{def:episodic}
The RL procedure is \emph{episodic} iff the environment $\pomdp$ embeds a special \emph{reset state} $\resetstate \in \states$ so that (i) under any policy $\policy$, the environment is almost surely eventually reset: $\Prob_{\policy}^{\mdp}\fun{\set{\seq{\state}{\infty}, \seq{\action}{\infty} \mid \exists t > 0, \state_t = \resetstate}} = 1$;
(ii) when reset, the environment transitions to the initial state: $\probtransitions\fun{\sinit \mid \sreset, \action} > 0$ and $\probtransitions\fun{\states \setminus \set{\sinit, \sreset} \mid \sreset, \action} = 0$ for all $\action \in \actions$;
and (iii) \emph{the reset state is observable}: there is an observation $\observation^{\star} \in \observations$ so that $\observationfn\fun{\observation^{\star} \mid \state', \action} = 0$ when $\state' \neq \sreset$, and $\observationfn\fun{\sampledot \mid \sreset, \action} = \diracimpulsesymbol_{\observation^\star}$ for $\action \in \actions$.
An \emph{episode} is a history $\historytuple{\action}{\observation}{T}$
where $\observationfn\fun{\observation_1 \mid \sinit, \action_0} > 0 \text{ and } 
\observation_T = \observation^{\star}$.
\end{definition}

\smallparagraph{Encoding memory through Mealy machines.}~\emph{Policies} are building blocks to define the probability space of any MDP.
To deal with policies whose decisions are based on unrolling histories, we formally define the notion of \emph{memory} in policies. 

\begin{definition}[Policy as Mealy Machine]
Given an MDP $\mdp = \mdptuple$, any policy for $\mdp$ can be encoded as a \emph{stochastic Mealy machine} $\policy = \mealymachine{\policy}$ as follows:
$\policystates$ is a set of \emph{memory states};
$\mealyaction{\policy} \colon \states \times \policystates \to \distributions{\actions}$ is the \emph{next action function};
$\mealyupdate{\policy} \colon \states \times \policystates \times \actions \times \states \to \distributions{\policystates}$ is the \emph{memory update function}; and
$\qinit$ is the initial memory state.
\end{definition}
\begin{example}[Stationary policy]
A stationary policy $\policy$ can be encoded as any Mealy machine $\policy$ with memory space $\policystates$ where $\left|\policystates\right| = 1$.
\end{example}
\begin{example}[Latent policy]\label{ex:memory-latent-policy}
Let  $\pomdp = \pomdptuple$ with underlying MDP $\mdp = \mdptuple$ and the latent space model $\latentpomdp$ with initial state $\zinit$ be the POMDPs of Lemma~\ref{lem:extended-stationary-histories}.
Then, any latent (stationary)  policy $\latentpolicy \colon \latentbeliefs \to \distributions{\actions}$ conditioned on the belief space $\latentbeliefs$ of $\latentpomdp$ can be executed in the belief MDP $\beliefmdp$ of $\pomdp$ via the Mealy machine $\latentpolicy' = \langle \latentbeliefs, {\mealyaction{\latentpolicy}}, {\mealyupdate{\latentpolicy}}, \diracimpulsesymbol_{\zinit} \rangle$, keeping track in its memory of the current latent belief $\latentbelief \in \latentbeliefs$ inferred by our belief encoder $\beliefencoder$.
This enables the agent to take its decisions solely based on the latter: $\mealyaction{\latentpolicy}(\sampledot \mid \belief, \latentbelief) = \latentpolicy(\sampledot \mid \latentbelief)$.
When the belief MDP transitions to the next belief $\belief'$, the memory is then updated according to the observation dynamics:
\begin{align*}
\hfill \mealyupdate{\latentpolicy}(\latentbelief' \mid \belief, \latentbelief, \action,  \belief') &= 
\frac{\expectedsymbol{\state \sim \belief}\expectedsymbol{\state' \sim \probtransitions\fun{\sampledot \mid \state, \action}} \expectedsymbol{\observation' \sim \observationfn\fun{\sampledot \mid \state', \action}} \diracimpulse{\latentbelief'}{\beliefencoder(\latentbelief, \action, \observation')} \cdot \diracimpulse{\belief'}{\beliefupdate(\belief, \action, \observation')}}
{\expectedsymbol{\state \sim \belief}\expectedsymbol{\state' \sim \probtransitions\fun{\sampledot \mid \state, \action}} \expectedsymbol{\observation' \sim \observationfn\fun{\sampledot \mid \state', \action}} \diracimpulse{\belief'}{\beliefupdate(\belief, \action, \observation')}} \;\, \text{if $\belief' \neq \diracimpulsesymbol_{\sreset}$}, \\
\mealyupdate{\latentpolicy}(\sampledot \mid \belief, \latentbelief, \action,  \diracimpulsesymbol_{\sreset}) &= \diracimpulsesymbol_{\latentstate_{\mathsf{reset}}} \qquad\qquad\qquad\qquad\qquad \text{otherwise (to fulfil the episodic constraint).}
\end{align*}
Note that $\mealyupdate{\latentpolicy}$ is simply  obtained by applying the usual conditional probability rule:
$
\mealyupdate{\latentpolicy}\fun{\latentbelief' \mid \belief, \latentbelief, \action, \belief'} = \nicefrac{\text{Pr}\fun{\belief', \latentbelief' \mid \belief, \latentbelief, \action}}{\text{Pr}\fun{\belief' \mid \belief, \latentbelief, \action}}, \text{ where } \text{Pr}\fun{\belief', \latentbelief' \mid \belief, \latentbelief, \action} = \expectedsymbol{\state \sim \belief}\expectedsymbol{\state' \sim \probtransitions\fun{\sampledot \mid \state, \action}} \expectedsymbol{\observation' \sim \observationfn\fun{\sampledot \mid \state', \action}} \diracimpulse{\latentbelief'}{\beliefencoder(\latentbelief, \action, \observation')} \cdot \diracimpulse{\belief'}{\beliefupdate(\belief, \action, \observation')}
$
and
$\text{Pr}\fun{\belief' \mid \belief, \latentbelief, \action} = \probtransitions_{\beliefs}\fun{\belief' \mid \belief, \action}$ since the next \emph{original} belief state is independent of the {current} \emph{latent} belief state.
\end{example}

\begin{definition}[Markov Chain]\label{def:markov-chain}
A \emph{Markov Chain} (MC) is an MDP whose action space $\actions$ consists of a singleton, i.e., $\left|\actions\right| = 1$.
Any MDP $\mdp = \mdptuple$ and memory-based policy $\policy = \mealymachine{\policy}$ induces a Markov Chain \[\mdp^\policy = \langle \states \times \policystates, \probtransitions_{\policy}, \rewards_{\policy}, \tuple{\sinit, \qinit}, \discount \rangle,\] where:
\begin{itemize}
    \item the state space consists of the product of the original state space and the memory of $\policy$;
    \item the transition function embeds the next action and the policy update functions from the policy, i.e.,
    \begin{equation*}
    \probtransitions_{\policy}\fun{\tuple{\state', \policystate'} \mid \tuple{\state, \policystate}} = \expectedsymbol{\action \sim \mealyaction{\policy}\fun{\sampledot \mid \state, \policystate}} \mealyupdate{\policy}\fun{\policystate' \mid \state, \policystate, \action, \state'} \cdot \probtransitions\fun{\state' \mid \state, \action}, \text{ and}
    \end{equation*}
    \item the rewards are averaged over the possible actions produced by the next action function, i.e., $\rewards_{\policy}\fun{\tuple{\state, \policystate}} = \expectedsymbol{\action \sim \mealyaction{\policy}\fun{\sampledot \mid \state, \policystate}} \rewards\fun{\state, \action}$.
\end{itemize}
Furthermore, \emph{the probability measure $\Prob_{\policy}^{\mdp}$ is actually the unique probability measure defined over the measurable infinite trajectories of the MC} $\mdp^{\policy}$ \citep{DBLP:books/wi/Puterman94}.
\end{definition}
We now formally define the distribution over states encountered at the limit when an agent operates in an MDP under a given policy, as well as the conditions of existence of such a distribution.

\begin{definition}[Bottom strongly connected components and limiting distributions]\label{def:stationary-distr}
Let $\mdp$ be an MDP with state space $\mdp$ and $\policy$ be a policy for $\mdp$.
Write $\mdp\left[\state\right]$ for the MDP where we change the initial state $\sinit$ of $\mdp$ by $\state \in \states$.
The measure $\stationary{\policy}^{t}: \states \to \distributions{\states}$ with $ \stationary{\policy}^{t}\fun{\state' \mid \state} =  \Prob^{\mdp\left[\state\right]}_\policy\fun{\set{{\seq{\state}{\infty}, \seq{\action}{\infty}}
\mid \state_t = \state'}}$ is the distribution giving the probability for the agent of being in each state of $\mdp\left[\state\right]$ after exactly $t$ steps.
The subset $B \subseteq \states$ is a \emph{strongly connected component} (SCC)
of $\mdp^\policy$ if for any pair of states $\state, \state' \in B$, $\stationary{\policy}^t\fun{\state' \mid \state} > 0$ for some $t \in \N$. 
It is a \emph{bottom SCC} (BSCC) if (i) $B$ is a maximal SCC, and (ii) for each $\state \in B$, $\probtransitions_\policy\fun{B \mid \state} = 1$.
The unique \emph{stationary distribution} of $B$ is
$\stationary{\policy} \in \distributions{B}$, defined as $\stationary{\policy}\fun{\state} = \expectedsymbol{\dot{\state} \sim \stationary{\policy}}{\probtransitions_{\policy}\fun{\state \mid \dot{\state}}} = \lim_{T \to \infty} \frac{1}{T} \sum_{t = 0}^{T} \stationary{\policy}^{t}\fun{\state \mid \state_{\bot}}$ for any $\state_{\bot} \in B$ \citepAR{BK08}.
An MDP $\mdp$ is \emph{ergodic} under the policy $\policy$ if the state space of $\mdp^\policy$ consists of a unique aperiodic BSCC. 
In that case, $\stationary{\policy} = \lim_{t \to \infty} \stationary{\policy}^t\fun{\sampledot \mid \state}$ for all $\state \in \states$.
\end{definition}

To provide such a stationary distribution over histories, we define a \emph{history unfolding} MDP, where the state space keeps track of the current history of $\pomdp$ during the interaction.
We then show that this history MDP is \emph{equivalent} to $\pomdp$ under $\latentpolicy$.

\subsection{History Unfolding}\label{appendix:unfolding}

Let us define the \emph{history unfolding} MDP $\indexof{\mdp}{\historydistribution}$, which consists of the tuple $\tuple{\indexof{\states}{\historydistribution}, \actions, \indexof{\probtransitions}{\historydistribution}, \indexof{\rewards}{\historydistribution}, \star, \discount}$, where:
\begin{itemize}
    \item the state space consists of the set of all the possible histories (i.e., sequence of actions and observations) that can be encountered in $\pomdp$, i.e., $\indexof{\states}{\historydistribution} = \histories \cup \set{\star, \hreset}$, which additionally embeds a special symbol $\star$ indicating that no observation has been perceived yet %
    with
    $\beliefupdate^{*}\fun{\star} = \diracimpulsesymbol_{\sinit}$, as well as a special reset state $\hreset$;
    \item the transition function maps the current history to the belief space to infer the distribution over the next possible observations, i.e.,
    \begin{align*}
    \indexof{\probtransitions}{\historydistribution}\fun{\history' \mid \history, \action} &= \expectedsymbol{\state \sim \beliefupdate^{*}\fun{\history}}\expectedsymbol{\state'\sim\probtransitions\fun{\sampledot \mid \state, \action}}\expectedsymbol{\observation' \sim \observationfn\fun{\sampledot \mid \state', \action}}\diracimpulse{\history'}{\history \cdot \action \cdot \observation'} \quad\quad \text{if } \beliefupdate^{*}\fun{\history} \neq \diracimpulsesymbol_{\sreset}, \text{ and}\\
    \indexof{\probtransitions}{\historydistribution}\fun{\history' \mid \history, \action} &= \beliefprobtransitions\fun{\diracimpulsesymbol_{\sreset} \mid \diracimpulsesymbol_{\sreset}, \action} \cdot \diracimpulsesymbol_{\hreset}\fun{\history'} + \beliefprobtransitions\fun{\diracimpulsesymbol_{\sinit} \mid \diracimpulsesymbol_{\sreset}, \action} \cdot \diracimpulse{\history'}{\star} \quad \; \text{otherwise},
    \end{align*}
    where
    $\history \cdot \action \cdot \observation'$ is the concatenation of $\action$, $\observation'$ with the history $\history = \historytuple{\action}{\observation}{T}$, resulting in the history $\tuple{\seq{\action}{T}, \indexof{\observation}{1:T+1}}$ so that $\action_T = \action$ and $\observation_{T+1} = \observation'$; and
    \item the reward function maps the history to the belief space as well, which enables to infer the expected rewards obtained in the states over the this belief, i.e., $\indexof{\rewards}{\historydistribution}\fun{\history, \action} = \expectedsymbol{\state \sim \beliefupdate^{*}\fun{\history}} \rewards\fun{\state, \action}$.
\end{itemize}

We now aim at showing that, under the latent policy $\latentpolicy$, the POMDP $\pomdp$ and the MDP $\indexof{\mdp}{\historydistribution}$ are \emph{equivalent}. 
More formally, we are looking for an equivalence relation between two probabilistic models, so that the latter induce the same behaviors, or in other words, the same expected return.
We formalize this equivalence relation as a \emph{stochastic bisimulation} between $\beliefmdp$ (that we know being an MDP formulation of $\pomdp$) and $\indexof{\mdp}{\historydistribution}$.

\begin{definition}[Bisimulation]
Let $\mdp = \mdptuple$ be an MDP.
A stochastic \emph{bisimulation} $\equiv$
on $\mdp$ is a behavioral equivalence between states $\state_1, \state_2 \in \states$ 
so that, $\state_1 \equiv \state_2$ iff
\begin{enumerate}
    \item $\rewards(\state_1, \action) = \rewards(\state_2, \action)$, and
    \item $\probtransitions(T \mid \state_1, \action) = \probtransitions(T \mid \state_2, \action)$,
\end{enumerate}
for each action $\action \in \actions$ and equivalence class $T \in \states / \equiv$.
\end{definition}
Properties of bisimulation include trajectory equivalence and the equality of their optimal expected return \citep{DBLP:conf/popl/LarsenS89,DBLP:journals/ai/GivanDG03}.
The relation can be extended to compare two MDPs by considering the disjoint union of their state space.
\begin{lemma}\label{lem:bisim}
Let $\pomdp$ be the POMDP of Lemma~\ref{lem:extended-stationary-histories}, and $\latentpolicy \colon \latentbeliefs \to \distributions{\actions}$ be a latent policy conditioned on the beliefs of a latent space model of $\pomdp$. 
Define the stationary policy $\latentpolicy^{\clubsuit} \colon \indexof{\states}{\historydistribution} \to \distributions{\actions}$ for $\indexof{\mdp}{\historydistribution}$ as ${\latentpolicy}^{\clubsuit}\fun{\sampledot \mid \history} = \latentpolicy\fun{\sampledot \mid \beliefencoder^{*}\fun{\history}}$, 
and the policy ${\latentpolicy}^{\diamondsuit}$ for $\beliefmdp$ encoded by the Mealy machine detailed in Example~\ref{ex:memory-latent-policy}.
Then, $\indexof{\mdp}{\historydistribution}^{{\latentpolicy}^{\clubsuit}}$ and $\beliefmdp^{{\latentpolicy}^{\diamondsuit}}$ are in stochastic bisimulation.
\end{lemma}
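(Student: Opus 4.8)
The plan is to exhibit an explicit equivalence relation on the disjoint union of the state spaces of $\indexof{\mdp}{\historydistribution}^{{\latentpolicy}^{\clubsuit}}$ and $\beliefmdp^{{\latentpolicy}^{\diamondsuit}}$ that pairs each history with the couple $\tuple{\beliefupdate^{*}\fun{\history}, \beliefencoder^{*}\fun{\history}}$ it induces, and then to verify the two defining conditions of a stochastic bisimulation by unwinding the definitions. Concretely, let $\equiv$ be the smallest equivalence relation over $\indexof{\states}{\historydistribution} \sqcup \fun{\beliefs \times \latentbeliefs}$ such that $\history \equiv \tuple{\belief, \latentbelief}$ whenever $\belief = \beliefupdate^{*}\fun{\history}$ and $\latentbelief = \beliefencoder^{*}\fun{\history}$, with the conventions $\beliefupdate^{*}\fun{\star} = \diracimpulsesymbol_{\sinit}$, $\beliefencoder^{*}\fun{\star} = \diracimpulsesymbol_{\zinit}$ for the ``no observation yet'' symbol, and $\beliefupdate^{*}\fun{\hreset} = \diracimpulsesymbol_{\sreset}$, $\beliefencoder^{*}\fun{\hreset} = \latentstate_{\mathsf{reset}}$ for the reset state. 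The initial states of the two chains, $\star$ and $\tuple{\binit, \diracimpulsesymbol_{\zinit}} = \tuple{\diracimpulsesymbol_{\sinit}, \diracimpulsesymbol_{\zinit}}$, are then related, so once $\equiv$ is shown to be a bisimulation the two chains induce the same trajectory distribution and, in particular, the same expected return --- which is exactly the equivalence invoked in the proof of Lemma~\ref{lem:extended-stationary-histories}. The key preliminary observation is that the recursive definitions of $\beliefupdate^{*}$ and $\beliefencoder^{*}$ give $\beliefupdate^{*}\fun{\history \cdot \action \cdot \observation'} = \beliefupdate\fun{\beliefupdate^{*}\fun{\history}, \action, \observation'}$ and $\beliefencoder^{*}\fun{\history \cdot \action \cdot \observation'} = \beliefencoder\fun{\beliefencoder^{*}\fun{\history}, \action, \observation'}$; hence the history unfolding tracks the belief couple along every trajectory, every $\equiv$-class $C$ is indexed by a reachable couple $\tuple{\belief', \latentbelief'}$ of $\beliefmdp^{{\latentpolicy}^{\diamondsuit}}$, and $C$ consists of that single couple together with all histories $\history'$ satisfying $\beliefupdate^{*}\fun{\history'} = \belief'$ and $\beliefencoder^{*}\fun{\history'} = \latentbelief'$.

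For the reward condition, recall that both models are Markov chains, so a state's reward is the policy-averaged reward. For a related pair $\history \equiv \tuple{\belief, \latentbelief}$, unwinding ${\latentpolicy}^{\clubsuit}$ and $\indexof{\rewards}{\historydistribution}$ yields $\expectedsymbol{\action \sim \latentpolicy\fun{\sampledot \mid \beliefencoder^{*}\fun{\history}}} \expectedsymbol{\state \sim \beliefupdate^{*}\fun{\history}} \rewards\fun{\state, \action}$ on the history side, while unwinding $\mealyaction{\latentpolicy}$ and $\beliefrewards$ (Example~\ref{ex:memory-latent-policy}) yields $\expectedsymbol{\action \sim \latentpolicy\fun{\sampledot \mid \latentbelief}} \expectedsymbol{\state \sim \belief} \rewards\fun{\state, \action}$ on the belief-MDP side; these agree once $\belief = \beliefupdate^{*}\fun{\history}$ and $\latentbelief = \beliefencoder^{*}\fun{\history}$ are substituted. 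The states $\hreset$ and $\tuple{\diracimpulsesymbol_{\sreset}, \latentstate_{\mathsf{reset}}}$ (resp. $\star$ and $\tuple{\diracimpulsesymbol_{\sinit}, \diracimpulsesymbol_{\zinit}}$) inherit the reward of $\sreset$ (resp. $\sinit$) identically on both sides.

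For the transition condition, fix a class $C$ indexed by $\tuple{\belief', \latentbelief'}$ and a related pair $\history \equiv \tuple{\belief, \latentbelief}$, first with $\belief \neq \diracimpulsesymbol_{\sreset}$. Since $\tuple{\belief', \latentbelief'}$ is the unique state of $\beliefmdp^{{\latentpolicy}^{\diamondsuit}}$ lying in $C$ and a disjoint union has no cross transitions, pushing the countable sum inside and using the preliminary observation,
\begin{align*}
\indexof{\probtransitions}{\historydistribution}^{{\latentpolicy}^{\clubsuit}}\fun{C \mid \history}
&= \sum_{\history'' \in C} \expectedsymbol{\action \sim \latentpolicy\fun{\sampledot \mid \latentbelief},\, \state \sim \belief} \expectedsymbol{\state' \sim \probtransitions\fun{\sampledot \mid \state, \action},\, \observation' \sim \observationfn\fun{\sampledot \mid \state', \action}} \diracimpulse{\history''}{\history \cdot \action \cdot \observation'} \\
&= \expectedsymbol{\action \sim \latentpolicy\fun{\sampledot \mid \latentbelief},\, \state \sim \belief} \expectedsymbol{\state' \sim \probtransitions\fun{\sampledot \mid \state, \action},\, \observation' \sim \observationfn\fun{\sampledot \mid \state', \action}} \diracimpulse{\belief'}{\beliefupdate\fun{\belief, \action, \observation'}} \cdot \diracimpulse{\latentbelief'}{\beliefencoder\fun{\latentbelief, \action, \observation'}},
\end{align*}
where $\sum_{\history'' \in C} \diracimpulse{\history''}{\history \cdot \action \cdot \observation'} = \1\fun{\history \cdot \action \cdot \observation' \in C} = \diracimpulse{\belief'}{\beliefupdate\fun{\belief, \action, \observation'}} \cdot \diracimpulse{\latentbelief'}{\beliefencoder\fun{\latentbelief, \action, \observation'}}$. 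On the belief-MDP side, Definition~\ref{def:markov-chain} together with the Mealy machine gives $\beliefprobtransitions^{{\latentpolicy}^{\diamondsuit}}\fun{C \mid \tuple{\belief, \latentbelief}} = \expectedsymbol{\action \sim \latentpolicy\fun{\sampledot \mid \latentbelief}} \mealyupdate{\latentpolicy}\fun{\latentbelief' \mid \belief, \latentbelief, \action, \belief'} \cdot \beliefprobtransitions\fun{\belief' \mid \belief, \action}$, and by the very definition of $\mealyupdate{\latentpolicy}$ (the conditional-probability rule in Example~\ref{ex:memory-latent-policy}) the product $\mealyupdate{\latentpolicy}\fun{\latentbelief' \mid \belief, \latentbelief, \action, \belief'} \cdot \beliefprobtransitions\fun{\belief' \mid \belief, \action}$ equals $\expectedsymbol{\state \sim \belief} \expectedsymbol{\state' \sim \probtransitions\fun{\sampledot \mid \state, \action},\, \observation' \sim \observationfn\fun{\sampledot \mid \state', \action}} \diracimpulse{\latentbelief'}{\beliefencoder\fun{\latentbelief, \action, \observation'}} \cdot \diracimpulse{\belief'}{\beliefupdate\fun{\belief, \action, \observation'}}$, so the two transition masses into $C$ coincide. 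When $\belief = \diracimpulsesymbol_{\sreset}$, the ``otherwise'' branch of $\indexof{\probtransitions}{\historydistribution}$ sends $\history$ into the class of $\hreset$ with probability $\beliefprobtransitions\fun{\diracimpulsesymbol_{\sreset} \mid \diracimpulsesymbol_{\sreset}, \action}$ and into the class of $\star$ with probability $\beliefprobtransitions\fun{\diracimpulsesymbol_{\sinit} \mid \diracimpulsesymbol_{\sreset}, \action}$, which matches the transitions of $\beliefmdp^{{\latentpolicy}^{\diamondsuit}}$ out of $\tuple{\diracimpulsesymbol_{\sreset}, \latentstate_{\mathsf{reset}}}$, using the reset clause $\mealyupdate{\latentpolicy}\fun{\sampledot \mid \belief, \latentbelief, \action, \diracimpulsesymbol_{\sreset}} = \diracimpulsesymbol_{\latentstate_{\mathsf{reset}}}$ and the initial memory $\diracimpulsesymbol_{\zinit}$ entered right after a reset. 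Hence $\equiv$ satisfies both conditions and is a stochastic bisimulation between $\beliefmdp^{{\latentpolicy}^{\diamondsuit}}$ and $\indexof{\mdp}{\historydistribution}^{{\latentpolicy}^{\clubsuit}}$.

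I expect the main obstacle to be bookkeeping rather than any single computation: one must check that the ``otherwise'' branches of $\indexof{\probtransitions}{\historydistribution}$ and of $\mealyupdate{\latentpolicy}$ --- the reset transitions and the $\star$ ``no observation yet'' symbol --- are mapped consistently by $\equiv$, and one must handle the fact that an $\equiv$-class can contain many distinct histories, so the transition mass into a class is genuinely a sum over histories that has to collapse to the single belief-MDP transition mass; this collapse is precisely what the determinism of $\beliefupdate$ and $\beliefencoder$ in their $\fun{\action, \observation'}$ arguments provides. Everything else --- verifying $\equiv$ is an equivalence, that matched initial states are related, and the two local checks above --- is routine unwinding of Definition~\ref{def:markov-chain}, Definition~\ref{def:stationary-distr}, and Example~\ref{ex:memory-latent-policy}.
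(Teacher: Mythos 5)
Your proposal matches the paper's proof in all essentials: you define the same relation pairing each history $\history$ with $\tuple{\beliefupdate^{*}\fun{\history}, \beliefencoder^{*}\fun{\history}}$, observe that each equivalence class projects onto a single belief/latent-belief couple, and verify the reward and transition conditions by the same unwinding of the Mealy-machine and history-unfolding definitions. The only (welcome) addition is your explicit treatment of the reset and $\star$ branches, which the paper's proof leaves implicit.
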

\begin{proof}
First, note that the MC $\beliefmdp^{\latentpolicy^\diamondsuit}$ is defined as the tuple $\tuple{\beliefs \times \latentbeliefs, \probtransitions_{\latentpolicy^{\diamondsuit}}, \rewards_{\latentpolicy^{\diamondsuit}}, \tuple{\binit, \latentbelief_I}, \discount}$ so that 
\begin{align*}
\probtransitions_{\latentpolicy^{\diamondsuit}}\fun{\belief', \latentbelief' \mid \belief, \latentbelief} &= \expectedsymbol{\action \sim \latentpolicy\fun{\sampledot \mid \latentbelief}} \mealyupdate{\latentpolicy}\fun{\latentbelief' \mid \belief, \latentbelief, \action, \belief'} \cdot \beliefprobtransitions\fun{\belief' \mid \belief, \action} \\
&= \expectedsymbol{\action \sim \latentpolicy\fun{\sampledot \mid \latentbelief}}\expectedsymbol{\state \sim \belief}\expectedsymbol{\state' \sim \probtransitions\fun{\sampledot \mid \state, \action}}\expectedsymbol{\observation' \sim \observationfn\fun{\sampledot \mid \state, \action}} 
\diracimpulse{\latentbelief'}{\beliefencoder\fun{\latentbelief, \observation, \action}} \cdot
\diracimpulse{\belief'}{\beliefupdate\fun{\belief, \observation, \action}}, \text{and} \\
\rewards_{\latentpolicy^{\diamondsuit}}({{\belief, \latentbelief}}) &= \expectedsymbol{\action \sim \latentpolicy\fun{\sampledot \mid \latentbelief}}\expectedsymbol{\state \sim \belief}\rewards\fun{\state, \action}. \tag{cf. Definition~\ref{def:markov-chain}}
\end{align*}
Define the relation
$\pomdpbisim$ as the set $\set{\big\langle{\history, \langle\belief, \latentbelief\rangle} \big\rangle\mid \beliefupdate^{*}\fun{\history} = \belief \text{ and } \beliefencoder^{*}\fun{\history} = \latentbelief} \subseteq \indexof{\states}{\historydistribution} \times \beliefs \times \latentbeliefs$.
We show that $\pomdpbisim$ is a bisimulation relation between the states of $\indexof{\mdp}{\historydistribution}^{\latentpolicy^{\clubsuit}}$ and $\beliefmdp^{{\latentpolicy}^{\diamondsuit}}$. 
Let $\history \in \indexof{\states}{\historydistribution}$, $\belief \in \beliefs$, and $\latentbelief \in \latentbeliefs$ so that $\history \pomdpbisim \langle \belief, \latentbelief \rangle$:
\begin{enumerate}
    \item \label{enum:reward-bisim}$\rewards_{\latentpolicy^{\clubsuit}}\fun{\history} = \expectedsymbol{\action \sim \latentpolicy\fun{\sampledot \mid \beliefencoder^{*}\fun{\history}}}\expectedsymbol{\state \sim \beliefupdate^{*}\fun{\history}}\rewards\fun{\state, \action} = \expectedsymbol{\action \sim \latentpolicy\fun{\sampledot \mid \latentbelief}}\expectedsymbol{\state \sim \belief}\rewards\fun{\state, \action} = \rewards_{\latentpolicy^{\diamondsuit}}({\belief, \latentbelief})$;
    \item \label{enum:transition-bisim}Each equivalence class $T \in \fun{\indexof{\states}{\historydistribution} \times \beliefs \times \latentbeliefs} / \pomdpbisim$ consists of histories sharing the same belief and latent beliefs.
    Then, each equivalence class $T$ can be associated to a single belief and latent belief pair.
    Concretely, let $\belief' \in \beliefs$, $\latentbelief' \in \latentbeliefs$, an equivalence class of $\pomdpbisim$ has the form $T = \left[\langle\belief', \latentbelief' \rangle\right]_{\pomdpbisim}$ so that
    \begin{enumerate}
        \item the projection of $\left[\langle\belief', \latentbelief' \rangle\right]_{\pomdpbisim}$ on $\indexof{\states}{\historydistribution}$ is the set $\set{\history' \in \indexof{\states}{\historydistribution} \mid \beliefupdate^{*}\fun{\history'} = \belief' \text{ and } \beliefencoder^{*}\fun{\history'} = \latentbelief'}$, and
        \item the projection of $\left[\langle\belief', \latentbelief' \rangle\right]_{\pomdpbisim}$ on the state space of $\beliefmdp^{\latentpolicy^{\diamondsuit}}$ is merely the pair $\langle\belief', \latentbelief'\rangle$.
    \end{enumerate}
    Therefore, 
    \begin{align*}
        &{\probtransitions}_{\latentpolicy^{\clubsuit}}\fun{\left[ \langle \belief', \latentbelief' \rangle \right]_{\pomdpbisim} \mid \history}\\
        =& \int_{\left[ \langle \belief', \latentbelief' \rangle \right]_{\pomdpbisim}}\, \expectedsymbol{\action \sim \latentpolicy\fun{\sampledot \mid \beliefencoder^{*}\fun{\history}}} \expectedsymbol{\state \sim \beliefupdate^{*}} \expectedsymbol{\state' \sim \probtransitions\fun{\sampledot \mid \state, \action}} \expectedsymbol{\observation' \sim \observationfn\fun{\sampledot \mid \state', \action}} \diracimpulse{\history'}{\history \cdot \action \cdot \observation'} \; d\history' \\
        =& 
        \int_{\indexof{\states}{\historydistribution}}\, \expectedsymbol{\action \sim \latentpolicy\fun{\sampledot \mid \beliefencoder^{*}\fun{\history}}} \expectedsymbol{\state \sim \beliefupdate^{*}\fun{\history}} \expectedsymbol{\state' \sim \probtransitions\fun{\sampledot \mid \state, \action}} \expectedsymbol{\observation' \sim \observationfn\fun{\sampledot \mid \state', \action}} \diracimpulse{\history'}{\history \cdot \action \cdot \observation'} \cdot \diracimpulse{\belief'}{\beliefupdate^{*}\fun{\history'}} \cdot \diracimpulse{\latentbelief'}{\beliefencoder^{*}\fun{\history'}} \; d\history' \tag{by definition of $\left[ \langle \belief', \latentbelief' \rangle \right]_{\pomdpbisim}$} \\
        =& 
        \expectedsymbol{\action \sim \latentpolicy\fun{\sampledot \mid \beliefencoder^{*}\fun{\history}}} \expectedsymbol{\state \sim \beliefupdate^{*}\fun{\history}} \expectedsymbol{\state' \sim \probtransitions\fun{\sampledot \mid \state, \action}} \expectedsymbol{\observation' \sim \observationfn\fun{\sampledot \mid \state', \action}}  \diracimpulse{\belief'}{\beliefupdate^{*}\fun{\history \cdot \action \cdot \observation'}} \cdot \diracimpulse{\latentbelief'}{\beliefencoder^{*}\fun{\history \cdot \action \cdot \observation'}} \\
        =& 
        \expectedsymbol{\action \sim \latentpolicy\fun{\sampledot \mid \latentbelief}} \expectedsymbol{\state \sim \belief} \expectedsymbol{\state' \sim \probtransitions\fun{\sampledot \mid \state, \action}} \expectedsymbol{\observation' \sim \observationfn\fun{\sampledot \mid \state', \action}}  \diracimpulse{\belief'}{\beliefupdate\fun{\belief, \action, \observation'}} \cdot \diracimpulse{\latentbelief'}{\beliefencoder\fun{\latentbelief, \action, \observation'}} \tag{since $\history \pomdpbisim \langle \belief, \latentbelief \rangle$} \\
        =& \probtransitions_{\latentpolicy^{\diamondsuit}}\fun{\belief', \latentbelief' \mid \belief, \latentbelief} \\
        =& \probtransitions_{\latentpolicy^{\diamondsuit}}\fun{\left[ \langle \belief', \latentbelief' \rangle \right]_{\pomdpbisim} \mid \belief, \latentbelief}
    \end{align*}
\end{enumerate}
By \ref{enum:reward-bisim} and \ref{enum:transition-bisim}, we have that $\indexof{\mdp}{\historydistribution}$ and $\beliefmdp$ are in bisimulation under the equivalence relation $\pomdpbisim$, when the policies $\latentpolicy^{\clubsuit}$ and $\latentpolicy^{\diamondsuit}$ are respectively executed in the two models.
\end{proof}
\begin{corollary}\label{corr:expected-return-bisim}
The agent behaviors, formulated through the expected return, that are obtained by executing the policies respectively in the two models are the same:
$\expectedsymbol{\latentpolicy^{\clubsuit}}^{\indexof{\mdp}{\historydistribution}}\left[ \sum_{t = 0}^{\infty} \discount^{t} \cdot \indexof{\rewards}{\historydistribution}\fun{\indexof{\action}{0:t}, \indexof{\observation}{1: t}} \right] = 
\expectedsymbol{\latentpolicy^{\diamondsuit}}^{\beliefmdp}\left[ \sum_{t = 0}^{\infty} \discount^{t} \cdot \beliefrewards\fun{\belief_t, \action_t} \right]$.
\end{corollary}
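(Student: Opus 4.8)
The plan is to obtain the Corollary directly from the stochastic bisimulation established in Lemma~\ref{lem:bisim}, together with the standard fact --- recalled in the definition of bisimulation --- that bisimilar states induce the same behaviour, and hence the same expected discounted return. Since both Markov chains $\indexof{\mdp}{\historydistribution}^{\latentpolicy^{\clubsuit}}$ and $\beliefmdp^{\latentpolicy^{\diamondsuit}}$ have a singleton action space, ``optimal expected return'' degenerates to the unique value of each state, and the two sides of the claimed identity are precisely the values of the respective \emph{initial} states of these two chains. So it is enough to (i) check that the initial states are $\pomdpbisim$-related, and (ii) argue that $\pomdpbisim$-related states have equal value.

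For (i): the initial state of $\indexof{\mdp}{\historydistribution}$ is the empty-history symbol $\star$, and by construction of the history unfolding and of the Mealy machine of Example~\ref{ex:memory-latent-policy} we have $\beliefupdate^{*}\fun{\star} = \diracimpulsesymbol_{\sinit} = \binit$ and $\beliefencoder^{*}\fun{\star} = \diracimpulsesymbol_{\zinit} = \latentbelief_I$; hence $\star \pomdpbisim \tuple{\binit, \latentbelief_I}$, and $\tuple{\binit, \latentbelief_I}$ is exactly the initial state of $\beliefmdp^{\latentpolicy^{\diamondsuit}}$. For (ii) I would spell value-equality out as a coupling: starting from the related initial states, I would inductively construct a coupling of the trajectory measures $\Prob^{\indexof{\mdp}{\historydistribution}}_{\latentpolicy^{\clubsuit}}$ and $\Prob^{\beliefmdp}_{\latentpolicy^{\diamondsuit}}$ under which, almost surely, the $t$-th state $\history_t$ of the first chain and the $t$-th state $\tuple{\belief_t, \latentbelief_t}$ of the second satisfy $\history_t \pomdpbisim \tuple{\belief_t, \latentbelief_t}$. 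The one-step extension of this coupling is exactly item~\ref{enum:transition-bisim} in the proof of Lemma~\ref{lem:bisim}: the two transition kernels push equal mass onto every $\pomdpbisim$-equivalence class, so their images on the quotient coincide and can be glued into a single kernel preserving the relation. Along any coupled run, item~\ref{enum:reward-bisim} gives $\rewards_{\latentpolicy^{\clubsuit}}\fun{\history_t} = \rewards_{\latentpolicy^{\diamondsuit}}\fun{\belief_t, \latentbelief_t}$ for each $t$ (and the chain rewards already average out the action draws, reconciling the $\indexof{\rewards}{\historydistribution}$ and $\beliefrewards$ forms appearing in the statement); multiplying by $\discount^{t}$, summing, and taking expectations over the coupling yields the desired equality.

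The only real difficulty is measure-theoretic bookkeeping: the two chains live on uncountable (standard Borel) state spaces, so ``equal mass on each equivalence class'' must be read with respect to the quotient $\sigma$-algebra generated by $\beliefupdate^{*}$ and $\beliefencoder^{*}$, and the step-wise coupling must be assembled measurably, via a measurable disintegration/selection. This is exactly the setting in which the cited bisimulation theory is developed \cite{DBLP:conf/popl/LarsenS89,DBLP:journals/ai/GivanDG03}, so I would either invoke its lifting lemma off the shelf or, for a self-contained argument, trade the coupling for a finite-horizon induction: show $\expectedsymbol{\latentpolicy^{\clubsuit}}^{\indexof{\mdp}{\historydistribution}}\bigl[\sum_{t=0}^{T}\discount^{t}\rewards_{\latentpolicy^{\clubsuit}}\fun{\history_t}\bigr] = \expectedsymbol{\latentpolicy^{\diamondsuit}}^{\beliefmdp}\bigl[\sum_{t=0}^{T}\discount^{t}\rewards_{\latentpolicy^{\diamondsuit}}\fun{\belief_t, \latentbelief_t}\bigr]$ for every $T$ by induction using only items~\ref{enum:reward-bisim}--\ref{enum:transition-bisim} at each step, and then let $T \to \infty$, the tail vanishing because $\discount < 1$ and the rewards are bounded. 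Everything else is a routine unrolling of Lemma~\ref{lem:bisim}.
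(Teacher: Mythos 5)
Your proposal is correct and follows the same route as the paper: both derive the corollary from Lemma~\ref{lem:bisim} via the standard fact that bisimilar states have equal (here, since the action spaces are singletons, unique) expected returns. The only difference is one of detail: the paper simply cites \cite{DBLP:conf/popl/LarsenS89,DBLP:journals/ai/GivanDG03} for the implication ``bisimulation $\Rightarrow$ equal returns,'' whereas you additionally check that the two initial states are $\pomdpbisim$-related (a step the paper leaves implicit) and sketch a self-contained justification via a trajectory coupling or, equivalently, a finite-horizon induction with a vanishing discounted tail. Your version is slightly more complete; nothing in it is at odds with the paper's argument.
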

\begin{proof}
Follows directly from \citep{DBLP:conf/popl/LarsenS89,DBLP:journals/ai/GivanDG03}: the bisimulation relation implies 
the equality of the maximum expected return in the two models, where the maximum is taken over the set of all stationary policies of the two models.
Since we consider MCs and not MDPs, the models are purely stochastic, so there is no nondeterministic choice linked to the choice of action, and then there is only one possible expected return, which yields the result.
\end{proof}
Note that we omitted the super script of $\latentpolicy^{\clubsuit}$ in the main text; we directly considered $\latentpolicy$ as a policy conditioned over histories, by using the exact same definition.

\subsection{Existence of a Stationary Distribution over Histories}\label{appendix:proof-stationary}
Now that we have proven that the history unfolding is equivalent to the belief MDP, we now have all the ingredients to prove Lemma~\ref{lem:extended-stationary-histories}.
\begin{proof}
By definition of $\indexof{\mdp}{\historydistribution}$, the execution of $\latentpolicy^{\clubsuit}$ is guaranteed to remain an episodic process. 
Every episodic process is ergodic \citep{DBLP:conf/nips/Huang20}, there is thus a unique stationary distribution $\historydistribution_{\latentpolicy^{\clubsuit}} = \lim_{t \to \infty} \stationary{\latentpolicy^{\clubsuit}}^{t}\fun{\sampledot \mid \star}$ defined over the state space of $\indexof{\mdp}{\historydistribution}$.
This stationary distribution is thus the limiting distribution over the histories of $\pomdp$ when the latter operates under $\latentpolicy$, or equivalently, the limiting distribution of the MC $\beliefmdp^{\latentpolicy^{\diamondsuit}}$ by Corollary~\ref{corr:expected-return-bisim}.
\end{proof}

\section{Discrete Latent Variables and Temperatures}\label{appendix:temperature}
As mentioned in Section~\ref{sec:latent-space-modeling}, the optimization process of the WAE-MDP relies on a temperature parameter, $\temperature \in \mathopen(0, 1\mathclose)$.
The latter controls the continuity of the latent space learned.
The purpose of the parameter is primarily to learn a discrete latent space model: precisely, we use continuous relaxation of discrete random variables \citepAR{DBLP:conf/iclr/MaddisonMT17}.
This is essentially the Bernoulli version of the Gumbel softmax trick \citepAR{Jang2017CategoricalGumbel-softmax}.
The technique yields re-parameterizable and convex densities, which is compliant with stochastic gradient descent. The zero-temperature limit (i.e., passing from the continuous to the discrete setting) is theoretically enabled via simple rounding of continuous random variables. Furthermore, the logits of discrete densities are guaranteed to be identical to those of their relaxed counterparts.

Alternatively, one could use the \emph{straight-through gradients estimator} \citepAR{DBLP:journals/corr/BengioLC13}, as used by \citetAR{DBLP:conf/nips/OordVK17,DBLP:conf/icml/FajtlAMR20} and \citet{hafner2021mastering}.
This consists in using discrete variables and non-differentiable functions in the forward pass of the input through the neural networks, while continuous variables and surrogate functions are used in the backward pass, i.e., during the backpropagation of the gradients.
This yields low-variance, but biased gradients.
In contrast, continuous relaxations allow to interpolate between these phenomena: a higher temperature produces low-variance but biased gradients, while lowering the temperature to zero increases the variance but ends up producing unbiased gradients. 

Therefore, at any time, the discrete densities (zero-temperature limit) of the WAE-MDP are used, except when the gradients of the objective are computed, where their relaxed counterpart are used $(\temperature > 0)$.
As such, the WBU training procedure follows the same principle as WAE-MDPs: using continuous relaxation of the discrete random variables (precisely, multivariate Bernoullis) when the belief loss (Eq.~\ref{eq:dkl}) is minimized while the actual discrete (autoregressive) density is used otherwise.
We chose temperature values by following the guidelines from the original paper \citepAR{DBLP:conf/iclr/MaddisonMT17}. In the latter, it is mentioned that setting up annealing schemes (to the zero temperature limit) is a good practice but is not necessary for obtaining good results, which is confirmed experimentally in the experimental evaluation of WAE-MDPs \citep[Appendix~B.8]{delgrange2023wasserstein}.

\section{Value Difference Bounds}~\label{appendix:value-diff-bounds}
This section is dedicated to proving Theorems~\ref{thm:value-diff-bounds} and~\ref{thm:value-diff-bound-representation}.
Both Theorems bound the value difference of histories, in the original and latent space models via our local and belief losses, to provide model and representation quality guarantees.
Before proving the Theorems, we first formally define the \emph{value function} of any POMDP, and then illustrate intuitively the meaning of each loss used to bound the value differences.
\subsection{Value Functions}
We start by formally defining the value function of any MDP.
\begin{definition}[Value function]\label{def:values}
Let $\mdp = \mdptuple$ be an MDP, and $\policy$ be a policy for $\mdp$.
Write $\mdp[\state]$ for the MDP obtained by replacing $\sinit$ by $\state \in \states$.
Then, the value of the state $\state \in \states$ is defined as the expected return obtained from that state by running $\policy$, i.e., $\values{\policy}{}{\state} = \expectedsymbol{\policy}^{\mdp[\state]}\left[\sum_{t = 0}^\infty \discount^t \cdot \rewards\fun{\state_t, \action_t}\right]$.
Let $\mdp^{\policy} = \langle \states_{\policy}, \probtransitions_{\policy}, \rewards_{\policy}, \sinit, \discount \rangle$ be the Markov Chain induced by $\policy$ (cf. Definition~\ref{def:markov-chain}).
Then, the value function can be defined as the unique solution of the Bellman's equation \citep{DBLP:books/wi/Puterman94}:
$
    \values{\policy}{}{\state} = \rewards_{\policy}\fun{\state} + \expected{\state' \sim \probtransitions_{\policy}\fun{\state}} {\discount \cdot \values{\policy}{}{\state'}}.
$
The typical goal of an RL agent is to learn a policy $\policy^{\star}$ that maximizes the value of the initial state of $\mdp$: $\max_{\policy^{\star}} \values{\policy^{\star}}{}{\sinit}$.
\end{definition}

\begin{property}[POMDP values]\label{prop:pomdp-values}
We obtain the value function of any POMDP $\pomdp = \pomdptuple$ by considering the values obtained in its belief MDP $\beliefmdp = \beliefmdptuple$.
Therefore, the value of any history $\history \in \histories$ is obtained by mapping $\history$ to the belief space: let $\policy$ be a policy conditioned on the beliefs of $\pomdp$, then we write $\values{\policy}{}{\history}$ for $\values{\policy}{}{\beliefupdate^{*}\fun{\history}}$.
Therefore, we have in particular for any latent policy $\latentpolicy \colon \latentbeliefs \to \distributions{\actions}$:
\begin{align*}
\values{\latentpolicy}{}{\history}
&= \E{}_{\latentpolicy^{\diamondsuit}}^{\beliefmdp[\beliefupdate^{*}\fun{\history}]}\left[ \sum_{t = 0}^{\infty} \discount^{t} \cdot \beliefrewards\fun{\belief_t, \action_t}\right] \tag{cf. Lemma~\ref{lem:bisim}  for definitions of $\latentpolicy^{\diamondsuit}$ and $\latentpolicy^{\clubsuit}$}\\
&= \E{}_{\latentpolicy^{\clubsuit}}^{\indexof{\mdp}{\historydistribution}[\history]}\left[ \sum_{t=0}^{\infty} \discount^{t} \cdot \indexof{\rewards}{\historydistribution}\fun{\history_t, \action_t}\right] \tag{cf. Corollary~\ref{corr:expected-return-bisim}}\\
&=\expected{\action \sim \latentpolicy^{\clubsuit}\fun{\sampledot \mid \history}}{\indexof{\rewards}{\historydistribution}\fun{\history, \action} + \expected{\history' \sim \indexof{\probtransitions}{\historydistribution}\fun{\sampledot \mid \history, \action}}{\discount \cdot \values{\latentpolicy}{}{\history'}}} \tag{by Definition~\ref{def:values}} \\
&=\expected{\action \sim \latentpolicy\fun{\sampledot \mid \beliefencoder^{*}\fun{\history}}}{\indexof{\rewards}{\historydistribution}\fun{\history, \action} + \expected{\history' \sim \indexof{\probtransitions}{\historydistribution}\fun{\sampledot \mid \history, \action}}{\discount \cdot \values{\latentpolicy}{}{\history'}}} \tag{by definition of $\latentpolicy^{\clubsuit}$} \\
&=\expectedsymbol{\action \sim \latentpolicy\fun{\sampledot \mid \beliefencoder^{*}\fun{\history}}} \, \expected{\state \sim \beliefupdate^{*}\fun{\history}}{\rewards\fun{\state, \action} + \expectedsymbol{\state' \sim \probtransitions\fun{\sampledot \mid \state, \action}}\expected{\observation' \sim \observationfn\fun{\sampledot \mid \state', \action}}{\discount \cdot \values{\latentpolicy}{}{\history \cdot \action \cdot \observation'}}}. \tag{by definition of $\indexof{\mdp}{\historydistribution}$}
\end{align*}
Similarly, we write $\latentvaluessymbol{\latentpolicy}{}\,$ for the values of a latent POMDP $\latentpomdp$.
\end{property}

\subsection{Local and Belief Losses}\label{appendix:losses}

\begin{figure}
\begin{subfigure}[c]{.45\textwidth}
    \centering
    \includegraphics[width=.875\textwidth]{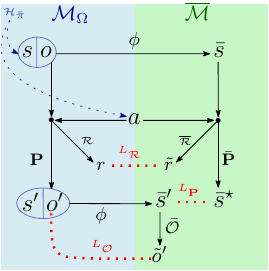}
    \caption{%
    Optimization of the latent space model parameters (i.e., $\latentrewards, \latentprobtransitions$, and $\latentobservationfn$) by minimizing local losses.
    }
    \label{subfig:latent-flow-local-losses}
\end{subfigure}
\hfill
\begin{subfigure}[c]{.45\textwidth}
    \centering
    \includegraphics[width=.875\textwidth]{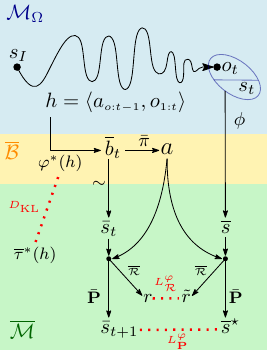}
    \caption{Optimization of the belief encoder $\varphi$ by minimizing the (proxy) belief loss, as well as the reward and transition regularizers.}
    \label{subfig:latent-flow-belief-losses}
\end{subfigure}
\caption{%
Latent flows used to compute the local and belief losses. Arrows represent (stochastic) mappings, the original state-observation (resp. latent state) space is spread along the blue (resp. green) area, and the latent belief space is spread along the yellow area.
Distances (and discrepancies) are depicted in red.
Notice that the blue area corresponds to the state-observation space $\states_{\observations}$, which is accessible during training (Assumption~\ref{assumption:access-state}).
}
\label{fig:latent-flow}
\end{figure}

Theorems~\ref{thm:value-diff-bounds} and~\ref{thm:value-diff-bound-representation} involve the minimization of \emph{local} ($\localrewardloss{}, \localtransitionloss{}, \observationloss{}$) and \emph{belief} ($\beliefloss{}, \onpolicytransitionloss{}, \onpolicyrewardloss{}$) losses.
We intuitively describe how these losses are minimized via the latent flows depicted in Fig.~\ref{fig:latent-flow}.

The procedure allowing to minimize the local losses is depicted in Fig.~\ref{subfig:latent-flow-local-losses}.
At each step, a state $\state$, an observation $\observation$ of $\state$, and an action $\action$ are drawn from the distribution $\historydistribution_{\latentpolicy}$ of experiences encountered while executing $\latentpolicy$.
First, $\tuple{\state, \observation}$ is mapped to the latent space via the state embedding function of the WAE-MDP: $\embed\fun{\state, \observation} = \latentstate$.
Then, the action $\action$ is executed both in the original and latent space models (respectively from $\tuple{\state, \observation}$ and $\latentstate$), which allows to quantify the distance between the next reward and transition produced in the two models. %
Finally, the original model transitions to the next state-observation pair $\tuple{\state', \observation'}$, and mapping it again to the latent space through $\embed\fun{\state', \observation'} = \latentstate'$ allows to quantify the distance between the original observation $\observation'$ and the one that is produced in the latent space model, from $\latentstate'$ via $\tilde{\observation}' \sim \latentobservationfn\fun{\sampledot \mid \latentstate'}$.

The procedure allowing to minimize the belief losses is depicted in Fig.~\ref{subfig:latent-flow-belief-losses}.
This time, the optimization is performed \emph{on-policy}, which means that it is performed while executing the policy in the original environment.
At time step $t \geq 1$, the current history $\history$ ends up in the observation $\observation_t$ of state $\state_t$.
First, the discrepancy between the latent belief obtained via our belief encoder $\latentbelief_t = \beliefencoder^{*}\fun{\history}$ and the one obtained via the true latent belief update function $\latentbelief' = \latentbeliefupdate^{*}\fun{\history}$ is evaluated.
Second, we compute the reward and transition regularizers by minimizing the distance between rewards and transitions produced from believed states $\latentstate_t \sim \latentbelief_t$ (i.e., states expected from the current belief $\latentbelief_t$) and those produced by mapping the current state-observation pair into the latent space, via $\embed\fun{\state_t, \observation_t} = \latentstate$, when the action $\action\sim\latentpolicy\fun{\sampledot\mid\latentbelief_t}$ is produced.
Finally, $\action$ is executed in the original environment and the process is repeated until the end of the episode.

\subsection{Warm Up: Some Wasserstein Properties}
In the following, we elaborate on properties and definitions related to the Wasserstein metrics that will be useful to prove the main claims.
In particular, Wasserstein can be reformulated as the maximum mean discrepancy of \emph{$1$-Lipschitz functions}.
The claim holds when the temperature is at the zero-limit, i.e., when the discrete variables and densities of the models are used (cf. Appendix~\ref{appendix:temperature}).
This further makes the distance metric $\latentdistance$ associated with the latent state space go to the discrete metric $\condition{\neq} \colon \measurableset \to \set{1, 0}$  \citep{delgrange2023wasserstein}, formally defined as $\condition{\neq}\fun{x1, x2} = 1$ iff $x_1 \neq x_2$.
\begin{definition}[Lipschitz continuity]
Let $\measurableset, \varmeasurableset$ be two measurable set and $f \colon \measurableset \to \varmeasurableset$ be a function mapping elements from $\measurableset$ to $\varmeasurableset$.
If otherwise specified, we consider that $f$ is real-valued function, i.e., $\varmeasurableset = \R$.
Assume that $\measurableset$ is equipped with a metric $\distance \colon \measurableset \to \mathopen[0, \infty\mathclose)$.
Then, given a constant $K \geq 0$, we say that $f$ is $K$-Lipschitz iff, for any $x_1, x_2 \in \measurableset$, $\abs{f\fun{x_1} - f\fun{x_2}} \leq K \cdot \distance\fun{x_1, x_2}$.
We write $\Lipschf{d}^K$ for the set of $K$-Lipschitz functions.
\end{definition}
\begin{definition}[Wasserstein dual]
The Kantorovich-Rubinstein duality \citep{Villani2009} allows formulating the Wasserstein distance between $P$ and $Q$ as
    $
        \wassersteindist{\distance}{P}{Q} = \sup_{f \in \Lipschf{\distance}^{\,1}} \left|{\expectedsymbol{x \sim P} f\fun{x} - \expectedsymbol{y \sim Q} f\fun{y}}\right|.
    $
\end{definition}

\begin{property}[Lipschitz constant]\label{prop:lipschitz-constant}
Let $f \colon \measurableset \to \R$, so that $\distance$ is a metric on $\measurableset$.
Assume that $f$ is $K$-Lipschitz, i.e., $f \in \Lipschf{\distance}^{K}$, then for any two distributions $P, Q \in \distributions{\measurableset}$, $\abs{\expectedsymbol{x_1 \sim P}f\fun{x_1} - \expectedsymbol{x_2 \sim Q} f\fun{x_2}} \leq K \cdot \wassersteindist{\distance}{P}{Q}$. %

In particular, for \textbf{any} bounded function $g \colon \measurableset \to Y$ with $Y \subseteq \R$, when the distance metric associated with $\measurableset$ is the discrete metric, i.e., $\distance = \condition{\neq}$, we have $\abs{\expectedsymbol{x_1 \sim P}g\fun{x_1} - \expectedsymbol{x_2 \sim Q} g\fun{x_2}} \leq K_Y \cdot \wassersteindist{\condition{\neq}}{P}{Q} = K_Y \cdot \dtv{P}{Q}$, where $K_Y \geq \sup_{x \in \measurableset} \abs{g\fun{x}}$ (see, e.g., \citealt[Sect.~6]{DBLP:conf/icml/GeladaKBNB19}, for a discussion).
\end{property}
Property~\ref{prop:lipschitz-constant} intuitively implies the emergence of the $\KV$ constant in the Theorem's inequality: we know that the latent value function is bounded by $\nicefrac{\sup_{\latentstate, a}\abs{\latentrewards_{\decoderparameter}\fun{\latentstate, \action}}}{1 - \discount}$, so given two distributions $P, Q$ over $\latentstates$, the maximum mean discrepancy of the latent value function is bounded by $\KV \cdot \wassersteindist{\latentdistance}{P}{Q}$ when the temperature goes to zero.

Finally, since the value difference is computed in expectation, we introduce the following useful property:

\begin{lemma}[Wasserstein in expectation]\label{lem:wasserstein-expecation}
For any $f \colon \varmeasurableset \times \measurableset \to \R$ so that $\measurableset$ is equipped with the metric $\distance$, consider the function $g_y \colon \measurableset \to \R$ defined as $g_y\fun{x} = f\fun{y, x}$.
Assume that for any $y \in \varmeasurableset$, $g_y$ is $K$-Lipschitz, i.e., $g_y \in \Lipschf{\distance}^K$. 
Then, let $\replaybuffer \in \distributions{\varmeasurableset}$ be a distribution over $\varmeasurableset$ and $P, Q \in \distributions{\measurableset}$ be two distributions over $\measurableset$, we have 
$\expectedsymbol{y \sim \replaybuffer} \abs{\expectedsymbol{ x_1 \sim P}{f\fun{y, x_1}} - \expectedsymbol{x_2 \sim Q}{f\fun{y, x_2}}} \leq K \cdot \wassersteindist{\distance}{P}{Q}$.
\end{lemma}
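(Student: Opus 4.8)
The plan is to reduce the statement to the pointwise bound already recorded in Property~\ref{prop:lipschitz-constant}. First I would fix an arbitrary $y \in \varmeasurableset$ and consider the single-variable slice $g_y \colon \measurableset \to \R$, $g_y\fun{x} = f\fun{y, x}$. By hypothesis $g_y \in \Lipschf{\distance}^{K}$, so Property~\ref{prop:lipschitz-constant} applied to $g_y$ together with the two distributions $P, Q \in \distributions{\measurableset}$ gives directly $\abs{\expectedsymbol{x_1 \sim P} f\fun{y, x_1} - \expectedsymbol{x_2 \sim Q} f\fun{y, x_2}} \leq K \cdot \wassersteindist{\distance}{P}{Q}$. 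The one observation that makes the lemma work is that this upper bound is a constant, independent of the choice of $y$.

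The second and final step is to integrate this inequality over $y \sim \replaybuffer$. Since the left-hand side is nonnegative and measurable in $y$ and the right-hand side does not depend on $y$, monotonicity of the expectation yields $\expectedsymbol{y \sim \replaybuffer}\abs{\expectedsymbol{x_1 \sim P}f\fun{y, x_1} - \expectedsymbol{x_2 \sim Q}f\fun{y, x_2}} \leq \expectedsymbol{y \sim \replaybuffer}\left[K \cdot \wassersteindist{\distance}{P}{Q}\right] = K \cdot \wassersteindist{\distance}{P}{Q}$, which is exactly the claim. If one prefers a self-contained argument not routed through Property~\ref{prop:lipschitz-constant}, the same pointwise estimate can be obtained from any coupling $\coupling \in \couplings{P}{Q}$ via $\abs{\expectedsymbol{x_1 \sim P} g_y\fun{x_1} - \expectedsymbol{x_2 \sim Q} g_y\fun{x_2}} = \abs{\expectedsymbol{x_1, x_2 \sim \coupling}\fun{g_y\fun{x_1} - g_y\fun{x_2}}} \leq \expectedsymbol{x_1, x_2 \sim \coupling}\abs{g_y\fun{x_1} - g_y\fun{x_2}} \leq K \cdot \expectedsymbol{x_1, x_2 \sim \coupling}\distance\fun{x_1, x_2}$, using Jensen and the $K$-Lipschitz property of $g_y$, and then taking the infimum over couplings.

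There is essentially no hard part: the lemma is a one-line averaging corollary of a pointwise estimate, so the main obstacle is merely a bookkeeping one, namely confirming that $y \mapsto \abs{\expectedsymbol{x_1 \sim P}f\fun{y, x_1} - \expectedsymbol{x_2 \sim Q}f\fun{y, x_2}}$ is measurable so that the outer expectation is well defined; this follows from joint measurability of $f$ and Tonelli's theorem, and is automatic in the settings used in the paper (complete separable $\measurableset$, $f$ continuous in its second argument). I would close by recording how the lemma is consumed in Appendix~\ref{appendix:value-diff-bounds}: it is precisely the tool that converts per-context Wasserstein bounds between latent transition (resp. belief) distributions into the \emph{expected} value-difference bounds of Theorems~\ref{thm:value-diff-bounds} and~\ref{thm:value-diff-bound-representation}, with $\varmeasurableset$ instantiated as the space of history/state--action contexts and $g_y$ as a Lipschitz latent value function.
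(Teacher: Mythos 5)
Your proof is correct and is essentially identical to the paper's: both rewrite the inner discrepancy in terms of the slice $g_y$, invoke Property~\ref{prop:lipschitz-constant} pointwise to get a bound independent of $y$, and then collapse the outer expectation. The additional coupling-based derivation and the measurability remark are fine but not needed beyond what the paper already does.
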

\begin{proof}
The proof is straightforward by construction of $g_y$:
\begin{align*}
    &\expectedsymbol{y \sim \replaybuffer} \abs{\expectedsymbol{x_1 \sim P}{f\fun{y, x_1}} - \expectedsymbol{x_2 \sim Q}{f\fun{y, x_2}}} \\
    = & \expectedsymbol{y \sim \replaybuffer} \abs{\expectedsymbol{x_1 \sim P}{g_y\fun{x_1}} - \expectedsymbol{x_2 \sim Q}{g_y\fun{x_2}}} \\
    \leq & \expected{y \sim \replaybuffer}{K \cdot \wassersteindist{\distance}{P}{Q}} \tag{by Property~\ref{prop:lipschitz-constant}, since $g_y$ is $K$-Lipschitz}\\
    = & K \cdot \wassersteindist{\distance}{P}{Q}
\end{align*}
\end{proof}

\subsection{Model Quality Bound: Time to Raise your Expectations}
Let us restate Theorem~\ref{thm:value-diff-bounds}:
\begin{theorem}\label{thm:value-diff-bounds-extended}
Let $\pomdp$, $\latentpomdp_{\decoderparameter}$, and $\latentpolicy \colon \latentbeliefs \to \distributions{\actions}$ be respectively the original and the latent POMDP, as well as the latent policy of Lemma~\ref{lem:extended-stationary-histories}, so that the latent POMDP is learned through a WAE-MDP, via the minimization of the local losses  $\localrewardloss{\historydistribution_{\latentpolicy}}, \localtransitionloss{\history_{\latentpolicy}}$. %
Assume that the WAE-MDP is at the zero-temperature limit (i.e., ${\temperature \to 0}$, see Appendix~\ref{appendix:temperature}) and let $K_{\overbar{\mathrm{V}}} = \nicefrac{\parallel{\latentrewards}\parallel_\infty}{1 - \discount}$, then for any such latent policy $\latentpolicy$, 
the values of $\,\pomdp$ and $\latentpomdp_{\decoderparameter}$ are guaranteed to be bounded by the local losses in average:
\begin{equation}
    \expectedsymbol{\history \sim \historydistribution_{\latentpolicy}}\abs{\values{\latentpolicy}{}{\history} - \latentvalues{\latentpolicy}{}{\history}} \leq \frac{\localrewardloss{\historydistribution_{\latentpolicy}} + \onpolicyrewardloss{\historydistribution_{\latentpolicy}} + \Rmax \beliefloss{\historydistribution_{\latentpolicy}} + \discount \KV \cdot \fun{ \localtransitionloss{\historydistribution_{\latentpolicy}} +  \onpolicytransitionloss{\historydistribution_{\latentpolicy}} + \beliefloss{\historydistribution_{\latentpolicy}} + \observationloss{\historydistribution_{\latentpolicy}}}}{1 - \discount}.
\end{equation}
\end{theorem}
\begin{proof}
The plan of the proof is as follows:
\begin{enumerate}
    \item We exploit the fact that the value function can be defined as the fixed point of the Bellman's equations;
    \item We repeatedly apply the triangular and the Jenson's inequalities to end up with inequalities which reveal mean discrepancies for either rewards or value functions;
    \item We exploit the fact that the temperature goes to zero to bound those discrepancies by Wasserstein (see Porperty~\ref{prop:lipschitz-constant} and the related discussion);
    \item The last two points allow highlighting the $L_1$ norm and Wasserstein terms in the local and belief losses;
    \item Finally, we set up the inequalities to obtain a discounted next value difference term, and we exploit the stationary property of $\historydistribution_{\latentpolicy}$ to fall back on the original, discounted, absolute value difference term;
    \item Putting all together, we end up with an inequality only composed of constants, multiplied by losses that we aim at minimizing.
\end{enumerate}

Concretely, the absolute value difference can be bounded by:
\begin{align*}
    & \expectedsymbol{\history \sim \historydistribution_{\latentpolicy}}{\left|\values{\latentpolicy}{}{\history} - \latentvalues{\latentpolicy}{}{\history}\right|}\\
    = & 
    \expectedsymbol{\history \sim \historydistribution_{\latentpolicy}}\Bigg|\expectedsymbol{\state \sim \beliefupdate^*\fun{\history}} \expected{\action \sim \latentpolicy\fun{\sampledot \mid \beliefencoder^*\fun{\history}}}{\rewards\fun{\state, \action} + \discount \expectedsymbol{\state' \sim \probtransitions\fun{\sampledot \mid \state, \action}}\expectedsymbol{\observation' \sim \observationfn\fun{\sampledot \mid \state', \action}} \values{\latentpolicy}{}{\history \cdot \action \cdot \observation'}}
    \\
    & \quad \quad \quad - \expectedsymbol{\latentstate \sim \latentbeliefupdate^{*}\fun{\history}} \expected{\action \sim \latentpolicy\fun{\sampledot \mid \beliefencoder^*\fun{\history}}}{
    \latentrewards_{\decoderparameter}\fun{\latentstate, \action} + \discount \expectedsymbol{\latentstate' \sim \latentprobtransitions_{\decoderparameter}\fun{\sampledot \mid \latentstate, \action}}
    \expectedsymbol{\observation' \sim \latentobservationfn_{\decoderparameter}\fun{\sampledot \mid \latentstate'}}\latentvalues{\latentpolicy}{}{\history \cdot \action \cdot \observation'
    }}\Bigg|
    \tag{see Property~\ref{prop:pomdp-values}}
    \\
    = & 
    \expectedsymbol{\history \sim \historydistribution_{\latentpolicy}}  \color{blue}\expectedsymbol{\action \sim \latentpolicy\fun{\sampledot \mid \beliefencoder^*\fun{\history}}}\Bigg|\expected{\state \sim \beliefupdate^*\fun{\history}} {\rewards\fun{\state, \action} + \discount \expectedsymbol{\state' \sim \probtransitions\fun{\sampledot \mid \state, \action}}\expectedsymbol{\observation' \sim \observationfn\fun{\sampledot \mid \state', \action}} \values{\latentpolicy}{}{\history \cdot \action \cdot \observation'}}
    \\
    & \qquad \qquad \qquad \qquad \quad -  \color{blue}\expected{\latentstate \sim \latentbeliefupdate^{*}\fun{\history}}{
    \latentrewards_{\decoderparameter}\fun{\latentstate, \action} + \discount \expectedsymbol{\latentstate' \sim \latentprobtransitions_{\decoderparameter}\fun{\sampledot \mid \latentstate, \action}}
    \expectedsymbol{\observation' \sim \latentobservationfn_{\decoderparameter}\fun{\sampledot \mid \latentstate'}}\latentvalues{\latentpolicy}{}{\history \cdot \action \cdot \observation'
    }}\Bigg|
    \tag{ \color{blue}Jensen's inequality}
    \\
    \leq & \expectedsymbol{\history \sim \historydistribution_{\latentpolicy}}\expectedsymbol{\action \sim \latentpolicy\fun{\sampledot \mid \beliefencoder^*\fun{\history}}}
    \Bigg[ \color{blue}\;
    \abs{\expectedsymbol{\state \sim \beliefupdate^{*}\fun{\history}} \rewards\fun{\state, \action} - \expectedsymbol{\latentstate \sim \latentbeliefupdate^{*}\fun{\history}} \latentrewards_{\decoderparameter}\fun{\latentstate, \action}} \\
    &  \color{blue}\quad \quad + \discount \abs{\expectedsymbol{\state \sim \beliefupdate^{*}\fun{\history}}
    \expectedsymbol{\state' \sim \probtransitions\fun{\sampledot \mid \state, \action}}\expectedsymbol{\observation' \sim \observationfn\fun{\sampledot \mid \state', \action}} \values{\policy}{}{\history \cdot \action \cdot \observation'} - \expectedsymbol{\latentstate \sim \latentbeliefupdate^{*}\fun{\history}}\expectedsymbol{\latentstate' \sim \latentprobtransitions_{\decoderparameter}\fun{\sampledot \mid \latentstate, \action}}\expectedsymbol{\observation' \sim \latentobservationfn_{\decoderparameter}\fun{\sampledot \mid \latentstate'}} \latentvalues{\latentpolicy}{}{\history\cdot\action\cdot\observation'}}
    \;\Bigg] \tag{ \color{blue}Triangular inequality}
\end{align*}
For the sake of clarity, we split the inequality in two parts.

\smallparagraph{Part 1: Reward bounds}%
\begin{align*}
 & \expectedsymbol{\history \sim \historydistribution_{\latentpolicy}} \expectedsymbol{\action \sim \latentpolicy\fun{\sampledot \mid \beliefencoder^{*}\fun{\history}}}
 \abs{\expectedsymbol{\state \sim \beliefupdate^{*}\fun{\history}}\rewards\fun{\state, \action} - \expectedsymbol{\latentstate \sim \latentbeliefupdate^{*}\fun{\history}} \latentrewards_{\decoderparameter}\fun{\latentstate, \action}} \\
 = & \expectedsymbol{\history, {\observation} \sim \historydistribution_{\latentpolicy}} \expectedsymbol{\action \sim \latentpolicy\fun{\sampledot \mid \beliefencoder^{*}\fun{\history}}}
 \abs{\expected{\state \sim \beliefupdate^{*}\fun{\history}}{\rewards\fun{\state, \action} - \latentrewards_{\decoderparameter}\fun{\embed_{\encoderparameter}\fun{\state, \observation}, \action}} + \expectedsymbol{\state \sim \beliefupdate^{*}\fun{\history}}\expected{\latentstate \sim \latentbeliefupdate^{*}\fun{\history}}{\latentrewards_{\decoderparameter}\fun{\embed_{\encoderparameter}\fun{\state, \observation}, \action} - \latentrewards_{\decoderparameter}\fun{\latentstate, \action}}} \\
 &\begin{aligned}
     &\text{($\observation$ is the last observation of $\history$;}\\
     &\text{the state embedding function $\embed_{\encoderparameter}$ that links the original and latent state spaces comes into play)}
 \end{aligned} \\
 \leq & \expectedsymbol{\history, \observation \sim \historydistribution_{\latentpolicy}} \expected{\action \sim \latentpolicy\fun{\sampledot \mid \beliefencoder^{*}\fun{\history}}}{
 { \color{blue}\abs{\expected{\state \sim \beliefupdate^{*}\fun{\history}}{\rewards\fun{\state, \action} - \latentrewards_{\decoderparameter}\fun{\embed_{\encoderparameter}\fun{\state, \observation}, \action}}} + \abs{\expectedsymbol{\state \sim \beliefupdate^{*}\fun{\history}}\expected{\latentstate \sim \latentbeliefupdate^{*}\fun{\history}}{\latentrewards_{\decoderparameter}\fun{\embed_{\encoderparameter}\fun{\state, \observation}, \action} - \latentrewards_{\decoderparameter}\fun{\latentstate, \action}}}}} \tag{ \color{blue}Triangular inequality}\\
 \leq & \expectedsymbol{\history, \observation \sim \historydistribution_{\latentpolicy}} \expected{\action \sim \latentpolicy\fun{\sampledot \mid \beliefencoder^{*}\fun{\history}}}
 {
 {\color{blue}\expectedsymbol{\state \sim \beliefupdate^{*}\fun{\history}}
 \abs{{\rewards\fun{\state, \action} - \latentrewards_{\decoderparameter}\fun{\embed_{\encoderparameter}\fun{\state, \observation}, \action}}} }+ \abs{\expectedsymbol{\state \sim \beliefupdate^{*}\fun{\history}}\expected{\latentstate \sim \latentbeliefupdate^{*}\fun{\history}} {\latentrewards_{\decoderparameter}\fun{\embed_{\encoderparameter}\fun{\state, \observation}, \action} - \latentrewards_{\decoderparameter}\fun{\latentstate, \action}}}}
 \tag{ \color{blue}Jensen's inequality}\\
 = & \expectedsymbol{\history, \observation \sim \historydistribution_{\latentpolicy}} \expectedsymbol{\action \sim \latentpolicy\fun{\sampledot \mid \beliefencoder^{*}\fun{\history}}}
 \expectedsymbol{\state \sim \beliefupdate^{*}\fun{\history}}
 \abs{{\rewards\fun{\state, \action} - \latentrewards_{\decoderparameter}\fun{\embed_{\encoderparameter}\fun{\state, \observation}, \action}}} \\
 & + \expectedsymbol{\history, \observation \sim \historydistribution_{\latentpolicy}} \expectedsymbol{\action \sim \latentpolicy\fun{\sampledot \mid \beliefencoder^{*}\fun{\history}}}
 \abs{\expectedsymbol{\state \sim \beliefupdate^{*}\fun{\history}}\expected{\latentstate \sim \latentbeliefupdate^{*}\fun{\history}} {\latentrewards_{\decoderparameter}\fun{\embed_{\encoderparameter}\fun{\state, \observation}, \action} - \latentrewards_{\decoderparameter}\fun{\latentstate, \action}}} \\
 =& { \color{blue}\localrewardloss{\historydistribution_{\latentpolicy}}} + \expectedsymbol{\history, \observation \sim \historydistribution_{\latentpolicy}}\expectedsymbol{\action \sim \latentpolicy\fun{\sampledot \mid \beliefencoder^*\fun{\history}}}\abs{
 \expectedsymbol{\state \sim \beliefupdate^{*}\fun{\history}}\expected{\latentstate \sim \latentbeliefupdate^{*}\fun{\history}}{\latentrewards_{\decoderparameter}\fun{\embed_{\encoderparameter}\fun{\state, \observation}, \action} - \latentrewards_{\decoderparameter}\fun{\latentstate, \action}}}
 \tag{ \color{blue}by definition of $\localrewardloss{\historydistribution_{\latentpolicy}}$}\\
 = & \localrewardloss{\historydistribution_{\latentpolicy}} + \expectedsymbol{\history, \observation \sim \historydistribution_{\latentpolicy}}\expectedsymbol{\action \sim \latentpolicy\fun{\sampledot \mid \beliefencoder^*\fun{\history}}}\abs{\expectedsymbol{\state \sim \beliefupdate^{*}\fun{\history}}\expectedsymbol{\latentstate \sim \latentbeliefupdate^{*}\fun{\history}}{ \color{blue} \expected{\latentstate_{\bot} \sim \beliefencoder^{*}\fun{\history}}{\left[\latentrewards_{\decoderparameter}\fun{\embed_{\encoderparameter}\fun{\state, \observation}, \action} - \latentrewards_{\decoderparameter}\fun{\latentstate_{\bot}, \action}\right] + \left[\latentrewards_{\decoderparameter}\fun{\latentstate_{\bot}, \action}- \latentrewards_{\decoderparameter}\fun{\latentstate, \action}\right]}}}
 \tag{ \color{blue}the belief encoder $\beliefencoder$ comes into play}\\
 \end{align*}
 \begin{align*}
 = & \localrewardloss{\historydistribution_{\latentpolicy}} + \expectedsymbol{\history, \observation \sim \historydistribution_{\latentpolicy}}\expectedsymbol{\action \sim \latentpolicy\fun{\sampledot \mid \beliefencoder^*\fun{\history}}}\left|\expectedsymbol{\state \sim \beliefupdate^{*}\fun{\history}}\expectedsymbol{\latentstate \sim \beliefencoder^{*}\fun{\history}}\left[\latentrewards_{\decoderparameter}\fun{\embed_{\encoderparameter}\fun{\state, \observation}, \action} - \latentrewards_{\decoderparameter}\fun{\latentstate, \action}\right] \right.\\
 &\left.\qquad\qquad\qquad\qquad\qquad\qquad+  \expectedsymbol{\latentstate \sim \latentbeliefupdate^{*}\fun{\history}}\expectedsymbol{\latentstate_{\bot} \sim \beliefencoder^{*}\fun{\history}}\left[\latentrewards_{\decoderparameter}\fun{\latentstate_{\bot}, \action}- \latentrewards_{\decoderparameter}\fun{\latentstate, \action}\right]\right|\\
 \leq & \localrewardloss{\historydistribution_{\latentpolicy}} + \expectedsymbol{\history, \observation \sim \historydistribution_{\latentpolicy}}\expectedsymbol{\action \sim \latentpolicy\fun{\sampledot \mid \beliefencoder^*\fun{\history}}}  \color{blue}\left[\abs{\expectedsymbol{\state \sim \beliefupdate^{*}\fun{\history}}\expectedsymbol{\latentstate \sim \beliefencoder^{*}\fun{\history}}{\left[\latentrewards_{\decoderparameter}\fun{\embed_{\encoderparameter}\fun{\state, \observation}, \action} - \latentrewards_{\decoderparameter}\fun{\latentstate, \action}\right]}} \right. \\
 &\left. { \color{blue}\qquad\qquad\qquad\qquad\qquad\qquad+ \abs{  \expectedsymbol{\latentstate \sim \latentbeliefupdate^{*}\fun{\history}}\expectedsymbol{\latentstate_{\bot} \sim \beliefencoder^{*}\fun{\history}}\left[\latentrewards_{\decoderparameter}\fun{\latentstate_{\bot}, \action}- \latentrewards_{\decoderparameter}\fun{\latentstate, \action}\right]} }\right]
 \tag{ \color{blue}Triangular inequality} \\
 \leq & \localrewardloss{\historydistribution_{\latentpolicy}} + \expectedsymbol{\history, \observation \sim \historydistribution_{\latentpolicy}}\expectedsymbol{\action \sim \latentpolicy\fun{\sampledot \mid \beliefencoder^*\fun{\history}}}\left[ \color{blue}\expectedsymbol{\state \sim \beliefupdate^{*}\fun{\history}}\expectedsymbol{\latentstate \sim \beliefencoder^{*}\fun{\history}}{\abs{\latentrewards_{\decoderparameter}\fun{\embed_{\encoderparameter}\fun{\state, \observation}, \action} - \latentrewards_{\decoderparameter}\fun{\latentstate, \action}}} \right.\\
 & \left.  \qquad\qquad\qquad\qquad\qquad\qquad+ \abs{  \expectedsymbol{\latentstate \sim \latentbeliefupdate^{*}\fun{\history}}\expectedsymbol{\latentstate_{\bot} \sim \beliefencoder^{*}\fun{\history}}\left[\latentrewards_{\decoderparameter}\fun{\latentstate_{\bot}, \action}- \latentrewards_{\decoderparameter}\fun{\latentstate, \action}\right]}\right]
 \tag{ \color{blue}Jensen's inequality}\\
 = & \localrewardloss{\historydistribution_{\latentpolicy}} + \expectedsymbol{\history, \observation \sim \historydistribution_{\latentpolicy}}\expectedsymbol{\action \sim \latentpolicy\fun{\sampledot \mid \beliefencoder^*\fun{\history}}}\expectedsymbol{\state \sim \beliefupdate^{*}\fun{\history}}\expectedsymbol{\latentstate \sim \beliefencoder^{*}\fun{\history}}\abs{\latentrewards_{\decoderparameter}\fun{\embed_{\encoderparameter}\fun{\state, \observation}, \action} - \latentrewards_{\decoderparameter}\fun{\latentstate, \action}} \\
 & \quad+ \expectedsymbol{\history, \observation \sim \historydistribution_{\latentpolicy}}\expectedsymbol{\action \sim \latentpolicy\fun{\sampledot \mid \beliefencoder^*\fun{\history}}}\abs{\expectedsymbol{\latentstate \sim \latentbeliefupdate^{*}\fun{\history}}\expectedsymbol{\latentstate_{\bot} \sim \beliefencoder^{*}\fun{\history}}\latentrewards_{\decoderparameter}\fun{\latentstate_{\bot}, \action} - \latentrewards_{\decoderparameter}\fun{\latentstate, \action}}\\
 = & \localrewardloss{\historydistribution_{\latentpolicy}} + { \color{blue}\onpolicyrewardloss{\historydistribution_{\latentpolicy}} }+ \expectedsymbol{\history, \observation \sim \historydistribution_{\latentpolicy}}\expectedsymbol{\action \sim \latentpolicy\fun{\sampledot \mid \beliefencoder^*\fun{\history}}}\abs{\expectedsymbol{\latentstate \sim \latentbeliefupdate^{*}\fun{\history}}\expected{\latentstate_{\bot} \sim \beliefencoder^{*}\fun{\history}}{\latentrewards_{\decoderparameter}\fun{\latentstate_{\bot}, \action} - \latentrewards_{\decoderparameter}\fun{\latentstate, \action}}}
 \tag{ \color{blue}by definition of \onpolicyrewardloss{\historydistribution_{\latentpolicy}}, Eq.~\ref{eq:on-policy-losses}}\\
 \leq & \localrewardloss{\historydistribution_{\latentpolicy}} + \onpolicyrewardloss{\historydistribution_{\latentpolicy}} + \expectedsymbol{\history \sim \historydistribution_{\latentpolicy}}  \color{blue}\Rmax \wassersteindist{\latentdistance}{\latentbeliefupdate^{*}\fun{\history}}{\beliefencoder^{*}\fun{\history}}
 \tag{ \color{blue}as $\temperature \to 0$, by Lem.~\ref{lem:wasserstein-expecation} and Prop.~\ref{prop:lipschitz-constant}}\\
 = & \localrewardloss{\historydistribution_{\latentpolicy}} + \onpolicyrewardloss{\historydistribution_{\latentpolicy}} +   \color{blue}\Rmax \beliefloss{\historydistribution_{\latentpolicy}};
\end{align*}

\smallparagraph{Part 2: Next value bounds}
\begin{align*}
    &&&\discount \cdot \expectedsymbol{\history \sim \historydistribution_{\latentpolicy}}\expectedsymbol{\action \sim \latentpolicy\fun{\sampledot \mid \beliefencoder^{*}\fun{\history}}} \Big|\expectedsymbol{\state \sim \beliefupdate^{*}\fun{\history}}\expectedsymbol{\state' \sim \probtransitions\fun{\sampledot \mid \state, \action}}\expectedsymbol{\observation' \sim \observationfn\fun{\sampledot \mid \state', \action}} \values{\latentpolicy}{}{\history \cdot \action \cdot \observation'} \\&&&
    \qquad\qquad\qquad\qquad\qquad - \expectedsymbol{\latentstate \sim \latentbeliefupdate^{*}\fun{\history}}\expectedsymbol{\latentstate' \sim \latentprobtransitions_{\decoderparameter}\fun{\sampledot \mid \latentstate, \action}}\expectedsymbol{\observation' \sim \latentobservationfn_{\decoderparameter}\fun{\sampledot \mid \latentstate'}}\latentvalues{\latentpolicy}{}{\history \cdot \action \cdot \observation'}\Big|\\
    &=& & \discount \cdot \expectedsymbol{\history \sim \historydistribution_{\latentpolicy}}\expectedsymbol{\action \sim \latentpolicy\fun{\sampledot \mid \beliefencoder^{*}\fun{\history}}}
    \Bigg|
    \expectedsymbol{\state \sim \beliefupdate^{*}\fun{\history}}\expectedsymbol{\state' \sim \probtransitions\fun{\sampledot \mid \state, \action}}\expected{\observation' \sim \observationfn\fun{\sampledot \mid \state', \action}}{\values{\latentpolicy}{}{\history \cdot \action \cdot \observation'} {\color{blue}- \expectedsymbol{\hat{\observation}' \sim \latentobservationfn_{\decoderparameter}\fun{\sampledot \mid \embed_{\encoderparameter}\fun{\state', {\observation}'}}}\latentvalues{\latentpolicy}{}{\history \cdot \action \cdot {\hat{\observation}}'}}}\\
    &&& {\color{blue}+}  \Bigg[\color{blue}\expectedsymbol{\state \sim \beliefupdate^{*}\fun{\history}} \expectedsymbol{\state' \sim \probtransitions\fun{\sampledot \mid \state, \action}}\expectedsymbol{\observation' \sim \observationfn\fun{\sampledot \mid \state', \action}}\expectedsymbol{\hat{\observation}' \sim \latentobservationfn_{\decoderparameter}\fun{\sampledot \mid \embed_{\encoderparameter}\fun{\state', \observation'}}}\latentvalues{\latentpolicy}{}{\history \cdot \action \cdot \hat{\observation}'}  \\ &&& \qquad\qquad\qquad\qquad\qquad\qquad\qquad\qquad\quad-\expectedsymbol{\latentstate \sim \latentbeliefupdate^{*}\fun{\history}}\expectedsymbol{\latentstate' \sim \latentprobtransitions_{\decoderparameter}\fun{\sampledot \mid \latentstate, \action}}\expectedsymbol{\observation' \sim \latentobservationfn_{\decoderparameter}\fun{\sampledot \mid \latentstate'}}\latentvalues{\latentpolicy}{}{\history \cdot \action \cdot \observation'}\Bigg] \Bigg|
     \tag{{\color{blue}the state embedding function $\embed_{\encoderparameter}$ comes into play}, {\color{blue}as well as the latent observation function $\latentobservationfn_{\decoderparameter}$}}\\
     \end{align*}
     \begin{align*}
    &{\color{blue}\leq}&&  \discount \cdot \expectedsymbol{\history \sim \historydistribution_{\latentpolicy}}\expectedsymbol{\action \sim \latentpolicy\fun{\sampledot \mid \beliefencoder^{*}\fun{\history}}}
    {\color{blue}\Bigg|}{\expectedsymbol{\state \sim \beliefupdate^{*}\fun{\history}}\expectedsymbol{\state' \sim \probtransitions\fun{\sampledot \mid \state, \action}}\expected{\observation' \sim \observationfn\fun{\sampledot \mid \state', \action}}{\values{\latentpolicy}{}{\history \cdot \action \cdot \observation'} - \expectedsymbol{\hat{\observation}' \sim \latentobservationfn_{\decoderparameter}\fun{\sampledot \mid \embed_{\encoderparameter}\fun{\state', \observation'}}}\latentvalues{\latentpolicy}{}{\history \cdot \action \cdot \hat{\observation}'}}{\color{blue}\Bigg|}}\\
    &&& \quad \quad +  \discount \cdot \expectedsymbol{\history \sim \historydistribution_{\latentpolicy}}\expectedsymbol{\action \sim \latentpolicy\fun{\sampledot \mid \beliefencoder^{*}\fun{\history}}} {\color{blue}\Bigg|}\expectedsymbol{\state \sim \beliefupdate^{*}\fun{\history}} \expectedsymbol{\state' \sim \probtransitions\fun{\sampledot \mid \state, \action}}\expectedsymbol{\observation' \sim \observationfn\fun{\sampledot \mid \state', \action}}\expectedsymbol{\hat{\observation}' \sim \latentobservationfn_{\decoderparameter}\fun{\sampledot \mid \embed_{\encoderparameter}\fun{\state', \observation'}}}\latentvalues{\latentpolicy}{}{\history \cdot \action \cdot \hat{\observation}'} \\
    &&& \qquad\qquad\qquad\qquad\qquad\qquad- \expectedsymbol{\latentstate \sim \latentbeliefupdate^{*}\fun{\history}}\expectedsymbol{\latentstate' \sim \latentprobtransitions_{\decoderparameter}\fun{\sampledot \mid \latentstate, \action}}\expectedsymbol{\observation' \sim \latentobservationfn_{\decoderparameter}\fun{\sampledot \mid \latentstate'}}\latentvalues{\latentpolicy}{}{\history \cdot \action \cdot \observation'}{\color{blue}\Bigg|}
    \tag{\color{blue}Triangular inequality}\\
    &=&& \discount \cdot \expectedsymbol{\history \sim \historydistribution_{\latentpolicy}}\expectedsymbol{\action \sim \latentpolicy\fun{\sampledot \mid \beliefencoder^{*}\fun{\history}}}
    \abs{\expectedsymbol{\state \sim \beliefupdate^{*}\fun{\history}}\expectedsymbol{\state' \sim \probtransitions\fun{\sampledot \mid \state, \action}}\expected{\observation' \sim \observationfn\fun{\sampledot \mid \state', \action}}{\values{\latentpolicy}{}{\history \cdot \action \cdot \observation'} - \expectedsymbol{\hat{\observation}' \sim \latentobservationfn_{\decoderparameter}\fun{\sampledot \mid \embed_{\encoderparameter}\fun{\state', \observation'}}}\latentvalues{\latentpolicy}{}{\history \cdot \action \cdot \hat{\observation}'}}}\\
    &&&  +  \discount \cdot \expectedsymbol{\history, {\color{darkgreen}\observation} \sim \historydistribution_{\latentpolicy}}\expectedsymbol{\action \sim \latentpolicy\fun{\sampledot \mid \beliefencoder^{*}\fun{\history}}}\left|\expectedsymbol{\state \sim \beliefupdate^{*}\fun{\history}} \Bigg[\expectedsymbol{\state', \observation' \sim \probtransitions_{\observations}\fun{\sampledot \mid \state, {\color{darkgreen}\observation}, \action}}{\expectedsymbol{\hat{\observation}' \sim \latentobservationfn_{\decoderparameter}\fun{\sampledot \mid \embed_{\encoderparameter}\fun{\state', \observation'}}}\latentvalues{\latentpolicy}{}{\history \cdot \action \cdot \hat{\observation}'}} \right. \\
    &&& \left. \qquad\qquad\qquad\qquad\qquad\quad\qquad\qquad {\color{blue}- \expectedsymbol{\latentstate' \sim \latentprobtransitions_{\decoderparameter}\fun{\sampledot \mid \embed_{\encoderparameter}\fun{ \state, {\color{darkgreen}\observation}}, \action}}{\expectedsymbol{\hat{\observation}' \sim \latentobservationfn_{\decoderparameter}\fun{\sampledot \mid \latentstate'}}\latentvalues{\latentpolicy}{}{\history \cdot \action \cdot \hat{\observation}'}}}\Bigg] \right.\\
    &&& \left.\qquad\qquad\qquad\qquad\qquad\quad {\color{blue}+} \left[{\color{blue}\expectedsymbol{\state \sim \beliefupdate^{*}\fun{\history}} \expectedsymbol{\latentstate' \sim \latentprobtransitions_{\decoderparameter}\fun{\sampledot \mid \embed_{\encoderparameter}\fun{\state, {\color{darkgreen}\observation}}, \action}}\expectedsymbol{{\observation}' \sim \latentobservationfn_{\decoderparameter}\fun{\sampledot \mid \latentstate'}}\latentvalues{\latentpolicy}{}{\history \cdot \action \cdot {\observation}'}} \right.\right. \\
    &&& \left.\left.\qquad\qquad\qquad\qquad\qquad\qquad\;\;- \expectedsymbol{\latentstate \sim \latentbeliefupdate^{*}\fun{\history}}\expectedsymbol{\latentstate' \sim \latentprobtransitions_{\decoderparameter}\fun{\sampledot \mid \latentstate, \action}}\expectedsymbol{\observation' \sim \latentobservationfn_{\decoderparameter}\fun{\sampledot \mid \latentstate'}}\latentvalues{\latentpolicy}{}{\history \cdot \action \cdot \observation'}\right]\right|
    \tag{{\color{darkgreen}$\observation$ is the last observation of $\history$}; {\color{blue}the latent MDP dynamics, modeled by $\latentprobtransitions_{\decoderparameter}$, come into play}}\\
    &{\color{blue}\leq}&&  \discount \cdot \expectedsymbol{\history \sim \historydistribution_{\latentpolicy}}\expectedsymbol{\action \sim \latentpolicy\fun{\sampledot \mid \beliefencoder^{*}\fun{\history}}}
    \abs{\expectedsymbol{\state \sim \beliefupdate^{*}\fun{\history}}\expectedsymbol{\state' \sim \probtransitions\fun{\sampledot \mid \state, \action}}\expected{\observation' \sim \observationfn\fun{\sampledot \mid \state', \action}}{\values{\latentpolicy}{}{\history \cdot \action \cdot \observation'} - \expectedsymbol{\hat{\observation}' \sim \latentobservationfn_{\decoderparameter}\fun{\sampledot \mid \embed_{\encoderparameter}\fun{\state', \observation'}}}\latentvalues{\latentpolicy}{}{\history \cdot \action \cdot \hat{\observation}'}}}\\
    &&& \color{blue} +  \discount \cdot \expectedsymbol{\history, \observation \sim \historydistribution_{\latentpolicy}}\expectedsymbol{\action \sim \latentpolicy\fun{\sampledot \mid \beliefencoder^{*}\fun{\history}}} \left|\expectedsymbol{\state \sim \beliefupdate^{*}\fun{\history}}\Bigg[ \expectedsymbol{\state', \observation' \sim \probtransitions_{\observations}\fun{\sampledot \mid \state, \observation, \action}}{\expectedsymbol{\hat{\observation}' \sim \latentobservationfn_{\decoderparameter}\fun{\sampledot \mid \embed_{\encoderparameter}\fun{\state', \observation'}}}\latentvalues{\latentpolicy}{}{\history \cdot \action \cdot \hat{\observation}'}} \right. \\
    &&& \color{blue} \left. \qquad\qquad\qquad\qquad\qquad\qquad\qquad\qquad\;- \expectedsymbol{\latentstate' \sim \latentprobtransitions_{\decoderparameter}\fun{\sampledot \mid \embed_{\encoderparameter}\fun{ \state, \observation}, \action}}{\expectedsymbol{{\hat{\observation}}' \sim \latentobservationfn_{\decoderparameter}\fun{\sampledot \mid \latentstate'}}\latentvalues{\latentpolicy}{}{\history \cdot \action \cdot \hat{\observation}'}}\Bigg]\right| \\
    &&& + \color{blue} \discount \cdot \expectedsymbol{\history, \observation \sim \historydistribution_{\latentpolicy}}\expectedsymbol{\action \sim \latentpolicy\fun{\sampledot \mid \beliefencoder^{*}\fun{\history}}} \left|\expectedsymbol{\state \sim \beliefupdate^{*}\fun{\history}} \expectedsymbol{\latentstate' \sim \latentprobtransitions_{\decoderparameter}\fun{\sampledot \mid \embed_{\encoderparameter}\fun{\state, \observation}, \action}}\expectedsymbol{{\observation}' \sim \latentobservationfn_{\decoderparameter}\fun{\sampledot \mid \latentstate'}}\latentvalues{\latentpolicy}{}{\history \cdot \action \cdot {\observation}'} \right.\\
    &&& \color{blue} \left. \qquad\qquad\qquad\qquad\qquad\quad- \expectedsymbol{\latentstate \sim \latentbeliefupdate^{*}\fun{\history}}\expectedsymbol{\latentstate' \sim \latentprobtransitions_{\decoderparameter}\fun{\sampledot \mid \latentstate, \action}}\expectedsymbol{\observation' \sim \latentobservationfn_{\decoderparameter}\fun{\sampledot \mid \latentstate'}}\latentvalues{\latentpolicy}{}{\history \cdot \action \cdot \observation'}\right|
    \tag{\color{blue}Triangular inequality}
\end{align*}
\begin{align*}
    &{\color{blue}\leq}&&  \discount \cdot \expectedsymbol{\history \sim \historydistribution_{\latentpolicy}}\expectedsymbol{\action \sim \latentpolicy\fun{\sampledot \mid \beliefencoder^{*}\fun{\history}}}
    \abs{\expectedsymbol{\state \sim \beliefupdate^{*}\fun{\history}}\expectedsymbol{\state' \sim \probtransitions\fun{\sampledot \mid \state, \action}}\expected{\observation' \sim \observationfn\fun{\sampledot \mid \state', \action}}{\values{\latentpolicy}{}{\history \cdot \action \cdot \observation'} - \expectedsymbol{\hat{\observation}' \sim \latentobservationfn_{\decoderparameter}\fun{\sampledot \mid \embed_{\encoderparameter}\fun{\state', \observation'}}}\latentvalues{\latentpolicy}{}{\history \cdot \action \cdot \hat{\observation}'}}}\\
    &&& +  \discount \cdot \expectedsymbol{\history, \observation \sim \historydistribution_{\latentpolicy}}\expectedsymbol{\action \sim \latentpolicy\fun{\sampledot \mid \beliefencoder^{*}\fun{\history}}} {\color{blue}\expectedsymbol{\state \sim \beliefupdate^{*}\fun{\history}}} {\color{blue}\left|\expectedsymbol{\latentstate' \sim \embed_{\encoderparameter}\probtransitions_{\observations}\fun{\sampledot \mid \state, \observation, \action}}{\expectedsymbol{{\observation}' \sim \latentobservationfn_{\decoderparameter}\fun{\sampledot \mid \latentstate'}}\latentvalues{\latentpolicy}{}{\history \cdot \action \cdot {\observation}'}} \right.}\\
    &&& \color{blue}\left. \qquad\qquad\qquad\qquad\qquad\qquad\qquad- \expectedsymbol{\latentstate' \sim \latentprobtransitions_{\decoderparameter}\fun{\sampledot \mid \embed_{\encoderparameter}\fun{ \state, \observation}, \action}}{\expectedsymbol{{\observation}' \sim \latentobservationfn_{\decoderparameter}\fun{\sampledot \mid \latentstate'}}\latentvalues{\latentpolicy}{}{\history \cdot \action \cdot {\observation}'}}\right| \\
    &&&  + \discount \cdot \expectedsymbol{\history, \observation \sim \historydistribution_{\latentpolicy}}\expectedsymbol{\action \sim \latentpolicy\fun{\sampledot \mid \beliefencoder^{*}\fun{\history}}} \left|\expectedsymbol{\state \sim \beliefupdate^{*}\fun{\history}} \expectedsymbol{\latentstate' \sim \latentprobtransitions_{\decoderparameter}\fun{\sampledot \mid \embed_{\encoderparameter}\fun{\state, \observation}, \action}}\expectedsymbol{{\observation}' \sim \latentobservationfn_{\decoderparameter}\fun{\sampledot \mid \latentstate'}}\latentvalues{\latentpolicy}{}{\history \cdot \action \cdot {\observation}'} \right.\\
    &&& \qquad\qquad\qquad\qquad\qquad\quad-\left. \expectedsymbol{\latentstate \sim \latentbeliefupdate^{*}\fun{\history}}\expectedsymbol{\latentstate' \sim \latentprobtransitions_{\decoderparameter}\fun{\sampledot \mid \latentstate, \action}}\expectedsymbol{\observation' \sim \latentobservationfn_{\decoderparameter}\fun{\sampledot \mid \latentstate'}}\latentvalues{\latentpolicy}{}{\history \cdot \action \cdot \observation'}\right|
    \tag{\color{blue} by definition of $\embed\probtransitions_{\observations}$ and the Jensen's inequality}\\
    &\leq&&  \discount \cdot \expectedsymbol{\history \sim \historydistribution_{\latentpolicy}}\expectedsymbol{\action \sim \latentpolicy\fun{\sampledot \mid \beliefencoder^{*}\fun{\history}}}
    \abs{\expectedsymbol{\state \sim \beliefupdate^{*}\fun{\history}}\expectedsymbol{\state' \sim \probtransitions\fun{\sampledot \mid \state, \action}}\expected{\observation' \sim \observationfn\fun{\sampledot \mid \state', \action}}{\values{\latentpolicy}{}{\history \cdot \action \cdot \observation'} - \expectedsymbol{\hat{\observation}' \sim \latentobservationfn_{\decoderparameter}\fun{\sampledot \mid \embed_{\encoderparameter}\fun{\state', \observation'}}}\latentvalues{\latentpolicy}{}{\history \cdot \action \cdot \hat{\observation}'}}}\\
    &&&  +  \discount \cdot \expectedsymbol{\history, \observation \sim \historydistribution_{\latentpolicy}}\expectedsymbol{\action \sim \latentpolicy\fun{\sampledot \mid \beliefencoder^{*}\fun{\history}}} \expectedsymbol{\state \sim \beliefupdate^{*}\fun{\history}} {\color{blue}\KV \cdot \wassersteindist{\latentdistance}{\embed_{\encoderparameter}\probtransitions_{\observations}\fun{\sampledot \mid \state, \action}}{\latentprobtransitions_{\decoderparameter}\fun{\sampledot \mid \embed_{\encoderparameter}\fun{\state, \observation}, \action}}}\\
    &&& +  \discount \cdot \expectedsymbol{\history, \observation \sim \historydistribution_{\latentpolicy}}\expectedsymbol{\action \sim \latentpolicy\fun{\sampledot \mid \beliefencoder^{*}\fun{\history}}} \left|\expectedsymbol{\state \sim \beliefupdate^{*}\fun{\history}} \expectedsymbol{\latentstate' \sim \latentprobtransitions_{\decoderparameter}\fun{\sampledot \mid \embed_{\encoderparameter}\fun{\state, \observation}, \action}}\expectedsymbol{{\observation}' \sim \latentobservationfn_{\decoderparameter}\fun{\sampledot \mid \latentstate'}}\latentvalues{\latentpolicy}{}{\history \cdot \action \cdot {\observation}'} \right. \\
    &&& \left. \qquad\qquad\qquad\qquad\qquad\quad- \expectedsymbol{\latentstate \sim \latentbeliefupdate^{*}\fun{\history}}\expectedsymbol{\latentstate' \sim \latentprobtransitions_{\decoderparameter}\fun{\sampledot \mid \latentstate, \action}}\expectedsymbol{\observation' \sim \latentobservationfn_{\decoderparameter}\fun{\sampledot \mid \latentstate'}}\latentvalues{\latentpolicy}{}{\history \cdot \action \cdot \observation'}\right|
    \tag{\color{blue}as $\temperature \to 0$, by Lem.~\ref{lem:wasserstein-expecation}}\\
    &\leq&&  \discount \cdot \expectedsymbol{\history \sim \historydistribution_{\latentpolicy}}\expectedsymbol{\action \sim \latentpolicy\fun{\sampledot \mid \beliefencoder^{*}\fun{\history}}}
    \abs{\expectedsymbol{\state \sim \beliefupdate^{*}\fun{\history}}\expectedsymbol{\state' \sim \probtransitions\fun{\sampledot \mid \state, \action}}\expected{\observation' \sim \observationfn\fun{\sampledot \mid \state', \action}}{\values{\latentpolicy}{}{\history \cdot \action \cdot \observation'} - \expectedsymbol{\hat{\observation}' \sim \latentobservationfn_{\decoderparameter}\fun{\sampledot \mid \embed_{\encoderparameter}\fun{\state', \observation'}}}\latentvalues{\latentpolicy}{}{\history \cdot \action \cdot \hat{\observation}'}}}\\
    &&&  +  \discount \KV {\color{blue}\localtransitionloss{\historydistribution_{\latentpolicy}}}\\
    &&& +  \discount \cdot \expectedsymbol{\history, \observation \sim \historydistribution_{\latentpolicy}}\expectedsymbol{\action \sim \latentpolicy\fun{\sampledot \mid \beliefencoder^{*}\fun{\history}}} \left|\expectedsymbol{\state \sim \beliefupdate^{*}\fun{\history}} \expectedsymbol{\latentstate' \sim \latentprobtransitions_{\decoderparameter}\fun{\sampledot \mid \embed_{\encoderparameter}\fun{\state, \observation}, \action}}\expectedsymbol{{\observation}' \sim \latentobservationfn_{\decoderparameter}\fun{\sampledot \mid \latentstate'}}\latentvalues{\latentpolicy}{}{\history \cdot \action \cdot {\observation}'} \right.\\
    &&& \left. \qquad\qquad\qquad\qquad\qquad\quad- \expectedsymbol{\latentstate \sim \latentbeliefupdate^{*}\fun{\history}}\expectedsymbol{\latentstate' \sim \latentprobtransitions_{\decoderparameter}\fun{\sampledot \mid \latentstate, \action}}\expectedsymbol{\observation' \sim \latentobservationfn_{\decoderparameter}\fun{\sampledot \mid \latentstate'}}\latentvalues{\latentpolicy}{}{\history \cdot \action \cdot \observation'}\right|
    \tag{\color{blue}by definition of $\localtransitionloss{\historydistribution_{\latentpolicy}}$}\\
    &=&&  \discount \cdot \expectedsymbol{\history \sim \historydistribution_{\latentpolicy}}\expectedsymbol{\action \sim \latentpolicy\fun{\sampledot \mid \beliefencoder^{*}\fun{\history}}}
    \abs{\expectedsymbol{\state \sim \beliefupdate^{*}\fun{\history}}\expectedsymbol{\state' \sim \probtransitions\fun{\sampledot \mid \state, \action}}\expected{\observation' \sim \observationfn\fun{\sampledot \mid \state', \action}}{\values{\latentpolicy}{}{\history \cdot \action \cdot \observation'} - \expectedsymbol{\hat{\observation}' \sim \latentobservationfn_{\decoderparameter}\fun{\sampledot \mid \embed_{\encoderparameter}\fun{\state', \observation'}}}\latentvalues{\latentpolicy}{}{\history \cdot \action \cdot \hat{\observation}'}}}\\
    &&&  +  \discount \KV \localtransitionloss{\historydistribution_{\latentpolicy}}\\
    &&& +  \discount \cdot \expectedsymbol{\history, \observation \sim \historydistribution_{\latentpolicy}}\expectedsymbol{\action \sim \latentpolicy\fun{\sampledot \mid \beliefencoder^{*}\fun{\history}}} \Bigg|\left[\expectedsymbol{\state \sim \beliefupdate^{*}\fun{\history}} \expectedsymbol{\latentstate' \sim \latentprobtransitions_{\decoderparameter}\fun{\sampledot \mid \embed_{\encoderparameter}\fun{\state, \observation}, \action}}\expectedsymbol{{\observation}' \sim \latentobservationfn_{\decoderparameter}\fun{\sampledot \mid \latentstate'}}\latentvalues{\latentpolicy}{}{\history \cdot \action \cdot {\observation}'} \right.\\
    &&&\left. {\color{blue}\qquad\qquad\qquad\qquad\qquad\qquad - \expectedsymbol{\latentstate \sim \beliefencoder^{*}\fun{\history}} \expectedsymbol{\latentstate' \sim \latentprobtransitions_{\decoderparameter}\fun{\sampledot \mid \latentstate, \action}}\expectedsymbol{\observation' \sim \latentobservationfn_{\decoderparameter}\fun{\sampledot \mid \latentstate'}} \latentvalues{\latentpolicy}{}{\history \cdot \action \cdot \observation'}} \right]\\
    &&& \quad \quad \quad \quad + \left[
    {\color{blue}\expectedsymbol{\latentstate \sim \beliefencoder^{*}\fun{\history}} \expectedsymbol{\latentstate' \sim \latentprobtransitions_{\decoderparameter}\fun{\sampledot \mid \latentstate, \action}}\expectedsymbol{\observation' \sim \latentobservationfn_{\decoderparameter}\fun{\sampledot \mid \latentstate'}} \latentvalues{\latentpolicy}{}{\history \cdot \action \cdot \observation'}} - 
    \expectedsymbol{\latentstate \sim \latentbeliefupdate^{*}\fun{\history}}\expectedsymbol{\latentstate' \sim \latentprobtransitions_{\decoderparameter}\fun{\sampledot \mid \latentstate, \action}}\expectedsymbol{\observation' \sim \latentobservationfn_{\decoderparameter}\fun{\sampledot \mid \latentstate'}}\latentvalues{\latentpolicy}{}{\history \cdot \action \cdot \observation'}
    \right]\Bigg|
    \tag{\color{blue}the belief encoder $\beliefencoder$ comes into play}
\end{align*}
\begin{align*}
    &\leq&&  \discount \cdot \expectedsymbol{\history \sim \historydistribution_{\latentpolicy}}\expectedsymbol{\action \sim \latentpolicy\fun{\sampledot \mid \beliefencoder^{*}\fun{\history}}}
    \abs{\expectedsymbol{\state \sim \beliefupdate^{*}\fun{\history}}\expectedsymbol{\state' \sim \probtransitions\fun{\sampledot \mid \state, \action}}\expected{\observation' \sim \observationfn\fun{\sampledot \mid \state', \action}}{\values{\latentpolicy}{}{\history \cdot \action \cdot \observation'} - \expectedsymbol{\hat{\observation}' \sim \latentobservationfn_{\decoderparameter}\fun{\sampledot \mid \embed_{\encoderparameter}\fun{\state', \observation'}}}\latentvalues{\latentpolicy}{}{\history \cdot \action \cdot \hat{\observation}'}}}\\
    &&&  +  \discount \KV \localtransitionloss{\historydistribution_{\latentpolicy}}\\
    &&& \color{blue}+  \discount \cdot \expectedsymbol{\history, \observation \sim \historydistribution_{\latentpolicy}}\expectedsymbol{\action \sim \latentpolicy\fun{\sampledot \mid \beliefencoder^{*}\fun{\history}}} \left|\expectedsymbol{\state \sim \beliefupdate^{*}\fun{\history}} \expectedsymbol{\latentstate' \sim \latentprobtransitions_{\decoderparameter}\fun{\sampledot \mid \embed_{\encoderparameter}\fun{\state, \observation}, \action}}\expectedsymbol{{\observation}' \sim \latentobservationfn_{\decoderparameter}\fun{\sampledot \mid \latentstate'}}\latentvalues{\latentpolicy}{}{\history \cdot \action \cdot {\observation}'} \right. \\
    &&& \color{blue}\left. \qquad\qquad\qquad\qquad\qquad- \expectedsymbol{\latentstate \sim \beliefencoder^{*}\fun{\history}} \expectedsymbol{\latentstate' \sim \latentprobtransitions_{\decoderparameter}\fun{\sampledot \mid \latentstate, \action}}\expectedsymbol{\observation' \sim \latentobservationfn_{\decoderparameter}\fun{\sampledot \mid \latentstate'}} \latentvalues{\latentpolicy}{}{\history \cdot \action \cdot \observation'} \right|\\
    &&&  \color{blue}+  \discount \cdot \expectedsymbol{\history \sim \historydistribution_{\latentpolicy}}\expectedsymbol{\action \sim \latentpolicy\fun{\sampledot \mid \beliefencoder^{*}\fun{\history}}} \left|
    \expectedsymbol{\latentstate \sim \beliefencoder^{*}\fun{\history}} \expectedsymbol{\latentstate' \sim \latentprobtransitions_{\decoderparameter}\fun{\sampledot \mid \latentstate, \action}}\expectedsymbol{\observation' \sim \latentobservationfn_{\decoderparameter}\fun{\sampledot \mid \latentstate'}} \latentvalues{\latentpolicy}{}{\history \cdot \action \cdot \observation'} \right.\\
    &&& \color{blue} \left. \qquad\qquad\qquad\qquad\qquad-
    \expectedsymbol{\latentstate \sim \latentbeliefupdate^{*}\fun{\history}}\expectedsymbol{\latentstate' \sim \latentprobtransitions_{\decoderparameter}\fun{\sampledot \mid \latentstate, \action}}\expectedsymbol{\observation' \sim \latentobservationfn_{\decoderparameter}\fun{\sampledot \mid \latentstate'}}\latentvalues{\latentpolicy}{}{\history \cdot \action \cdot \observation'}\right|
    \tag{ \color{blue}triangular inequality}\\
    &\leq&&  \discount \cdot \expectedsymbol{\history \sim \historydistribution_{\latentpolicy}}\expectedsymbol{\action \sim \latentpolicy\fun{\sampledot \mid \beliefencoder^{*}\fun{\history}}}
    \abs{\expectedsymbol{\state \sim \beliefupdate^{*}\fun{\history}}\expectedsymbol{\state' \sim \probtransitions\fun{\sampledot \mid \state, \action}}\expected{\observation' \sim \observationfn\fun{\sampledot \mid \state', \action}}{\values{\latentpolicy}{}{\history \cdot \action \cdot \observation'} - \expectedsymbol{\hat{\observation}' \sim \latentobservationfn_{\decoderparameter}\fun{\sampledot \mid \embed_{\encoderparameter}\fun{\state', \observation'}}}\latentvalues{\latentpolicy}{}{\history \cdot \action \cdot \hat{\observation}'}}}\\
    &&&  +  \discount \KV \localtransitionloss{\historydistribution_{\latentpolicy}}\\
    &&& +  \discount \cdot \expectedsymbol{\history, \observation \sim \historydistribution_{\latentpolicy}}\expectedsymbol{\action \sim \latentpolicy\fun{\sampledot \mid \beliefencoder^{*}\fun{\history}}}  \color{blue}\expectedsymbol{\state \sim \beliefupdate^{*}\fun{\history}} \expectedsymbol{\latentstate \sim \beliefencoder^{*}\fun{\history}} \left| \expected{\latentstate' \sim \latentprobtransitions_{\decoderparameter}\fun{\sampledot \mid \embed_{\encoderparameter}\fun{\state, \observation}, \action}}{\expectedsymbol{{\observation}' \sim \latentobservationfn_{\decoderparameter}\fun{\sampledot \mid \latentstate'}}\latentvalues{\latentpolicy}{}{\history \cdot \action \cdot {\observation}'}} \right. \\
    &&&  \color{blue}\qquad\qquad\qquad\qquad\qquad\qquad\qquad\qquad\qquad- \left.  \expected{\latentstate' \sim \latentprobtransitions_{\decoderparameter}\fun{\sampledot \mid \latentstate, \action}}{\expectedsymbol{\observation' \sim \latentobservationfn_{\decoderparameter}\fun{\sampledot \mid \latentstate'}} \latentvalues{\latentpolicy}{}{\history \cdot \action \cdot \observation'}}\right|\\
    &&& +  \discount \cdot \expectedsymbol{\history \sim \historydistribution_{\latentpolicy}}\expectedsymbol{\action \sim \latentpolicy\fun{\sampledot \mid \beliefencoder^{*}\fun{\history}}} \left|
    \expectedsymbol{\latentstate \sim \beliefencoder^{*}\fun{\history}} \expectedsymbol{\latentstate' \sim \latentprobtransitions_{\decoderparameter}\fun{\sampledot \mid \latentstate, \action}}\expectedsymbol{\observation' \sim \latentobservationfn_{\decoderparameter}\fun{\sampledot \mid \latentstate'}} \latentvalues{\latentpolicy}{}{\history \cdot \action \cdot \observation'} \right.\\
    &&& \left. \qquad\qquad\qquad\qquad\qquad - 
    \expectedsymbol{\latentstate \sim \latentbeliefupdate^{*}\fun{\history}}\expectedsymbol{\latentstate' \sim \latentprobtransitions_{\decoderparameter}\fun{\sampledot \mid \latentstate, \action}}\expectedsymbol{\observation' \sim \latentobservationfn_{\decoderparameter}\fun{\sampledot \mid \latentstate'}}\latentvalues{\latentpolicy}{}{\history \cdot \action \cdot \observation'}\right|
    \tag{ \color{blue}Jensen's inequality}\\
    &\leq&&  \discount \cdot \expectedsymbol{\history \sim \historydistribution_{\latentpolicy}}\expectedsymbol{\action \sim \latentpolicy\fun{\sampledot \mid \beliefencoder^{*}\fun{\history}}}
    \abs{\expectedsymbol{\state \sim \beliefupdate^{*}\fun{\history}}\expectedsymbol{\state' \sim \probtransitions\fun{\sampledot \mid \state, \action}}\expected{\observation' \sim \observationfn\fun{\sampledot \mid \state', \action}}{\values{\latentpolicy}{}{\history \cdot \action \cdot \observation'} - \expectedsymbol{\hat{\observation}' \sim \latentobservationfn_{\decoderparameter}\fun{\sampledot \mid \embed_{\encoderparameter}\fun{\state', \observation'}}}\latentvalues{\latentpolicy}{}{\history \cdot \action \cdot \hat{\observation}'}}}\\
    &&&  +  \discount \KV \localtransitionloss{\historydistribution_{\latentpolicy}}\\
    &&& +  \discount \cdot \expectedsymbol{\history, \observation \sim \historydistribution_{\latentpolicy}}\expectedsymbol{\action \sim \latentpolicy\fun{\sampledot \mid \beliefencoder^{*}\fun{\history}}} \expectedsymbol{\state \sim \beliefupdate^{*}\fun{\history}} \expectedsymbol{\latentstate \sim \beliefencoder^{*}\fun{\history}}  \color{blue}\KV \wassersteindist{\latentdistance}{\latentprobtransitions_{\decoderparameter}\fun{\sampledot \mid \embed_{\encoderparameter}\fun{\state, \observation}, \action}}{\latentprobtransitions_{\decoderparameter}\fun{\sampledot \mid \latentstate, \action}}\\
    &&& +  \discount \cdot \expectedsymbol{\history \sim \historydistribution_{\latentpolicy}}\expectedsymbol{\action \sim \latentpolicy\fun{\sampledot \mid \beliefencoder^{*}\fun{\history}}} \left|
    \expectedsymbol{\latentstate \sim \beliefencoder^{*}\fun{\history}} \expectedsymbol{\latentstate' \sim \latentprobtransitions_{\decoderparameter}\fun{\sampledot \mid \latentstate, \action}}\expectedsymbol{\observation' \sim \latentobservationfn_{\decoderparameter}\fun{\sampledot \mid \latentstate'}} \latentvalues{\latentpolicy}{}{\history \cdot \action \cdot \observation'} \right. \\
    &&& \left. \qquad\qquad\qquad\qquad\qquad\quad- 
    \expectedsymbol{\latentstate \sim \latentbeliefupdate^{*}\fun{\history}}\expectedsymbol{\latentstate' \sim \latentprobtransitions_{\decoderparameter}\fun{\sampledot \mid \latentstate, \action}}\expectedsymbol{\observation' \sim \latentobservationfn_{\decoderparameter}\fun{\sampledot \mid \latentstate'}}\latentvalues{\latentpolicy}{}{\history \cdot \action \cdot \observation'}\right|
    \tag{ \color{blue}as $\temperature \to 0$, by Lem.~\ref{lem:wasserstein-expecation}}\\
    &=&&  \discount \cdot \expectedsymbol{\history \sim \historydistribution_{\latentpolicy}}\expectedsymbol{\action \sim \latentpolicy\fun{\sampledot \mid \beliefencoder^{*}\fun{\history}}}
    \abs{\expectedsymbol{\state \sim \beliefupdate^{*}\fun{\history}}\expectedsymbol{\state' \sim \probtransitions\fun{\sampledot \mid \state, \action}}\expected{\observation' \sim \observationfn\fun{\sampledot \mid \state', \action}}{\values{\latentpolicy}{}{\history \cdot \action \cdot \observation'} - \expectedsymbol{\hat{\observation}' \sim \latentobservationfn_{\decoderparameter}\fun{\sampledot \mid \embed_{\encoderparameter}\fun{\state', \observation'}}}\latentvalues{\latentpolicy}{}{\history \cdot \action \cdot \hat{\observation}'}}}\\
    &&&  +  \discount \KV \cdot \fun{ \localtransitionloss{\historydistribution_{\latentpolicy}} +  { \color{blue}\onpolicytransitionloss{\historydistribution_{\latentpolicy}}}} \\
    &&& +  \discount \cdot \expectedsymbol{\history \sim \historydistribution_{\latentpolicy}}\expectedsymbol{\action \sim \latentpolicy\fun{\sampledot \mid \beliefencoder^{*}\fun{\history}}} \left|
    \expected{\latentstate \sim \beliefencoder^{*}\fun{\history}}{ \expectedsymbol{\latentstate' \sim \latentprobtransitions_{\decoderparameter}\fun{\sampledot \mid \latentstate, \action}}\expectedsymbol{\observation' \sim \latentobservationfn_{\decoderparameter}\fun{\sampledot \mid \latentstate'}} \latentvalues{\latentpolicy}{}{\history \cdot \action \cdot \observation'}} \right.\\
    &&& \left. \qquad\qquad\qquad\qquad\qquad-  \expected{\latentstate \sim \latentbeliefupdate^{*}\fun{\history}}{\expectedsymbol{\latentstate' \sim \latentprobtransitions_{\decoderparameter}\fun{\sampledot \mid \latentstate, \action}}\expectedsymbol{\observation' \sim \latentobservationfn_{\decoderparameter}\fun{\sampledot \mid \latentstate'}}\latentvalues{\latentpolicy}{}{\history \cdot \action \cdot \observation'}}\right|
    \tag{ \color{blue}by definition of $\onpolicytransitionloss{\historydistribution_{\latentpolicy}}$, Eq.~\ref{eq:on-policy-losses}}
\end{align*}
\begin{align*}
    &\leq&&  \discount \cdot \expectedsymbol{\history \sim \historydistribution_{\latentpolicy}}\expectedsymbol{\action \sim \latentpolicy\fun{\sampledot \mid \beliefencoder^{*}\fun{\history}}}
    \abs{\expectedsymbol{\state \sim \beliefupdate^{*}\fun{\history}}\expectedsymbol{\state' \sim \probtransitions\fun{\sampledot \mid \state, \action}}\expected{\observation' \sim \observationfn\fun{\sampledot \mid \state', \action}}{\values{\latentpolicy}{}{\history \cdot \action \cdot \observation'} - \expectedsymbol{\hat{\observation}' \sim \latentobservationfn_{\decoderparameter}\fun{\sampledot \mid \embed_{\encoderparameter}\fun{\state', \observation'}}}\latentvalues{\latentpolicy}{}{\history \cdot \action \cdot \hat{\observation}'}}}\\
    &&&  +  \discount \KV \cdot \fun{ \localtransitionloss{\historydistribution_{\latentpolicy}} +  \onpolicytransitionloss{\historydistribution_{\latentpolicy}}} \\
    &&& \color{blue} +  \discount \cdot \expectedsymbol{\history \sim \historydistribution_{\latentpolicy}}\expectedsymbol{\action \sim \latentpolicy\fun{\sampledot \mid \beliefencoder^{*}\fun{\history}}} \KV \wassersteindist{\latentdistance}{\latentbeliefupdate^{*}\fun{\history}}{\beliefencoder^{*}\fun{\history}}
    \tag{\color{blue}as $\temperature \to 0$, by Lem.~\ref{lem:wasserstein-expecation}; note that Wasserstein is symmetric since it is a distance metric \citep{Villani2009}}\\
    &\leq&&  \discount \cdot \expectedsymbol{\history \sim \historydistribution_{\latentpolicy}}\expectedsymbol{\action \sim \latentpolicy\fun{\sampledot \mid \beliefencoder^{*}\fun{\history}}}
    \abs{\expectedsymbol{\state \sim \beliefupdate^{*}\fun{\history}}\expectedsymbol{\state' \sim \probtransitions\fun{\sampledot \mid \state, \action}}\expected{\observation' \sim \observationfn\fun{\sampledot \mid \state', \action}}{\values{\latentpolicy}{}{\history \cdot \action \cdot \observation'} - \expectedsymbol{\hat{\observation}' \sim \latentobservationfn_{\decoderparameter}\fun{\sampledot \mid \embed_{\encoderparameter}\fun{\state', \observation'}}}\latentvalues{\latentpolicy}{}{\history \cdot \action \cdot \hat{\observation}'}}}\\
    &&&  +  \discount \KV \cdot \fun{ \localtransitionloss{\historydistribution_{\latentpolicy}} +  \onpolicytransitionloss{\historydistribution_{\latentpolicy}} + {\color{blue}\beliefloss{\historydistribution_{\latentpolicy}}}}
    \tag{\color{blue}by definition of $\beliefloss{\historydistribution_{\latentpolicy}}$, Eq.~\ref{eq:on-policy-losses}}\\
    &=&&  \discount \cdot \expectedsymbol{\history, \observation \sim \historydistribution_{\latentpolicy}}\expectedsymbol{\action \sim \latentpolicy\fun{\sampledot \mid \beliefencoder^{*}\fun{\history}}}
    \left|\expectedsymbol{\state \sim \beliefupdate^{*}\fun{\history}}\expectedsymbol{\state', \observation' \sim \probtransitions_{\observations}\fun{\sampledot \mid \state, \observation, \action}}\left[\fun{\values{\latentpolicy}{}{\history \cdot \action \cdot \observation'} -
    \latentvalues{\latentpolicy}{}{\history \cdot \action \cdot \observation'}} \vphantom{\fun{\latentvalues{\latentpolicy}{}{\history \cdot \action \cdot \observation'} - 
    \expectedsymbol{\hat{\observation}' \sim \latentobservationfn_{\decoderparameter}\fun{\sampledot \mid \embed_{\encoderparameter}\fun{\state', \observation'}}}\latentvalues{\latentpolicy}{}{\history \cdot \action \cdot \hat{\observation}'}}}\right.\right. \\
    &&&\left.\left. \qquad\qquad\qquad\qquad\qquad\qquad\qquad\qquad\qquad\quad+ %
    \fun{\latentvalues{\latentpolicy}{}{\history \cdot \action \cdot \observation'} - 
    \expectedsymbol{\hat{\observation}' \sim \latentobservationfn_{\decoderparameter}\fun{\sampledot \mid \embed_{\encoderparameter}\fun{\state', \observation'}}}\latentvalues{\latentpolicy}{}{\history \cdot \action \cdot \hat{\observation}'}}\right]\right|\\
    &&&  +  \discount \KV \cdot \fun{ \localtransitionloss{\historydistribution_{\latentpolicy}} +  \onpolicytransitionloss{\historydistribution_{\latentpolicy}} + \beliefloss{\historydistribution_{\latentpolicy}}}\\
    &\leq&&  \discount \color{blue}\cdot \expectedsymbol{\history, \observation \sim \historydistribution_{\latentpolicy}}\expectedsymbol{\action \sim \latentpolicy\fun{\sampledot \mid \beliefencoder^{*}\fun{\history}}}
    \abs{\expectedsymbol{\state \sim \beliefupdate^{*}\fun{\history}}\expected{\state', \observation' \sim \probtransitions_{\observations}\fun{\sampledot \mid \state, \observation, \action}}{\values{\latentpolicy}{}{\history \cdot \action \cdot \observation'} -
    \latentvalues{\latentpolicy}{}{\history \cdot \action \cdot \observation'}}} \\
    &&& + \discount \cdot \color{blue}
    \expectedsymbol{\history, \observation \sim \historydistribution_{\latentpolicy}}\expectedsymbol{\action \sim \latentpolicy\fun{\sampledot \mid \beliefencoder^{*}\fun{\history}}}
    \abs{\expectedsymbol{\state \sim \beliefupdate^{*}\fun{\history}}\expected{\state', \observation' \sim \probtransitions_{\observations}\fun{\sampledot \mid \state, \observation, \action}}
    {\latentvalues{\latentpolicy}{}{\history \cdot \action \cdot \observation'} - 
    \expectedsymbol{\hat{\observation}' \sim \latentobservationfn_{\decoderparameter}\fun{\sampledot \mid \embed_{\encoderparameter}\fun{\state', \observation'}}}\latentvalues{\latentpolicy}{}{\history \cdot \action \cdot \hat{\observation}'}}}\\
    &&&  +  \discount \KV \cdot \fun{ \localtransitionloss{\historydistribution_{\latentpolicy}} +  \onpolicytransitionloss{\historydistribution_{\latentpolicy}} + \beliefloss{\historydistribution_{\latentpolicy}}}
    \tag{\color{blue}triangular inequality}
\end{align*}
\begin{align*}
    &\leq&&  \discount \cdot \expectedsymbol{\history, \observation \sim \historydistribution_{\latentpolicy}}\expectedsymbol{\action \sim \latentpolicy\fun{\sampledot \mid \beliefencoder^{*}\fun{\history}}}
    \abs{\expectedsymbol{\state \sim \beliefupdate^{*}\fun{\history}}\expected{\state', \observation' \sim \probtransitions_{\observations}\fun{\sampledot \mid \state, \observation, \action}}{\values{\latentpolicy}{}{\history \cdot \action \cdot \observation'} -
    \latentvalues{\latentpolicy}{}{\history \cdot \action \cdot \observation'}}} \\
    &&& + \discount \cdot
    \expectedsymbol{\history, \observation \sim \historydistribution_{\latentpolicy}}\expectedsymbol{\action \sim \latentpolicy\fun{\sampledot \mid \beliefencoder^{*}\fun{\history}}} \color{blue}
    \expectedsymbol{\state \sim \beliefupdate^{*}\fun{\history}}
    \expectedsymbol{\state' \sim \probtransitions\fun{\sampledot \mid \state, \action}}
    \abs{\expected{\observation' \sim \observationfn\fun{\sampledot \mid \state', \action}}
    {\latentvalues{\latentpolicy}{}{\history \cdot \action \cdot \observation'} - 
    \expectedsymbol{\hat{\observation}' \sim \latentobservationfn_{\decoderparameter}\fun{\sampledot \mid \embed_{\encoderparameter}\fun{\state', \observation'}}}\latentvalues{\latentpolicy}{}{\history \cdot \action \cdot \hat{\observation}'}}}\\
    &&&  +  \discount \KV \cdot \fun{ \localtransitionloss{\historydistribution_{\latentpolicy}} +  \onpolicytransitionloss{\historydistribution_{\latentpolicy}} + \beliefloss{\historydistribution_{\latentpolicy}}}
    \tag{\color{blue}by definition of $\probtransitions_{\observations}$ and the Jensen's inequality}\\
    &\leq&&  \discount \cdot \expectedsymbol{\history, \observation \sim \historydistribution_{\latentpolicy}}\expectedsymbol{\action \sim \latentpolicy\fun{\sampledot \mid \beliefencoder^{*}\fun{\history}}}
    \abs{\expectedsymbol{\state \sim \beliefupdate^{*}\fun{\history}}\expected{\state', \observation' \sim \probtransitions_{\observations}\fun{\sampledot \mid \state, \observation, \action}}{\values{\latentpolicy}{}{\history \cdot \action \cdot \observation'} -
    \latentvalues{\latentpolicy}{}{\history \cdot \action \cdot \observation'}}} \\
    &&& + \discount \cdot
    \expectedsymbol{\history, \observation \sim \historydistribution_{\latentpolicy}}\expectedsymbol{\action \sim \latentpolicy\fun{\sampledot \mid \beliefencoder^{*}\fun{\history}}}
    \expectedsymbol{\state \sim \beliefupdate^{*}\fun{\history}}
    \expectedsymbol{\state' \sim \probtransitions\fun{\sampledot \mid \state, \action}} \color{blue}
    \KV \dtv{\observationfn\fun{\sampledot \mid \state', \action}}{\expectedsymbol{\observation' \sim \state', \action}\latentobservationfn_{\decoderparameter}\fun{\sampledot \mid \embed_{\encoderparameter}\fun{\state', \observation'}}}\\
    &&&  +  \discount \KV \cdot \fun{ \localtransitionloss{\historydistribution_{\latentpolicy}} +  \onpolicytransitionloss{\historydistribution_{\latentpolicy}} + \beliefloss{\historydistribution_{\latentpolicy}}}
    \tag{\color{blue}cf. Prop.~\ref{prop:lipschitz-constant} and Lem~\ref{lem:wasserstein-expecation}}\\
    &=&&  \discount \cdot \expectedsymbol{\history, \observation \sim \historydistribution_{\latentpolicy}}\expectedsymbol{\action \sim \latentpolicy\fun{\sampledot \mid \beliefencoder^{*}\fun{\history}}}
    \abs{\expectedsymbol{\state \sim \beliefupdate^{*}\fun{\history}}\expected{\state', \observation' \sim \probtransitions_{\observations}\fun{\sampledot \mid \state, \observation, \action}}{\values{\latentpolicy}{}{\history \cdot \action \cdot \observation'} -
    \latentvalues{\latentpolicy}{}{\history \cdot \action \cdot \observation'}}} \\
    &&&  +  \discount \KV \cdot \fun{ \localtransitionloss{\historydistribution_{\latentpolicy}} +  \onpolicytransitionloss{\historydistribution_{\latentpolicy}} + \beliefloss{\historydistribution_{\latentpolicy}} + {\color{blue}\observationloss{\historydistribution_{\latentpolicy}}}}
    \tag{\color{blue}by definition of $\observationloss{\historydistribution_{\latentpolicy}}$, Eq.~\ref{eq:observation-loss}}\\
    &\leq&&  \discount \cdot \expectedsymbol{\history, \observation \sim \historydistribution_{\latentpolicy}}\expectedsymbol{\action \sim \latentpolicy\fun{\sampledot \mid \beliefencoder^{*}\fun{\history}}} \color{blue}
    \expectedsymbol{\state \sim \beliefupdate^{*}\fun{\history}}\expectedsymbol{\state', \observation' \sim \probtransitions_{\observations}\fun{\sampledot \mid \state, \observation, \action}}
    \abs{{\values{\latentpolicy}{}{\history \cdot \action \cdot \observation'} -
    \latentvalues{\latentpolicy}{}{\history \cdot \action \cdot \observation'}}}  \\
    &&& +  \discount \KV \cdot \fun{ \localtransitionloss{\historydistribution_{\latentpolicy}} +  \onpolicytransitionloss{\historydistribution_{\latentpolicy}} + \beliefloss{\historydistribution_{\latentpolicy}} + \observationloss{\historydistribution_{\latentpolicy}}}
    \tag{\color{blue}Jensen's inequality}\\
    &=&&  \discount \cdot {\color{blue} \expectedsymbol{\history, \observation \sim \historydistribution_{\latentpolicy}}
    \abs{{\values{\latentpolicy}{}{\history} -
    \latentvalues{\latentpolicy}{}{\history}}}} + \discount \KV \cdot \fun{ \localtransitionloss{\historydistribution_{\latentpolicy}} +  \onpolicytransitionloss{\historydistribution_{\latentpolicy}} + \beliefloss{\historydistribution_{\latentpolicy}} + \observationloss{\historydistribution_{\latentpolicy}}}
    \tag{\color{blue}$\historydistribution_{\latentpolicy}$ is a stationary distribution (Lem.~\ref{lem:extended-stationary-histories}) which allows us to apply the stationary property (Def.~\ref{def:stationary-distr})}
\end{align*}

\smallparagraph{Putting all together.}~%
To recap, by Part 1 and 2, we have:
\begin{align*}
    \expectedsymbol{\history \sim \historydistribution_{\latentpolicy}}\abs{\values{\latentpolicy}{}{\history} - \latentvalues{\latentpolicy}{}{\history}} &\leq \localrewardloss{\historydistribution_{\latentpolicy}} + \onpolicyrewardloss{\historydistribution_{\latentpolicy}} + \Rmax \beliefloss{\historydistribution_{\latentpolicy}} + \discount \cdot \expectedsymbol{\history \sim \historydistribution_{\latentpolicy}}\abs{\values{\latentpolicy}{}{\history} - \latentvalues{\latentpolicy}{}{\history}} \\
    & \qquad+ \discount \KV \cdot \fun{ \localtransitionloss{\historydistribution_{\latentpolicy}} +  \onpolicytransitionloss{\historydistribution_{\latentpolicy}} + \beliefloss{\historydistribution_{\latentpolicy}} + \observationloss{\historydistribution_{\latentpolicy}}} \\
    \expectedsymbol{\history \sim \historydistribution_{\latentpolicy}}\abs{\values{\latentpolicy}{}{\history} - \latentvalues{\latentpolicy}{}{\history}} \cdot \fun{1 - \discount} &\leq \localrewardloss{\historydistribution_{\latentpolicy}} + \onpolicyrewardloss{\historydistribution_{\latentpolicy}} + \Rmax \beliefloss{\historydistribution_{\latentpolicy}} + \discount \KV \cdot \fun{ \localtransitionloss{\historydistribution_{\latentpolicy}} +  \onpolicytransitionloss{\historydistribution_{\latentpolicy}} + \beliefloss{\historydistribution_{\latentpolicy}} + \observationloss{\historydistribution_{\latentpolicy}}} \\
    \expectedsymbol{\history \sim \historydistribution_{\latentpolicy}}\abs{\values{\latentpolicy}{}{\history} - \latentvalues{\latentpolicy}{}{\history}} &\leq \frac{\localrewardloss{\historydistribution_{\latentpolicy}} + \onpolicyrewardloss{\historydistribution_{\latentpolicy}} + \Rmax \beliefloss{\historydistribution_{\latentpolicy}} + \discount \KV \cdot \fun{ \localtransitionloss{\historydistribution_{\latentpolicy}} +  \onpolicytransitionloss{\historydistribution_{\latentpolicy}} + \beliefloss{\historydistribution_{\latentpolicy}} + \observationloss{\historydistribution_{\latentpolicy}}}}{1 - \discount}
\end{align*}
which finally concludes the proof.
\end{proof}

\subsection{Representation Quality Bound}
We start by showing that the optimal \emph{latent} value function is \emph{almost} Lipschitz continuous in the latent belief space.
Coupled with Theorem~\ref{thm:value-diff-bounds-extended}, this result allows to show that whenever \emph{two pairs of histories are encoded to close representations, their values (i.e., the return obtained from that history points) are guaranteed to be close as well} whenever the losses introduced in Sec.~\ref{sec:guarantees} are minimized and go to zero.
Phrased differently, this Theorem ensures that the representation induced by our encoder is suitable to optimize the value function since the distance between beliefs in the latent space characterizes the distance of behaviors of the agent in the original environment. The latent belief space thus captures the necessary information to learn a policy that optimizes the expected return.

\begin{definition}[Almost Lipschitzness]
Let $\measurableset$ be a measurable set equipped with a metric $\distance \colon \measurableset \to \mathopen[0, \infty\mathclose)$ and $f \colon \measurableset \to \R$.
We say that $f$ is \emph{almost} Lipschitz continuous (e.g., \citepAR{note-vanderbei}) iff for all $\epsilon > 0$, there is a constant $K \geq 0$ so that $\abs{f\fun{x_1} - f\fun{x_2}} \leq K \distance\fun{x_1, x_2} + \epsilon$ for any $x_1, x_2 \in \measurableset$.
\end{definition}

\begin{notation}[Optimal value function]
For any MDP $\mdp$, let $\policy^{\star}$ be an optimal policy of $\mdp$, then we write $\valuessymbol{}{\star}$ for $\valuessymbol{\policy^{\star}}{}$ (see Property~\ref{prop:pomdp-values} for the values of a POMDP).
\end{notation}

\begin{lemma}\label{optimal-latent-values-almost-lipschitz}
Let $\pomdp = \pomdptuple$ be a POMDP with underlying MDP $\mdp = \mdptuple$.
Assume that $\pomdp$ is discrete, i.e., $\states$, $\actions$, and $\observations$ are finite sets.
Then, $\valuessymbol{}{\star}$ is almost Lipschitz continuous.
\end{lemma}
\begin{proof}
Define $\mathcal{V}$ as the set of real-valued bounded functions $V \colon \beliefs \to \R$ and $U \colon \beliefs \times \actions \times \mathcal{V} \to  \mathcal{V}$ as
\begin{equation*}
    U\fun{\belief, \action, V} = \rewards_\beliefs\fun{\belief, \action} + \expected{\belief' \sim \probtransitions_{\beliefs}\fun{\sampledot \mid \belief, \action}}{\discount V\fun{\belief'}}. 
\end{equation*}
The Bellman update operator is defined as $\mathcal{U} \colon \mathcal{V} \to \mathcal{V}$ as $\fun{\mathcal{U}V}\fun{\belief} = \max_{\action \in \actions} U\fun{\belief, \action, V}$ and is an isotone mapping that is a contraction under the supremum norm with fixed point $\valuessymbol{}{\star}$, i.e., $\valuessymbol{}{\star} = \mathcal{U}\valuessymbol{}{\star}$ \citep{DBLP:books/wi/Puterman94,DBLP:journals/jair/Hauskrecht00,Sutton1998ReinforcementIntroduction}.
Furthermore, for any initial value function $V_0 \in \mathcal{V}$, the sequence resulting from value iteration (VI), $V_{i + 1} = \mathcal{U}V_i$, converges to $\valuessymbol{}{\star}$ (with linear convergence rate $\discount$ \citet{DBLP:books/wi/Puterman94}): for any $\epsilon' > 0$, there is a $i \in \N$ so that for all $j \geq i$, $\norm{\valuessymbol{j}{} - \valuessymbol{}{\star}}_{\infty} \leq \epsilon'$.
Now, let $\epsilon > 0$; in particular, the latter statement holds for $\epsilon' = \nicefrac{\epsilon}{2}$.
Since the convergence of VI holds for any initial value, we assume that $V_0 \in \mathcal{V}$ has been chosen as a \emph{piecewise linear convex} (PWLC) function.
Then, for all $i \geq 0$, $V_i$ is also PWLC \citep{DBLP:journals/ior/Sondik78,DBLP:journals/ior/SmallwoodS73,DBLP:journals/jair/Hauskrecht00}.
Since $\states$ is discrete, the belief space $\beliefs$ is the standard $\abs{\states}$-dimensional simplex, so the domain of $V_i$ is compact, meaning that it is defined as a finite collection of linear functions.
Thus, $V_i$ is also $2K$-Lipschitz: one just need take $2K$ as the higher slope of these functions (in absolute value).
In consequence, for any pair of beliefs $\belief_1, \belief_2 \in \beliefs,$
\begin{align*}
    & \abs{\values{}{\star}{\belief_1} - \values{}{\star}{\belief_2}}\\
    = & \abs{\values{}{\star}{\belief_1} - \values{i}{}{\belief_1} + \values{i}{}{\belief_1} - \values{i}{}{\belief_2} + \values{i}{}{\belief_2} - \values{}{\star}{\belief_2}} \\
    \leq & \abs{\values{}{\star}{\belief_1} - \values{i}{}{\belief_1}} + \abs{ \values{i}{}{\belief_1} - \values{i}{}{\belief_2}} + \abs{ \values{i}{}{\belief_2} - \values{}{\star}{\belief_2}} \tag{Triangular inequality} \\
    \leq & 2 \epsilon' + \abs{\values{i}{}{\belief_1} - \values{i}{}{\belief_2}} \tag{by the convergence of VI} \\
    =&  \epsilon + \abs{\values{i}{}{\belief_1} - \values{i}{}{\belief_2}} \\
    \leq & \epsilon + {K} \cdot \dtv{\belief_1}{\belief_2} \tag{since $\dtv{\belief_1}{\belief_2} = \nicefrac{1}{2}\norm{b_1 - b_2}_1$},\\
\end{align*}
which means that $\valuessymbol{}{\star}$ is almost Lipschitz, by definition.
\end{proof}

\begin{corollary}\label{corr-latent-values-almost-lipschitz}
 When the temperature of the WAE-MDP (see Appendix~\ref{appendix:temperature}) and the variance of $\latentobservationfn$ go to zero (see Remark~\ref{rmk:variance}), the optimal latent value function of $\latentpomdp$ is almost Lipschitz-continuous.
\end{corollary}
\begin{proof}
Assuming the WAE-MDP temperature goes to zero, the state space of $\latentpomdp$ is discrete, $\latentdistance = \condition{\neq}$, and $\wassersteinsymbol{\latentdistance} = \dtvsymbol$.
Furthermore, $\latentobservationfn$ is deterministic as its variance goes to zero; therefore the set of observations of $\latentpomdp$ can be limited to the set of images of $\latentobservationfn_{\mu}$, which is finite since $\latentstates$ is finite.
Then Lemma~\ref{optimal-latent-values-almost-lipschitz} can be applied.
\end{proof}

\begin{theorem}\label{thm:representation-quality-appendix}
Let $\latentpolicy^{\star}$ be an optimal policy of the POMDP $\latentpomdp_{\decoderparameter}$.
Then, for any $\epsilon > 0$, there is a constant $K \geq 0$ so that for any pair of measurable histories under the original and latent models $\history_1, \history_2$ that are mapped to latent beliefs through $ \beliefencoder^{*}\fun{\history_1} = \latentbelief_1$ and $\beliefencoder^{*}\fun{\history_2} = \latentbelief_2$,
the belief representation induced by $\beliefencoder$ almost surely yields:
\begin{multline*}
    \abs{\values{\latentpolicy^{\star}}{}{\history_1} - \values{\latentpolicy^{\star}}{}{\history_2}} \leq
    K \wassersteindist{\latentdistance}{\latentbelief_1}{\latentbelief_2} +\, \epsilon \, + \\
    \frac{\localrewardloss{\historydistribution_{\latentpolicy^{\star}}} + \onpolicyrewardloss{\historydistribution_{\latentpolicy^{\star}}} + \fun{K + \discount\KV + \Rmax} \beliefloss{\historydistribution_{\latentpolicy^{\star}}} + \discount \KV \cdot \fun{ \localtransitionloss{\historydistribution_{\latentpolicy^{\star}}} +  \onpolicytransitionloss{\historydistribution_{\latentpolicy^{\star}}} + \observationloss{\historydistribution_{\latentpolicy^{\star}}}}}{1 - \discount} \fun{\historydistribution_{\latentpolicy^{\star}}\fun{\history_1}^{-1} + \historydistribution_{\latentpolicy^{\star}}\fun{\history_2}^{-1}}
\end{multline*}
when the WAE-MDP temperature as well as the variance of the observation decoder go to zero.
\end{theorem}
\begin{proof}
First, observe that for any measurable history $\history$,
$
\abs{\values{\latentpolicy^{\star}}{}{\history} - \latentvalues{\latentpolicy^{\star}}{}{\history}} \leq
\historydistribution_{\latentpolicy^{\star}}\fun{\history}^{-1} \cdot \expectedsymbol{\history' \sim \historydistribution_{\latentpolicy^{\star}}}\abs{\values{\latentpolicy^{\star}}{}{\history'} - \latentvalues{\latentpolicy^{\star}}{}{\history'}}
$ (cf. \citealt{DBLP:conf/icml/GeladaKBNB19}).
Therefore, we have:
\begin{align*}
    & \abs{\values{\latentpolicy^{\star}}{}{\history_1} - \values{\latentpolicy^{\star}}{}{\history_2}} \\
    = & \abs{\values{\latentpolicy^{\star}}{}{\history_1} - \latentvalues{\latentpolicy^{\star}}{}{\history_1} + \latentvalues{\latentpolicy^{\star}}{}{\history_1} -  \latentvalues{\latentpolicy^{\star}}{}{\history_2} + \latentvalues{\latentpolicy^{\star}}{}{\history_2} - \values{\latentpolicy^{\star}}{}{\history_2}} \\
    \leq & \abs{\values{\latentpolicy^{\star}}{}{\history_1} - \latentvalues{\latentpolicy^{\star}}{}{\history_1}} + \abs{ \latentvalues{\latentpolicy^{\star}}{}{\history_1} -  \latentvalues{\latentpolicy^{\star}}{}{\history_2}} + \abs{ \latentvalues{\latentpolicy^{\star}}{}{\history_2} - \values{\latentpolicy^{\star}}{}{\history_2}} \tag{Triangular inequality}\\
    \leq & \historydistribution_{\latentpolicy^{\star}}\fun{\history_1}^{-1} \expectedsymbol{\history \sim \historydistribution_{\latentpolicy^{\star}}} \abs{\values{\latentpolicy^{\star}}{}{\history} - \latentvalues{\latentpolicy^{\star}}{}{\history}} + \abs{ \latentvalues{\latentpolicy^{\star}}{}{\history_1} -  \latentvalues{\latentpolicy^{\star}}{}{\history_2}} \\
    &+ \historydistribution_{\latentpolicy^{\star}}\fun{\history_2}^{-1} \expectedsymbol{\history \sim \historydistribution_{\latentpolicy^{\star}}} \abs{\values{\latentpolicy^{\star}}{}{\history} - \latentvalues{\latentpolicy^{\star}}{}{\history}} \\
    \leq & \abs{ \latentvalues{\latentpolicy^{\star}}{}{\history_1} -  \latentvalues{\latentpolicy^{\star}}{}{\history_2}} \\
    &+ \frac{\localrewardloss{\historydistribution_{\latentpolicy^{\star}}} + \onpolicyrewardloss{\historydistribution_{\latentpolicy^{\star}}} + \Rmax \beliefloss{\historydistribution_{\latentpolicy^{\star}}} + \discount \KV \cdot \fun{ \localtransitionloss{\historydistribution_{\latentpolicy^{\star}}} +  \onpolicytransitionloss{\historydistribution_{\latentpolicy^{\star}}} + \beliefloss{\historydistribution_{\latentpolicy^{\star}}} + \observationloss{\historydistribution_{\latentpolicy^*}}}}{1 - \discount} \fun{\historydistribution_{\latentpolicy^{\star}}\fun{\history_1}^{-1} + \historydistribution_{\latentpolicy^{\star}}\fun{\history_2}^{-1}} \tag{Thm.~\ref{thm:value-diff-bounds}}
\end{align*}

Consider $\history_{\bot} = \arg\min \set{ \historydistribution_{\latentpolicy^{\star}}\fun{\history} \colon \history \in \support{\historydistribution_{\latentpolicy^{\star}}} \cap \mathbf{H}\fun{\pomdp, \latentpomdp}}$, where $\support{\historydistribution_{\latentpolicy^{\star}}}$ denotes the support of $\historydistribution_{\latentpolicy^{\star}}$ and $\mathbf{H}\fun{\pomdp, \latentpomdp}$ the set of histories compliant for $\pomdp$ and $\latentpomdp$ (i.e., histories whose observations are in the intersection of the reachable parts of the original and latent observation spaces). 
Notice that $\history_{\bot}$ exists almost surely, i.e., with probability one. 
First, $\historydistribution_{\latentpolicy}$, which is defined over histories from the history unfolding (cf. Section~\ref{appendix:proof-stationary}), is solely defined over episodes, i.e., histories for which the POMDP did not restart following a reset (Section~\ref{appendix:unfolding}), so we only need consider such histories.
By Assumption~\ref{assumption:episodic}, the reset state is almost surely triggered, which means that the length of histories ending in $\observation^{\star}$ (the ``reset" observation, cf. Definition~\ref{def:episodic}) is also almost surely finite (otherwise, the POMDP does not reset and this event has a zero probability). 
Second, we consider histories which are measurable in the two models (i.e., the original and latent POMDPs) and the observation space of the latent POMDP is finite (cf. the Proof of Corollary~\ref{corr-latent-values-almost-lipschitz}).
So, the set of histories from which we extract the minimum is almost surely finite.

Now, let $\epsilon > 0$.
Recall that the latent value function (defined over the latent belief space) is almost Lipschitz continuous (Corollary~\ref{corr-latent-values-almost-lipschitz}).
In particular, for $\delta = \frac{\epsilon}{\fun{1 + 2\historydistribution_{\latentpolicy^{\star}}\fun{\history_{\bot}}^{-1}}}$,
there is a $K \geq 0$ so that for any $\latentbelief, \latentbelief' \in \latentbeliefs$, $\abs{\latentvalues{}{\star}{\belief} - \latentvalues{}{\star}{\belief'}} \leq K \wassersteindist{\latentdistance}{\belief}{\belief'} + \delta$.
Then:
\begin{align*}
    & \abs{\latentvalues{}{\star}{\history_1} - \latentvalues{}{\star}{\history_2}} \\
    = & \abs{\latentvalues{}{\star}{\latentbeliefupdate^{*}\fun{\history_1}} - \latentvalues{}{\star}{\latentbeliefupdate^{*}\fun{\history_2}}} \\
    = & \abs{\latentvalues{}{\star}{\latentbeliefupdate^{*}\fun{\history_1}} - \latentvalues{}{\star}{\beliefencoder^{*}\fun{\history_1}} + \latentvalues{}{\star}{\beliefencoder^{*}\fun{\history_1}} - \latentvalues{}{\star}{\beliefencoder^{*}\fun{\history_2}} + \latentvalues{}{\star}{\beliefencoder^{*}\fun{\history_2}} - \latentvalues{}{\star}{\latentbeliefupdate^{*}\fun{\history_2}}} \\
    \leq & \abs{\latentvalues{}{\star}{\latentbeliefupdate^{*}\fun{\history_1}} - \latentvalues{}{\star}{\beliefencoder^{*}\fun{\history_1}}} + \abs{ \latentvalues{}{\star}{\beliefencoder^{*}\fun{\history_1}} - \latentvalues{}{\star}{\beliefencoder^{*}\fun{\history_2}}} + \abs{ \latentvalues{}{\star}{\beliefencoder^{*}\fun{\history_2}} - \latentvalues{}{\star}{\latentbeliefupdate^{*}\fun{\history_2}}} \tag{Triangular inequality} \\
    \end{align*}
    \begin{align*}
    \leq& \historydistribution_{\latentpolicy^{\star}}\fun{\history_1}^{-1} \expectedsymbol{\history \sim \historydistribution_{\latentpolicy^{\star}}} \abs{\latentvalues{}{\star}{\latentbeliefupdate^{*}\fun{\history}} - \latentvalues{}{\star}{\beliefencoder^{*}\fun{\history}}} + \abs{\latentvalues{}{\star}{\belief_1} - \latentvalues{}{\star}{\belief_2}} \\
    &+ \historydistribution_{\latentpolicy^{\star}}\fun{\history_2}^{-1} \expectedsymbol{\history \sim \historydistribution_{\latentpolicy^{\star}}} \abs{\latentvalues{}{\star}{\latentbeliefupdate^{*}\fun{\history}} - \latentvalues{}{\star}{\beliefencoder^{*}\fun{\history}}} \\
    = & \fun{\historydistribution_{\latentpolicy^{\star}}\fun{\history_1}^{-1}+ \historydistribution_{\latentpolicy^{\star}}\fun{\history_2}^{-1}} \expectedsymbol{\history \sim \historydistribution_{\latentpolicy^{\star}}} \abs{\latentvalues{}{\star}{\latentbeliefupdate^{*}\fun{\history}} - \latentvalues{}{\star}{\beliefencoder^{*}\fun{\history}}} + \abs{\latentvalues{}{\star}{\belief_1} - \latentvalues{}{\star}{\belief_2}} \\
    \leq& \fun{\historydistribution_{\latentpolicy^{\star}}\fun{\history_1}^{-1}+ \historydistribution_{\latentpolicy^{\star}}\fun{\history_2}^{-1}} \expected{\history \sim \historydistribution_{\latentpolicy^{\star}}}{ K \wassersteindist{\latentdistance}{\latentbeliefupdate^{*}\fun{\history}}{\beliefencoder^{*}\fun{\history}} + \delta} + K \wassersteindist{\latentbelief}{\latentbelief_1}{\latentbelief_2} + \delta \tag{$\latentvaluessymbol{}{\star}$ is almost Lipschitz} \\
    = & \fun{\historydistribution_{\latentpolicy^{\star}}\fun{\history_1}^{-1}+ \historydistribution_{\latentpolicy^{\star}}\fun{\history_2}^{-1}} \fun{K \beliefloss{} + \delta} + K \wassersteindist{\latentbelief}{\latentbelief_1}{\latentbelief_2} + \delta \tag{by definition of $\beliefloss{}$}\\
    = & \fun{\historydistribution_{\latentpolicy^{\star}}\fun{\history_1}^{-1}+ \historydistribution_{\latentpolicy^{\star}}\fun{\history_2}^{-1}} K \beliefloss{} + K \wassersteindist{\latentbelief}{\latentbelief_1}{\latentbelief_2} + \delta \fun{1 + \historydistribution_{\latentpolicy^{\star}}\fun{\history_1}^{-1}+ \historydistribution_{\latentpolicy^{\star}}\fun{\history_2}^{-1}} \\
    \leq & \fun{\historydistribution_{\latentpolicy^{\star}}\fun{\history_1}^{-1}+ \historydistribution_{\latentpolicy^{\star}}\fun{\history_2}^{-1}} K \beliefloss{} + K \wassersteindist{\latentbelief}{\latentbelief_1}{\latentbelief_2} + \delta \fun{1 + 2 \historydistribution_{\latentpolicy^{\star}}\fun{\history_{\bot}}^{-1}} \\
    = & \fun{\historydistribution_{\latentpolicy^{\star}}\fun{\history_1}^{-1}+ \historydistribution_{\latentpolicy^{\star}}\fun{\history_2}^{-1}} K \beliefloss{} + K \wassersteindist{\latentbelief}{\latentbelief_1}{\latentbelief_2} + \epsilon
\end{align*}
Putting all together, we have that for any $\epsilon > 0$, there almost surely exists a constant $K \geq 0$ so that:
\begin{multline*}
    \abs{\values{\latentpolicy^{\star}}{}{\history_1} - \values{\latentpolicy^{\star}}{}{\history_2}} \leq
    K \wassersteindist{\latentdistance}{\latentbelief_1}{\latentbelief_2} + \epsilon + \\
    \frac{\localrewardloss{\historydistribution_{\latentpolicy^{\star}}} + \onpolicyrewardloss{\historydistribution_{\latentpolicy^{\star}}} + \fun{K + \discount \KV + \Rmax} \beliefloss{\historydistribution_{\latentpolicy^{\star}}} + \discount \KV \cdot \fun{ \localtransitionloss{\historydistribution_{\latentpolicy^{\star}}} +  \onpolicytransitionloss{\historydistribution_{\latentpolicy^{\star}}} + \observationloss{\historydistribution_{\latentpolicy^{\star}}}}}{1 - \discount} \fun{\historydistribution_{\latentpolicy^{\star}}\fun{\history_1}^{-1} + \historydistribution_{\latentpolicy^{\star}}\fun{\history_2}^{-1}}.
\end{multline*}
\end{proof}
\newpage
\section{Algorithm}\label{appendix:algorithm}
We describe the final WBU learning procedure in Algorithm~\ref{alg:wbu}. Note that they keyword \textbf{Update} means that we compute the gradients of the input loss, and update the parameters of the neural networks of the pointed function/model accordingly.

\smallparagraph{Normalizing term.}~Given the set of parameters $\iota$ of $\beliefencoder$, we minimize the KL divergence $\dklsymbol$ by gradient descent on the Monte-Carlo estimate of the divergence:
\begin{multline*}
    \gradient_{\iota} {\dkl{\beliefencoder\fun{ \latentbelief_t, \action_t, \observation_{t + 1}; \iota}}{\latentbeliefupdate\fun{ \latentbelief_t, \action_t, \observation_{t + 1}}}} = \\
    \gradient_{\iota}  \! \! \!\expectedsymbol{\latentstate_{t+1} \sim \beliefencoder\fun{\latentbelief_t, \action_t, \observation_{t + 1}; \iota}} \left[   \log{\beliefencoder\fun{\latentstate_{t+1} \mid \latentbelief_t, \action_t, \observation_{t + 1} ; \iota}} \vphantom{ \log \expectedsymbol{\latentstate }{}} - \log \! \expectedsymbol{\latentstate \sim \latentbelief_t} \!{\latentprobtransitions_{\decoderparameter} \fun{\latentstate_{t+1} \mid \latentstate, \action_t} - \log \latentobservationfn_{\decoderparameter}\fun{\observation_{t + 1} \mid \latentstate_{t + 1}} } \right]
\end{multline*}
Notice that the first term of the divergence (the belief normalization term) of Eq.~\ref{eq:dkl} does not depend on $\beliefencoder$ and thus yields zero gradient.  
Nevertheless, we observed during our experiments that adding the normalizing term allows to stabilize and reduce the variance of the belief loss.

\smallparagraph{Optimizing Wasserstein.}~%
To optimize the Wasserstein term of the belief losses, we follow the same learning procedure than \cite[Appenix A.5]{delgrange2023wasserstein}: we introduce neural networks $\mathcal{F}_{\spadesuit}$ (for $\spadesuit \in \set{\latentprobtransitions, \observations}$) that are trained to attain the supremum of the dual formulation of the Wasserstein distance.
To do so, we need enforce the Lipschitzness of  $\mathcal{F}_{\spadesuit}$ and, as in WAE-MDPs \citep{delgrange2023wasserstein}, we do so via the gradient penalty approach of \citetAR{DBLP:conf/nips/GulrajaniAADC17}, leveraging that any differentiable function is 1-Lipschitz iff it has gradients with norm at most $1$ everywhere.
Finally, notice that we do not directly optimize the total variation distance of $\observationloss{}$, but rather the Wasserstein; we take the usual Euclidean distance as metric over $\observations$ which is proven to be Lipschitz equivalent to a distance converging to the discrete metric as the temperature of the WAE-MDP goes to zero \cite[Appendix~A.6]{delgrange2023wasserstein} to finally recover $\dtvsymbol$.
\begin{algorithm}%
\caption{\textsc{Wasserstein Belief Updater}}\label{alg:wbu}
\DontPrintSemicolon
\SetKwFor{RepTimes}{repeat}{times}{end}
\KwIn{Batch sizes $B_{\textsc{WBU}}$, $B_{\textsc{WAE}}, B_{\textsc{Next}}$; global learning steps $N$; no. of model updates per iteration $N_{\textsc{Model}}$; your favorite collect strategy $\policy_{\text{init}}$; replay buffer (RB) $\replaybuffer$; Lipschitz networks $\mathcal{F}_{\scriptscriptstyle \latentprobtransitions} \colon \latentstates \to \R$, $\mathcal{F}_{\scriptscriptstyle \observations} \colon \observations \to \R$; observation variance network $\latentobservationfn_{\sigma}$; and loss weights $w_{\scriptscriptstyle\latentrewards}, w_{\scriptscriptstyle\latentprobtransitions}$}
\SetKwComment{Comment}{$\triangleright$\ }{}
\SetCommentSty{textit}
\SetKwBlock{Begin}{function}{end function}

\textbf{collect} $\theta = \set{\state_i, \observation_i, \action_i, \reward_i, \state'_i, \observation'_i}_{i = 1}^{N_{\text{init}}}$ by executing $\policy_{\text{init}}$ for $N_{\textit{init}}$ steps; $\,$ \textbf{store} $\theta$ in $\replaybuffer$ \; \Comment*[r]{Use the exploration policy $\policy_{\text{init}}$ to collect transitions and initialize the RB}
\RepTimes{$N$}{
		\RepTimes{$N_{\textsc{Model}}$}{
		\Comment{Update the WAE-MDP model for $N_{\textsc{Model}}$ consecutive training steps}
		\For{$i \leftarrow 1$ to $B_{\textsc{WAE}}$}{
			$\tuple{\state_i, \observation_i, \action_i, \reward_i, \state'_i, \observation'_i} \sim \replaybuffer$ \Comment*[r]{Sample a transition from the RB}
			$\latentstate' \leftarrow \embed_{\encoderparameter}\fun{\state'_i, \observation'_i}$ \Comment*[r]{Embed $\tuple{\state'_i, \observation'_i}$ to the latent space}
			$\tilde{\observation}'_i \sim \latentobservationfn_{\decoderparameter}\fun{\sampledot \mid \latentstate'}$\Comment*[r]{Observe the resulting latent state via $\latentobservationfn$}
		}
		$\mathcal{L}_{\textsc{WAE}} \leftarrow$ \textbf{compute the WAE-MDP loss} on  transition batch $\set{\state_i, \observation_i, \action_i, \reward_i, \state_i', \observation_i'}_{i=1}^{B_{\textsc{WAE}}}$\;
		\textbf{Update} the WAE-MDP components (\emph{in particular, those of Eq.~\ref{eq:wae-mdp-components}}) by minimizing $\mathcal{L}_{\textsc{WAE}}$\;
		$\mathcal{L}_{\mathcal{O}} \leftarrow \nicefrac{1}{B_{\textsc{WAE}}} \cdot \sum_{i = 1}^{B_{\textsc{WAE}}}{\left[\mathcal{F}_{\scriptscriptstyle \observations}\fun{\observation'_i} - \mathcal{F}_{\scriptscriptstyle \observations}\fun{\tilde{\observation}'_i}\right]}$ \Comment*[r]{Observation loss}
		\textbf{Update} $\mathcal{F}_{\scriptscriptstyle \observations}$ by maximizing  $\mathcal{L}_{\mathcal{O}}$ \textbf{and} enforcing its $1$-{Lipschizness} w.r.t. metric $\distance_{\observations}$\;
		\textbf{Update} $\latentobservationfn_{\sigma}$ by minimizing $\mathcal{L}_{\mathcal{O}}$\;
	}
    \For{$i \leftarrow 1$ to $B_{\textsc{WBU}}$}{
        $\state_0 \leftarrow \sinit$; $\latentstate_0 \leftarrow \zinit; \;
        \latentbelief_0 \leftarrow \diracimpulsesymbol_{\latentstate_0} ; \; \beta_0 \leftarrow \beta_I$ \Comment*[r]{$\beta_I$ is arbitrary, e.g., zeroes}
       \For{$t \leftarrow 0$ to $T$}{
            $\action_t \sim \latentpolicy\fun{\sampledot \mid \beta_t}$ \Comment*[r]{Produce the action $\action_t$ according to the sub-belief $\beta_t$}
            $\textbf{execute } \action $ in the environment, \textbf{receive reward} $\reward_t$, and \textbf{perceive} the next state-observation $\tuple{\state_{t + 1}, \observation_{t + 1}}$\;
            \textbf{store} the transition $\tuple{\state_t, \observation_t, \action_t, \reward_t, \state_{t + 1}, \observation_{t + 1}}$ into $\replaybuffer$\;
            $\beta_{t + 1} \leftarrow \beliefencoder^{\textit{sub}}\fun{\text{sg}\fun{\beta_{t}}, \action_t, \observation_{t + 1}}$ \Comment*[r]{Update the sub-belief; \textnormal{sg} is stop gradients}
            $\latentbelief_{t + 1} \leftarrow \mathbb{M}_{\encoderparameter}\fun{\beta_{t + 1}}$ \Comment*[r]{Retrieve the belief distribution $\belief_{t + 1}$ via the MAF $\mathbb{M}$}
            $\latentstate_{t + 1} \sim \latentbelief_{t + 1}$\Comment*[r]{\textbf{Believe} the next latent state}
            \Comment{Marginalize the next latent state distribution w.r.t. the current belief}
            \For{$j \leftarrow 1$ to $B_{\textsc{Next}}$}{
                $\latentstate \sim \latentbelief_t$; $\;$
$\mathcal{L}^{j}_{\log\latentprobtransitions} \leftarrow \left[\log \latentprobtransitions\fun{\latentstate_{t + 1} \mid \latentstate, \action_t} - \log B_{\textsc{Next}} \right]$} 
        $\mathcal{L}_{\textsc{KL}}^{i, t} \leftarrow \log \latentbelief_{t + 1}\fun{\latentstate_{t + 1}} - \textsc{LSE}(\set{\mathcal{L}^{j}_{\log\latentprobtransitions}}{}_{j = 1}^{B_{\textsc{Next}}}) - \log \latentobservationfn\fun{o_{t + 1} \mid \latentstate_{t + 1}}$ \; \Comment*[r]{Pointwise decomposition of Eq.~\ref{eq:dkl}: divergence with the belief update rule}
        $\mathcal{L}^{i, t}_{\latentrewards} \leftarrow \abs{ \latentrewards\fun{\embed\fun{\state_t, \observation_t}, \action_t} - \latentrewards\fun{\latentstate_t, \action_t} }$
        \Comment*[r]{Latent reward regularizer}
        $\latentstate' \sim \latentprobtransitions\fun{\sampledot \mid \latentstate_t, \action_t}$ \; \Comment*[r]{Transition to the next latent state from the current believed latent state}
         $\mathcal{L}^{i, t}_{\latentprobtransitions} \leftarrow {\left[ \mathcal{F}_{\scriptscriptstyle \latentprobtransitions}\fun{\embed\fun{\state_{t + 1}, \observation_{t + 1}}} - \mathcal{F}_{\scriptscriptstyle \latentprobtransitions}\fun{\latentstate'} \right]}$
        \Comment*[r]{Latent transition regularizer}
    } 
    }
 	\textbf{Update} $\mathcal{F}_{\scriptscriptstyle \latentprobtransitions}$ by maximizing $\sum_{i = 1}^{B_{\textsc{WBU}}}\sum_{t = 0}^{T - 1} \mathcal{L}^{i, t}_{\latentprobtransitions}$  \textbf{and} enforcing its $1$-Lipschitzness w.r.t. latent metric $\latentdistance$ \;
     \textbf{Update} $\beliefencoder^{\emph{sub}}$ and $\mathbb{M}$ by minimizing $\nicefrac{1}{\fun{B_{\textsc{WBU}} + T}} \sum_{i = 1}^{B_{\textsc{WBU}}}\sum_{t = 0}^{T - 1} (\mathcal{L}^{i, t}_{\textsc{KL}} + w_{\scriptscriptstyle \latentrewards} \cdot \mathcal{L}^{i, t}_{\latentrewards} + w_{\scriptscriptstyle \latentprobtransitions} \cdot \mathcal{L}^{i, t}_{\latentprobtransitions} )$ \;
	\textbf{Update} $\latentpolicy$ by minimizing the \textsc{A2C} loss on the batch $ \set{\beta^i_{\scriptscriptstyle 0: T}, \action^i_{\scriptscriptstyle0: T - 1}, \reward^i_{\scriptscriptstyle0: T - 1}}_{i = 1}^{B_{\textsc{WBU}}}$\;
}

\Begin($\latentobservationfn_{\decoderparameter}\fun{\sampledot \mid \latentstate}$ \hfill $\triangleright\ \fun{\textit{smooth}}$ \emph{Observation Filter}){
	$\mu \leftarrow \augmentedobservationfn\fun{\latentstate}; \sigma \leftarrow \latentobservationfn_{\sigma}\fun{\latentstate}$ \Comment*[r]{Decode $\latentstate$; get the standard deviation of the reconstruction}
	\Return{$\normal{\mu}{\sigma^2}{}$}
}
\end{algorithm}

\section{Hyperparameters} 
\label{appendix:hyper}

Table \ref{tab:hyperparameters} provides the range of hyperparameters used in the search, along with the selected values for each environment. The hyperparameter search was performed using \textsc{Optuna} \citepAR{optuna}. Pre-training of the WAE-MDP involved collecting $10240$ transitions with a random policy and performing $200$ training steps. These pre-training transitions are taken into account in the reported results.

Additionally, Table \ref{tab:rnnhyper} (for R-A2C) and Table \ref{tab:dvrlhyper} (for DVRL) present the specific hyperparameters used for each algorithm. A grid search was conducted over all possible combinations for both baselines. The hidden size of all neural networks was set to $128$ neurons and two hidden layers (except for the sub-belief encoder, and the policy which uses one) without further tuning. The experiments were carried out using $16$ parallel environments. The original implementation of DVRL and their version of R-A2C were used in this study.

We ran the experiments on a cluster composed of Intel Xeon Gold 6148 CPU.

In the following tables the enviromnents are represented as follows: \textsc{StatelessCartPole} $\heartsuit$, \textsc{NoisyStatelessCartPole} $\clubsuit$, \textsc{RepeatPrevious} $\diamondsuit$, \textsc{SpaceInvaders} $\spadesuit$, \textsc{NoisySpaceInvaders} $\bigstar$.

The t-SNE \citep{JMLR:v9:vandermaaten08a} presented in the main text, in Fig.~\ref{fig:tsne}, was obtained using cuML \citep{raschka2020machine}.
Table \ref{tab:tsne} reports the hyperparameters that were used.
We used 50,000 sub-belief-value pairs, sampled from a dataset of 1 million timesteps collected at the end of the training.

\begin{table}[h]
\caption{Range of hyperparameter search and selection per environment.}
\label{tab:hyperparameters}
\resizebox{\textwidth}{!}{%
\begin{tabular}{l|l|lllll}
               & \textbf{Range}                    & $\heartsuit$ & $\clubsuit$ & $\diamondsuit$ & $\spadesuit$ & $\bigstar$ \\ \hline
WAE Updates per Belief Update      & 1-2                                          & 1                                                             & 1                                                        & 2                                                     & 2                                                     & 2                                                           \\
Activation function                & leaky relu, elu                              & leaky relu                                                    & \multicolumn{1}{l|}{leaky relu}                          & elu                                                   & elu                                                   & relu                                                        \\
Activation function lipshitz       & leaky relu, tanh                             & tanh                                                          & tanh                                                     & tanh                                                  & tanh                                                  & leaky relu                                                  \\
Activation CNN                     & leaky relu, elu                              &                                                               &                                                          &                                                       & elu                                                   & elu                                                         \\ \hline
\textbf{Policy config}             &                                              &                                                               &                                                          &                                                       &                                                       &                                                             \\
Learning rate                      & 1.e-4, 3.e-4, 5.e-4                          & 3.e-4                                                         & 1.e-4                                                    & 1.e-4                                                 & 5.e-4                                                 & 5.e-4                                                       \\
Clip norm                          & 1, 10                                        & 10                                                            & 10                                                       & 10                                                    & 10                                                    & 10                                                          \\ \hline
\textbf{Belief config}             &                                              &                                                               &                                                          &                                                       &                                                       &                                                             \\
Same opt as policy                 & Yes, No                                      & Yes                                                           & Yes                                                      & Yes                                                   & No                                                    & No                                                          \\
Learning rate                      & 1.e-4, 3.e-4, 5.e-4                          &                                                               &                                                          &                                                       & 1.e-4                                                 & 3.e-4                                                       \\
Loss factor                        & 1.e-1, 1.e-2, 1.e-3, 1.e-4, 1.e-5            & 1.e-1                                                         & 1.e-4                                                    & 1.e-5                                                 &                                                       &                                                             \\
Filter variance min                & 1.e-2, 1.e-3                                 & 1.e-2                                                         & 1.e-2                                                    & 1.e-3                                                 & 1.e-2                                                 & 1.e-2                                                       \\
Normalize log obs filter           & True, False                                  & False                                                         & False                                                    & True                                                  & True                                                  & True                                                        \\
Sub belief prior temperature       & .5, 0.66, .75, .9, .99                       & 0.66                                                          & 0.9                                                      & 0.66                                                  & 0.99                                                  & 0.99                                                        \\
Reward loss scale factor           & 0.1, 1., 10., 20., 50., 100.                 & 0.1                                                           & 20                                                       & 100                                                   & 100                                                   & 100                                                         \\
Transition loss scale factor       & 0.1, 1., 10., 20., 50., 100.                 & 50                                                            & 100                                                     & 50                                                    & 50                                                    & 100                                                         \\
Buffer size                        & 4096, 8192, 16384                            & 4096                                                          & 4096                                                     & 16384                                                 & 4096                                                  & 4096                                                        \\
N Critic Update                    & 5, 10                                        & 10                                                            & 10                                                       & 5                                                     & 5                                                     & 5                                                           \\
N State samples                    & 16, 32, 64                                   & 32                                                            & 64                                                       & 32                                                    & 32                                                    & 32                                                          \\
N next state samples               & 16, 32, 64                                   & 32                                                            & 64                                                       & 64                                                    & 16                                                    & 32                                                          \\ \hline
\textbf{WAE config}                &                                              &                                                               &                                                          &                                                       &                                                       &                                                             \\
Latent state size                  & $\heartsuit, \clubsuit: 5\to10$, Others: $18\to25$        & 5                                                             & 6                                                        & 20                                                    & 18                                                    & 19                                                          \\
Minimizer learning rate            & 1.e-3, 3.e-4,1.e-4, 5.e-5, 1.e-5             & 5.e-5                                                         & 5.e-5                                                    & 3.e-4                                                 & 1.e-5                                                 & 1.e-5                                                       \\
Maximizer learning rate            & 1.e-3, 3.e-4,1.e-4, 5.e-5, 1.e-5             & 5.e-5                                                         & 5.e-5                                                    & 1.e-3                                                 & 3.e-4                                                 & 1.e-4                                                       \\
State encoder temperature          & 0.33, .5, 0.66, .75, .9, .99                 & 0.33                                                          & 0.75                                                     & 0.66                                                  & 0.5                                                   & 0.5                                                         \\
State prior temperature            & 0.33, .5, 0.66, .75, .9, .99                 & 0.75                                                          & 0.75                                                     & 0.5                                                   & 0.75                                                  & 0.75                                                        \\
Local transition loss scaling      & 10., 25., 50., 75., 80.                      & 50                                                            & 80                                                       & 80                                                    & 10                                                    & 10                                                          \\
Steady state scaling               & 10., 25., 50., 75., 80.                      & 80                                                            & 80                                                       & 100                                                   & 25                                                    & 25                                                          \\
N critic update                    & 5, 10                                        & 10                                                            & 10                                                       & 5                                                     & 5                                                     & 10                                                          \\
Batch size                         & 128, 256                                     & 128                                                           & 128                                                      & 256                                                   & 128                                                   & 128                                                         \\
Clip grad                          & 1, 10. 100.                                  & 100                                                           & 100                                                      & 100                                                   & 10                                                    & 10                                                          \\
State vs Obs reconstruction weight & 1, 2                                         & 2                                                             & 2                                                        & 2                                                     & 1                                                     & 1                                                           \\
Obs reg. min learning rate         & 1.e-3, 3.e-4,1.e-4, 5.e-5, 1.e-5             & 5.e-5                                                         & 1.e-4                                                    & 1.e-3                                                 & 5.e-5                                                 & 5.e-5                                                       \\
Obs reg. max learning rate         & 1.e-3, 3.e-4,1.e-4, 5.e-5, 1.e-5             & 5.e-5                                                         & 3.e-4                                                    & 5.e-5                                                 & 1.e-5                                                 & 3.e-4                                                       \\
Obs reg. gradient penalty          & 50, 100, 500, 1000                           & 50                                                            & 50                                                       & 1000                                                  & 100                                                   & 1000                                                       
\end{tabular}
}
\end{table}

\begin{table}[h!]
\caption{R-A2C hyperparameters}
\label{tab:rnnhyper}
\resizebox{\textwidth}{!}{%
\begin{tabular}{l|l|lllll}
               & \textbf{Range}                    & $\heartsuit$ & $\clubsuit$ & $\diamondsuit$ & $\spadesuit$ & $\bigstar$ \\ \hline 
Opitmizer      & Adam, RMSProp                     & RMSProp      & RMSProp     & Adam           & RMSProp      & RMSProp    \\
Clip grad norm & 0.5, 1., 10.                      & 1.0          & 10          & 10.            & 10           & 10         \\
Learning rate  & 3.e-5, 1.e-4, 3.e-4, 5.e-4, 1.e-3 & 5.e-4        & 1.e-4       & 5.e-4          & 1.e-3        & 1.e-3     
\end{tabular}
}
\end{table}

\begin{table}[h!]
\caption{DVRL hyperparameters}
\label{tab:dvrlhyper}
\resizebox{\textwidth}{!}{%
\begin{tabular}{l|l|lllll}
               & \textbf{Range}                    & $\heartsuit$ & $\clubsuit$ & $\diamondsuit$ & $\spadesuit$ & $\bigstar$ \\ \hline 
Optimizer      & Adam, RMSProp                     & Adam         & Adam        & RMSProp        & RMSProp      & RMSProp    \\
Clip grad norm & 0.5, 1., 10.                      & 10.0         & 1.0         & 1.0            & 10           & 0.5        \\
Learning rate  & 3.e-5, 1.e-4, 3.e-4, 5.e-4, 1.e-3 & 5.e-4        & 1.e-3       & 3.e-4          & 1.e-3        & 1.e-3      \\ \hline \\
Encoding loss factor & 1,.1,.5,.05                 & 0.5          & 1.0         & 1.0            & 1.0          & 0.05       \\
Number of particles  & 5,10,15                     & 10           & 3           & 10             & 5            & 5
\end{tabular}
}
\end{table}

\begin{table}[h!]
    \caption{t-SNE hyperparameters}
    \centering
    \begin{tabular}{c|c}
         Method& FFT\\
         Perplexity& 10\\
         N Neighbors& 50\\
         Metric& Manhattan\\
         Learning rate& 2400\\
         Early exaggeration& 50\\
         Late exaggeration& 3\\
         Exaggeration iter& 300\\
    \end{tabular}
    \label{tab:tsne}
\end{table}

\newpage

\bibliographyAR{references}
\bibliographystyleAR{iclr2024_conference}

\end{appendices}

\end{document}